\definecolor{bgblue}{RGB}{245,243,253}
\definecolor{ttblue}{RGB}{91,194,224}
\newtheorem{theorem}{Theorem}
\newtheorem{lemma}{Lemma}[theorem]
\newenvironment{assumptionp}[1]{

\assumptionalt
}{\endassumptionalt}
\definecolor{main}{HTML}{5989cf}    
\definecolor{sub}{HTML}{cde4ff}     
\title{TRAQ: Trustworthy Retrieval Augmented Question Answering \\ via Conformal Prediction}
\author{Shuo Li \\
  University of Pennsylvania\\
  \texttt{lishuo1@seas.upenn.edu} \\\And
  Sangdon Park \\
  Pohang University of Science and Technology \\
  \texttt{sangdon@postech.ac.kr} \\ 
  \AND
  Insup Lee \\
  University of Pennsylvania \\
  \texttt{lee@cis.upenn.edu} \\ \And
  Osbert Bastani \\
  University of Pennsylvania \\
  \texttt{obastani@seas.upenn.edu}
  }
\begin{document}

\maketitle
\begin{abstract}
When applied to open-domain question answering, large language models (LLMs) frequently generate incorrect responses based on made-up facts, which are called \textit{hallucinations}. Retrieval augmented generation (RAG) is a promising strategy to avoid hallucinations, but it does not provide guarantees on its correctness. To address this challenge, we propose the Trustworthy Retrieval Augmented Question Answering, or \textit{TRAQ}, which provides the first end-to-end statistical correctness guarantee for RAG. TRAQ uses conformal prediction, a statistical technique for constructing prediction sets that are guaranteed to contain the semantically correct response with high probability. Additionally, TRAQ leverages Bayesian optimization to minimize the size of the constructed sets. In an extensive experimental evaluation, we demonstrate that TRAQ provides the desired correctness guarantee while reducing prediction set size by 16.2\% on average compared to an ablation. The implementation is available: \href{https://github.com/shuoli90/TRAQ.git}{https://github.com/shuoli90/TRAQ}. 
\end{abstract}

\section{Introduction}

Large Language Models (LLMs) have achieved State-Of-The-Art (SOTA) results on many question answering (QA) tasks~\cite{openai2023gpt4, touvron2023llama, touvron2023llama2}. However, in open-domain QA tasks where candidate answers are not provided, LLMs have also been shown to confidently generate incorrect responses, called \textit{hallucinations}~\cite{ouyang2022training, kuhn2023semantic}. Hallucinations have already led to real-world consequences when end users rely on the correctness of the generated text. As a consequence, there is an urgent need for techniques to reduce hallucinations.

\begin{figure}[t]
\centering
\includegraphics[width=0.5\textwidth]{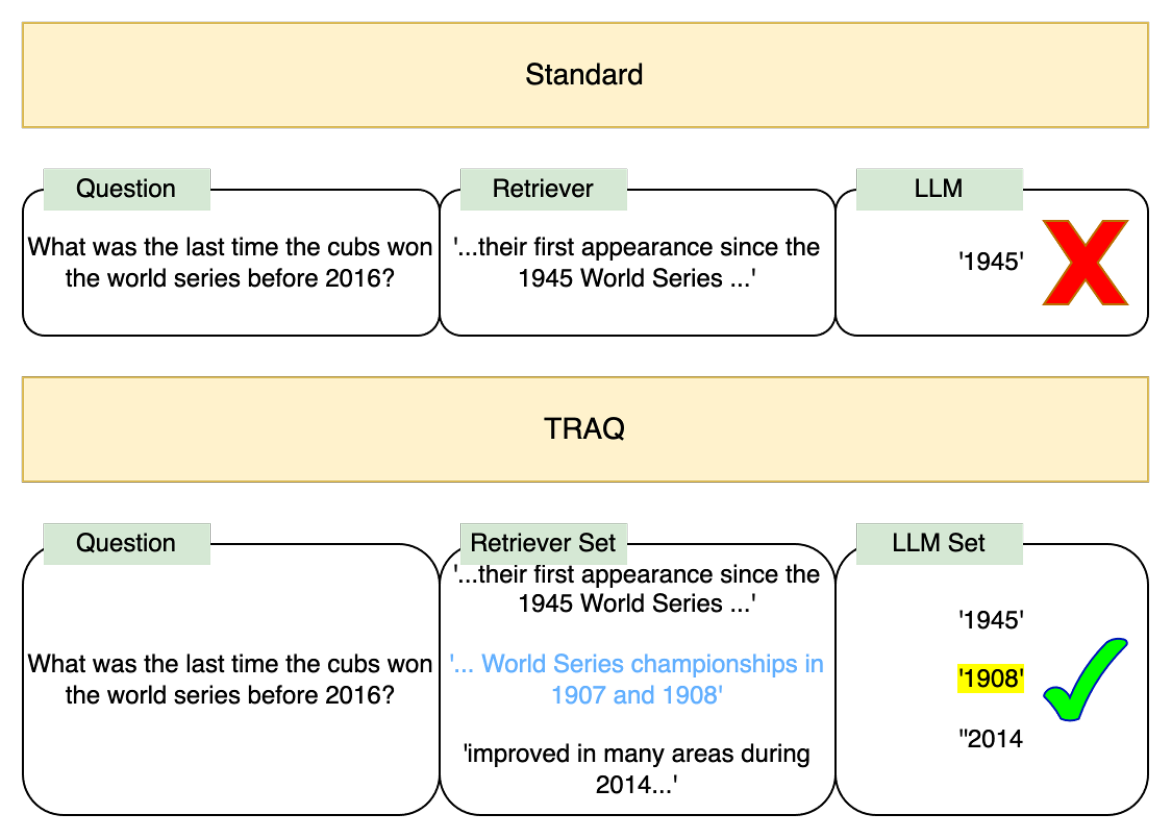}
\caption{Comparison of the standard RAG pipeline with TRAQ on a practical illustration reveals a significant difference. With the standard retrieval augmented generation (RAG) approach, there is a possibility that the retrieved passage may lack relevance in addressing the given question. On the contrary, TRAQ leverages conformal prediction to ensure that the retrieved set includes the relevant passage with a high probability and that the LLM set contains a semantically correct answer with a high probability. Through the aggregation of these prediction sets, TRAQ provides a guarantee that a semantically correct answer is contained in its set of answers with a high probability.
}
\label{fig:demo}
\end{figure}

We propose a novel framework, \emph{Trustworthy Retrieval Augmented Question Answering (TRAQ)}, summarized in Figure~\ref{fig:demo}, that combines Retrieval Augmented Generation (RAG)~\cite{guu2020retrieval,lewis2021retrievalaugmented} with conformal prediction~\cite{cp,shafer2007tutorial, park2020pac,angelopoulos2022gentle} to provide theoretical guarantees on question answering performance.

RAG reduces hallucinations by retrieving passages from a knowledge base such as Wikipedia and then using an LLM to answer the question. If the retrieved passages are relevant to the question, the LLM can use this information to generate correct answers. However, RAG can fail for two reasons: either the retrieved passage is not relevant to the question, or the LLM generates the incorrect answer despite being given a relevant passage.


To avoid these issues, TRAQ uses conformal prediction, an uncertainty quantification technique that modifies the underlying model to predict sets of outputs rather than a single output. These \emph{prediction sets} are guaranteed to contain the true output at a user-specified rate, e.g., at least 90\% of the time. In particular, TRAQ applies conformal prediction separately to the retrieval model (to obtain sets of retrieved passages guaranteed to contain the relevant passage with high probability) and the generator (to obtain sets of answers that contain the true answer with high probability, assuming the relevant passage is given). Then, TRAQ aggregates the two sets for the RAG task, as demonstrated in Figure~\ref{fig:aggregation}. By a union bound, retriever sets contain relevant passages, and generator prediction sets contain true answers with high probability, establishing that the aggregated set by TRAQ contains the ground truth answer with high probability.

A major challenge to this basic pipeline is that there may be many different ways of expressing the correct answer in natural language. For example, the responses \textit{deep learning is a subset of machine learning} and \textit{machine learning is a superset of deep learning} are different ways of expressing the same meaning~\cite{kuhn2023semantic, semanticclustering}. This diversity of possible responses also makes prediction probabilities less reliable since if an answer can be expressed in many different but equivalent ways, then the probabilities may be divided across these different responses, making them all smaller even if the model is confident it knows the correct answer.

TRAQ addresses this challenge by modifying the notion of ground-truth coverage in conformal prediction to focus on semantic notions of uncertainty. In particular, TRAQ aggregates semantically equivalent answers across a large number of samples from the LLM and uses the number of clusters of non-equivalent answers as a measure of uncertainty. This measure is used as a nonconformity measure to construct prediction sets. Finally, the prediction sets are over clusters of equivalent answers rather than individual answers. This strategy also enables TRAQ to work on black-box APIs such as \textit{GPT-3.5-Turbo}, where the predicted probabilities for individual tokens are not available.

A second challenge is that the prediction sets can become very large since we are aggregating uncertainty across multiple components. This complexity introduces hyperparameters into TRAQ; while TRAQ guarantees correctness regardless of the choice of these hyperparameters, they can affect the performance of TRAQ in terms of the average prediction set size. To address this challenge, TRAQ uses Bayesian optimization to minimize the average size of the prediction sets it generates.






We evaluate TRAQ in conjunction with several generative LLMs, including both GPT-3.5-Turbo-0613~\cite{ouyang2022training} and Llama-2-7B~\cite{touvron2023llama2}; and on four datasets, including a biomedical question answering dataset.
Our experiments demonstrate that TRAQ empirically satisfies the coverage guarantee (i.e. the prediction sets outputs contain semantically correct answers with the desired probability), while reducing the average prediction set size compared to an ablation by 16.2\%. Thus, TRAQ is an effective strategy for avoiding hallucinations in applications of LLMs to open domain question answering.

\paragraph{Contributions.}

We offer the first conformal prediction guarantees for retrieval augmented generation (RAG) targeted question answering. Our framework, TRAQ, introduces a novel nonconformity measure that estimates the uncertainty for each semantically distinct meaning and obtains a coverage guarantee at the semantic level. 
Furthermore, TRAQ leverages Bayesian optimization to minimize the average size of the generated prediction sets. Finally, our experiments demonstrate that TRAQ is effective at avoiding hallucinations in open-domain question answering.

\section{Background}

\paragraph{Retrieval for Open-Domain QA.}
A two-stage approach is often used for open-domain question answering (QA): first, a \textit{retriever} is used to obtain informative passages; and second, a \textit{generator} produces answers based on the retrieved passages.
A popular choice for the retriever is the Dense Passage Retriever (DPR)~\cite{karpukhin2020dense}, which measures similarity by taking the inner product of the BERT~\cite{devlin2019bert, reimers2019sentencebert} embeddings of the question and passage ~\cite{devlin2019bert, reimers2019sentencebert}. Other works~\cite{Lin2022ADR, Salemi2023ASD, Lin2022AggretrieverAS, Zhang2021AdversarialRF} have improved the performance of DPR and extended it to more diverse settings. Retrieval Augmented Generation (RAG)~\cite{lewis2021retrievalaugmented} proposes to jointly fine-tune the retriever and the generator for QA tasks.

\paragraph{Conformal Prediction.}
Conformal prediction~\cite{cp, papadopoulos2008inductive} is a general distribution-free approach to quantifying uncertainty for machine learning (ML) models. Let $\mathcal{X}$ be the input space, and $\mathcal{Y}$ be the output space. Conformal prediction first assumes that a \emph{nonconformity measure} (e.g., negative probabilities predicted by an ML model) $s: \mathcal{X}\times \mathcal{Y} \rightarrow \mathbb{R}$ is given. Lower values of $s(x, y)$ indicate better agreement between $x$ and $y$. Given a held-out calibration set $B = \{(x_i, y_i)\}_{i=1}^N$ sampled i.i.d. from the data distribution $\mathcal{D}$, as well as a user-specified error level $\alpha$, conformal prediction constructs a prediction set for a testing data point $X_{\text{test}}$ by 
\begin{equation} 
C(X_{\text{test}}) = \{y \in \mathcal{Y} \mid s(X_{\text{test}}, y) \leq \tau\}, 
\label{eq:cp}
\end{equation}
where $\tau$ is the $\frac{\lceil(1-\alpha)(N+1)\rceil}{N}$-th smallest score in $\{s(x_i, y_i)\}_{i=1}^N$. Conformal prediction guarantees that the true labels are contained in the constructed prediction sets with probability at least $1-\alpha$:
\begin{theorem}{Conformal Prediction Guarantee~\cite{angelopoulos2022gentle, shafer2007tutorial, cp}.} Suppose that $\{(x_i, y_i)\}_{i=1}^N$ and $(X_{\text{test}},Y_{\text{test}})$ are i.i.d. from $\mathcal{D}$, and $C(X_{\text{test}})$ is constructed by \eqref{eq:cp}; then, we have the following.
\begin{equation} 
\Pr_{(X_{\text{test}},Y_{\text{test}}) \sim \mathcal{D}}(Y_{\text{test}} \in C(X_{\text{test}})) \geq 1-\alpha. 
\end{equation} 
\label{th:cp} \end{theorem}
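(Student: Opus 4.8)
The plan is to reduce the set-membership event in the conclusion to a scalar statement about the \emph{rank} of the test nonconformity score within the pooled collection of calibration and test scores, and then to exploit exchangeability to control that rank. First I would abbreviate the scores by writing $s_i := s(x_i, y_i)$ for the calibration points and $s_{N+1} := s(X_{\text{test}}, Y_{\text{test}})$ for the test point. Reading $\tau$ as the $k$-th smallest of $s_1, \dots, s_N$ with $k := \lceil (1-\alpha)(N+1) \rceil$, the crucial elementary observation is that, directly from the definition \eqref{eq:cp}, $Y_{\text{test}} \in C(X_{\text{test}})$ holds if and only if $s_{N+1} \le \tau$. This converts the quantity we must lower bound into a comparison between $s_{N+1}$ and an order statistic of the calibration scores.

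Next I would record that, because $(x_1,y_1), \dots, (x_N,y_N), (X_{\text{test}}, Y_{\text{test}})$ are i.i.d.\ from $\mathcal{D}$, they are in particular exchangeable; and since each score is a fixed measurable function of its data point, the pooled scores $s_1, \dots, s_{N+1}$ are exchangeable as well. The key consequence I would extract is a distributional fact about the rank $R := \#\{\, i \in \{1,\dots,N+1\} : s_i \le s_{N+1} \,\}$ of the test score in the pool: by symmetry under permutations, $s_{N+1}$ is equally likely to occupy any position in the sorted pool, so $R$ is uniformly distributed on $\{1, \dots, N+1\}$. Assuming for the moment that ties occur with probability zero (e.g.\ for continuous score distributions) this is exact; otherwise one breaks ties with independent randomization, which only helps the inequality.

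The final step is to connect the coverage event to $R$ and compute. I would show that $s_{N+1} \le \tau$ holds exactly when at most $k-1$ of the calibration scores fall strictly below $s_{N+1}$, which in turn is equivalent to $R \le k$. Combining this equivalence with the uniformity of $R$ gives
\begin{equation}
\Pr(Y_{\text{test}} \in C(X_{\text{test}})) = \Pr(s_{N+1} \le \tau) = \Pr(R \le k) = \frac{k}{N+1}.
\end{equation}
Since $k = \lceil (1-\alpha)(N+1) \rceil \ge (1-\alpha)(N+1)$, the right-hand side is at least $1-\alpha$, which is the claimed bound.

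I expect the main obstacle to be the careful bookkeeping in the rank argument rather than any deep idea: establishing the equivalence $s_{N+1}\le\tau \iff R\le k$ precisely requires attention to strict versus non-strict inequalities in the definition of $\tau$ as the $k$-th order statistic, and to the effect of ties. The continuous case is immediate; the general case must be handled either by randomized tie-breaking or by verifying that ties can only enlarge the prediction set and hence preserve the inequality. Once that accounting is pinned down, the uniformity of $R$ under exchangeability does all the real work.
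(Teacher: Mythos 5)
The paper does not prove this statement: Theorem~\ref{th:cp} is quoted as a background result with citations to the conformal prediction literature, and no proof appears anywhere in the paper. Your argument is the standard one from those cited sources (reduce coverage to the rank of the test score in the pooled exchangeable scores, use uniformity of the rank, and bound $k/(N+1) \geq 1-\alpha$), and it is correct, including your flagging of the two genuine technical points --- the strict-versus-non-strict bookkeeping in the equivalence $s_{N+1} \le \tau \iff R \le k$, and the fact that the uniformity of $R$ requires either atomless scores or randomized tie-breaking (with your raw definition of $R$, ties can make $\Pr(R \le k)$ drop below $k/(N+1)$ even though coverage itself is preserved, so the tie-breaking step is not optional). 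One minor edge case worth a sentence in a full write-up: when $\lceil (1-\alpha)(N+1) \rceil > N$ the $k$-th order statistic of the calibration scores does not exist, and the convention $\tau = \infty$ makes the set trivially cover.
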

We call this guarantee a \emph{coverage guarantee}. An extension of conformal prediction is \textit{Probably Approximately Correct prediction sets}~\cite{park2019pac} (PAC prediction set) or \textit{training-conditional conformal prediction}~\cite{vovk2012conditional}. Compared with vanilla conformal prediction, where the coverage guarantee holds on average, PAC prediction sets guarantee that coverage is satisfied with high confidence given the current calibration set:
\begin{theorem}{PAC Guarantee~\citep{park2019pac, vovk2012conditional}.} Suppose $\{(x_i, y_i)\}_{i=1}^N$ and $(X_{\text{test}},Y_{\text{test}})$ are sampled i.i.d. from $\mathcal{D}$, given user-specified error and confidence levels $\alpha$ and $\delta$, and $C(X_{\text{test}})$ is constructed via \eqref{eqn:algorithm} in the Appendix; then, we have
\begin{equation*} 
\Pr_{B \sim \mathcal{D}^n} [\Pr_{(X,Y) \sim \mathcal{D}}(Y_{\text{test}} \in C(X_{\text{test}})) \geq 1-\alpha] \geq 1-\delta. \end{equation*}
\label{eq:pac}
\end{theorem}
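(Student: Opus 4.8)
The plan is to reduce this two-sided, data-dependent statement to a clean statement about a single order statistic of the calibration scores, and then control its distribution exactly. Throughout I write $s_i := s(x_i, y_i)$ for the calibration nonconformity scores and parametrize the prediction set by a scalar threshold, $C_\tau(x) = \{y \in \mathcal{Y} : s(x,y) \le \tau\}$, so that the true miscoverage $L(\tau) := \Pr_{(X,Y)\sim\mathcal{D}}(Y \notin C_\tau(X)) = \Pr(s(X,Y) > \tau)$ is non-increasing in $\tau$. The algorithm of \eqref{eqn:algorithm} selects $\hat\tau$ as the smallest threshold admitting at most $k$ calibration ``errors'' $\{i : s_i > \tau\}$, where the integer budget $k = k(\alpha,\delta,N)$ is fixed in advance; equivalently $\hat\tau$ is the $(N-k)$-th smallest score $s_{(N-k)}$. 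The goal is to show that, for the right choice of $k$, the bad event $\{L(\hat\tau) > \alpha\}$ has probability at most $\delta$ over the draw of the calibration set $B$.

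First I would apply the probability integral transform. Assuming the score distribution has continuous CDF $F$ (ties handled separately, or via the $\lceil\cdot\rceil$ convention), the variables $U_i := F(s_i)$ are i.i.d.\ $\mathrm{Unif}[0,1]$, and because $F$ is monotone, $F(\hat\tau) = F(s_{(N-k)}) = U_{(N-k)}$, the $(N-k)$-th order statistic of the $U_i$. Since $L(\hat\tau) = 1 - F(\hat\tau) = 1 - U_{(N-k)}$ and the $j$-th uniform order statistic is $\mathrm{Beta}(j, N-j+1)$, I obtain the exact law $L(\hat\tau) \sim \mathrm{Beta}(k+1, N-k)$. This is the key step: it decouples ``which threshold was chosen'' from ``what its true error is,'' replacing the awkward data-dependence with a single explicit distribution.

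It then remains to choose $k$ so that $\Pr(L(\hat\tau) > \alpha) \le \delta$. Using the standard Beta--Binomial identity, the Beta tail equals a binomial tail,
\[
\Pr\!\big(\mathrm{Beta}(k+1, N-k) > \alpha\big) = \sum_{j=0}^{k} \binom{N}{j} \alpha^j (1-\alpha)^{N-j} = \Pr\!\big(\mathrm{Binomial}(N,\alpha) \le k\big),
\]
so it suffices to take $k$ to be the largest integer for which this sum is at most $\delta$ (the largest $k$ gives the smallest valid prediction set, which is what the Bayesian-optimization layer later exploits). With this choice $\Pr_{B\sim\mathcal{D}^n}(L(\hat\tau) > \alpha) \le \delta$, which is exactly the complementary form of the claimed inequality $\Pr_{B}[\,\Pr_{(X,Y)}(Y_{\text{test}} \in C(X_{\text{test}})) \ge 1-\alpha\,] \ge 1-\delta$.

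The main obstacle I anticipate is not any single inequality but the rigorous handling of the data-dependent threshold; the entire argument hinges on recognizing that $\hat\tau$ is an order statistic and that its induced miscoverage is therefore exactly Beta-distributed. A secondary technical point is ties in the scores, which break the clean uniform-order-statistic correspondence; I would address this by assuming a continuous $F$ (generic for score functions with a continuous component) or by breaking ties with independent randomization, both of which preserve the binomial tail bound.
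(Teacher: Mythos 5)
Your proof is correct, but note that the paper itself never proves this statement: Theorem~2 is imported from \citet{park2019pac} and \citet{vovk2012conditional} and restated in the appendix (as Theorem~\ref{thm:pred_set}) with a citation in place of a proof, so there is no in-paper argument to compare against. What you have reconstructed is essentially the standard proof from those references. Your reduction is sound: after translating between the paper's convention ($C_\tau(x)=\{y: g(x,y)\ge\tau\}$, maximize $\tau$) and yours ($s\le\tau$, minimize $\tau$) via $g=-s$, the selected threshold is indeed the order statistic $s_{(N-k)}$, the probability integral transform gives $L(\hat\tau)\sim\mathrm{Beta}(k+1,N-k)$, and the Beta--Binomial identity turns the tail condition into exactly the constraint $F(k;N,\alpha)\le\delta$ defining $k^*$ in \eqref{eqn:algorithm}. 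The one point worth tightening is the continuity assumption you flag: rather than randomized tie-breaking, the cleaner fix (and the one used in the cited works) avoids the exact Beta law entirely. Define the fixed population quantile $\tau_\alpha=\inf\{\tau: \Pr(s(X,Y)>\tau)\le\alpha\}$; the bad event $\{L(\hat\tau)>\alpha\}$ forces $\hat\tau<\tau_\alpha$, hence at most $k^*$ calibration scores fall in $\{s\ge\tau_\alpha\}$, an event of probability $\Pr(\mathrm{Binomial}(N,p)\le k^*)$ with $p=\Pr(s\ge\tau_\alpha)\ge\alpha$, which is at most $F(k^*;N,\alpha)\le\delta$ by monotonicity of the binomial CDF in $p$. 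This gives the same bound with no assumption on $F$, only one-sided rather than exact control. Your parenthetical tying the maximality of $k$ to the Bayesian-optimization layer is a slight misattribution --- that layer tunes the split of $\alpha$ between $\alpha_{\rm Ret}$ and $\alpha_{\rm LLM}$, not $k$ --- but this does not affect the proof.
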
 
\noindent
Further details on conformal prediction and PAC prediction sets are in Appendices~\ref{sec:cp} \&~\ref{sec:pac}, respectively; a brief comparison between the two is given in Appendix~\ref{sec:cp_pac}. Both vanilla conformal prediction and PAC prediction sets have been applied to deep learning~\cite{park2019pac,https://doi.org/10.48550/arxiv.2009.14193,https://doi.org/10.48550/arxiv.2101.02703}.

\paragraph{Uncertainty Quantification for LLMs.}

Uncertainty quantification for Large Language Models (LLMs) has been gaining attention due to LLM hallucinations. A recent study~\cite{kuhn2023semantic} combined confidence calibration with Natural Language Inference model to measure the certainty of LLMs in responding to an input question. However, this work does not guarantee the accuracy of the responses. Other studies have applied conformal prediction to LLM predictions, focusing mainly on the multiple choice question answering problem and using vanilla conformal prediction to ensure correctness~\cite{kumar2023conformal, ren2023robots}. However, these methods necessitate a finite set of labels, such as \{\textit{True}, \textit{False}\} or \{\textit{A}, \textit{B}, \textit{C}\}, and cannot be used for open-domain question answering. A related work concurrent with ours is \citet{quach2023conformal}, which applies conformal prediction to open-domain QA.
However, they only consider the generator, whereas our approach provides conformal guarantees for RAG. Furthermore, their approach requires the generation probability from the LLM, which is not available in many blackbox APIs.

\section{The TRAQ Framework}

\begin{figure*}
\centering
\begin{subfigure}{0.43\textwidth}
\includegraphics[width=1.0\textwidth]{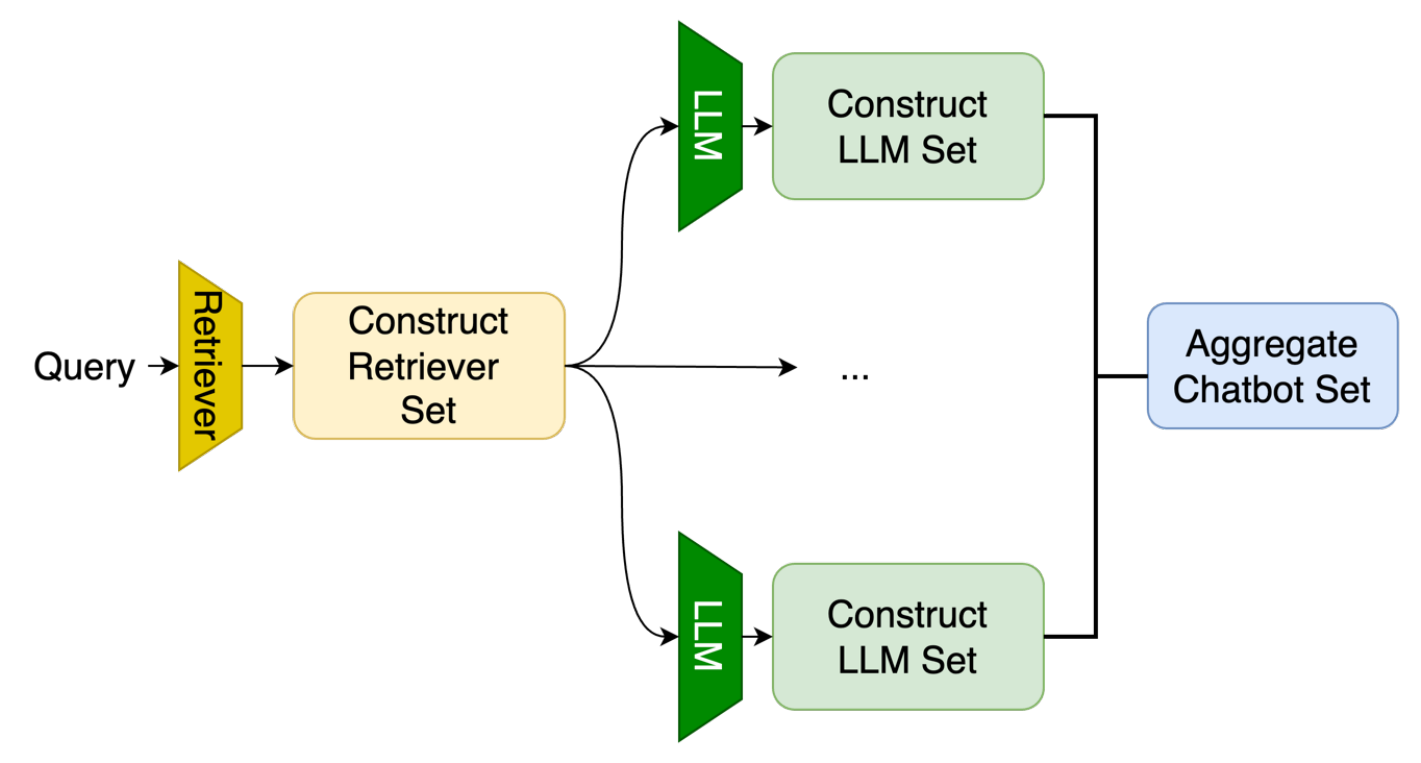}
\caption{Prediction Set Construction}
\label{fig:aggregation}
\end{subfigure}
~
\begin{subfigure}{0.55\textwidth}
\includegraphics[width=1.0\textwidth]{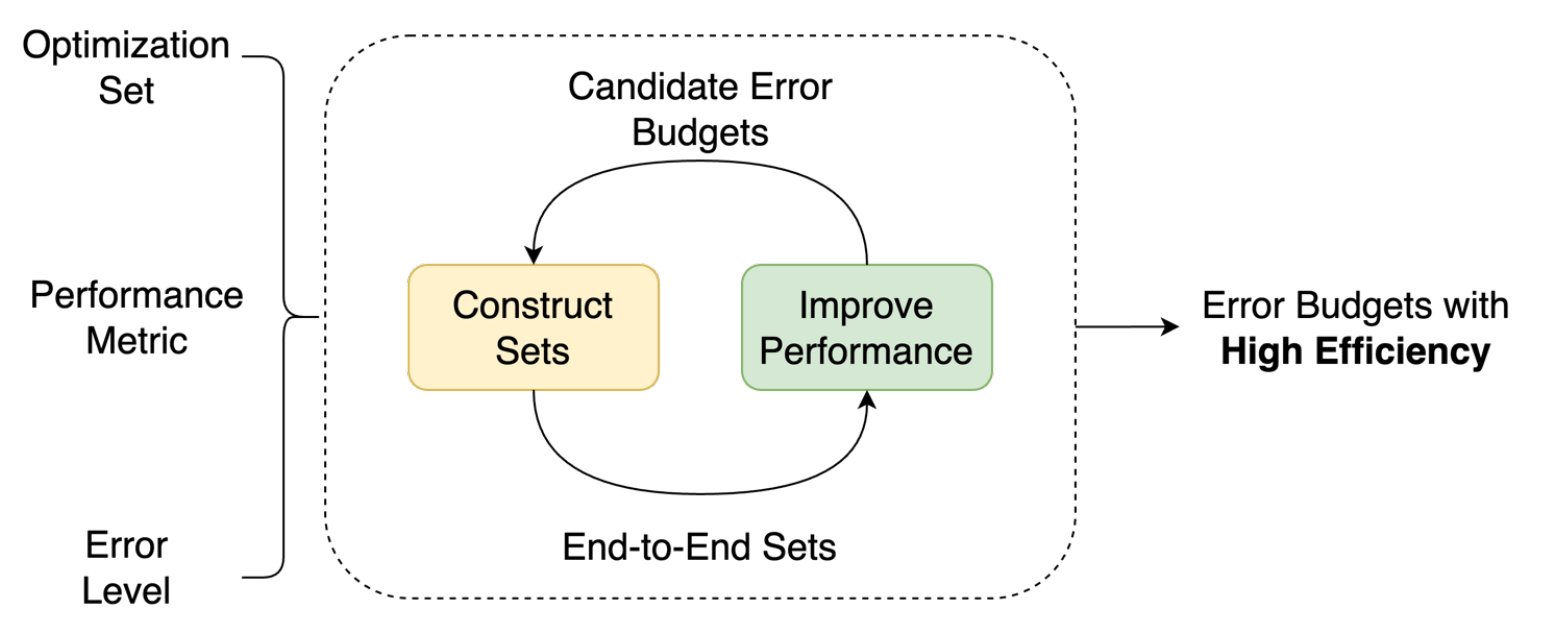}
\caption{Performance Improvement}
\label{fig:optimization}
\end{subfigure}

\caption{Given a question, TRAQ first constructs the retriever prediction; then, for every \textit{(question, contained passage)} pair, TRAQ constructs a LLM prediction on the LLM generated responses. Finally, the LLM prediction sets are aggregated as the final output. In Figure~\ref{fig:optimization}, TRAQ takes candidate error budgets from Bayesian optimization; it then constructs aggregated prediction sets on the optimization set. Next, the average semantic counts in constructed sets are computed to update the Gaussian process model in Bayesian optimization. }
\label{fig:diagrams}
\end{figure*}

TRAQ is composed of two steps. The first is the \textit{Prediction Set Construction} step, where a question $q$ is used to create a \textit{retrieval set} $C_{\text{Ret}}(q)$ for the retriever and a \textit{ LLM set} $C_{\text{LLM}}(q, p)$ for each pair $(\text{question}~q, \text{passage}~p)$. These sets are aggregated into an \textit{Aggregation Set} $C_{\text{Agg}}(q)$. The second step is the \textit{Performance Improvement} step, where promising error budgets $\alpha_{\text{Ret}}$ and $\alpha_{\text{LLM}}$ are sampled from a Bayesian model. Using these budgets, the prediction sets are constructed on the optimization set and evaluated for their performance. This process is repeated $T$ times, and the final output is the error budgets $\alpha_{\text{Ret}}$ and $\alpha_{\text{LLM}}$ with the highest performance. The chosen hyperparameters are used to construct prediction sets as in the first step using a separate held-out calibration set. The TRAQ framework is summarized in Figure~\ref{fig:diagrams}.

\subsection{Assumptions}

To construct provable prediction sets, we first make three necessary assumptions:
\begin{assumptionp}{I.I.D}
For both the retrieval and LLM tasks, the examples are drawn independently and identically from the data distribution $\mathcal{D}$.
\label{as:iid}
\end{assumptionp}

\begin{assumptionp}{\textit{Retriever Correctness}}
Given a question $q$, the underlying retriever is able to retrieve the most relevant passage $p^*$ within the top-$K$ retrieved passages.
\label{as:ret}
\end{assumptionp}

\begin{assumptionp} {\textit{LLM Correctness}}
Given a question $q$ and its most relevant passage $p^*$, the LLM is able to generate a semantically correct response within the top-$M$ samples.
\label{as:chat}
\end{assumptionp}

Assumption~\ref{as:iid} is a standard assumption from the conformal prediction literature and is needed to apply conformal prediction algorithms (it can be slightly relaxed to exchangeable distributions, but we make the i.i.d. assumption for simplicity).

Assumptions~\ref{as:ret} and~\ref{as:chat} are needed to ensure that the most relevant passages and semantically correct answers can be contained in the prediction sets if the prediction sets are sufficiently large. In principle, we can use very large values of $K$ and $M$ to satisfy this assumption, though there are computational and cost limitations in practice. We discuss ways to remove these assumptions in ~\nameref{sec:limitation}.

\subsection{Prediction Set Construction}
\label{sec:cp_sets}

\paragraph{Retriever Set:} To construct the retriever sets $C_{\text{Ret}}$, we use the negative inner product between the question $q$ and the annotated most relevant passage $p^*$, denoted as $-R_{q, p^*}$, as the nonconformity measures (NCMs). Given $N$ such NCMs $\{s_1, \ldots, s_N\}$ in the calibration set and the error budget $\alpha_{\text{Ret}}$ for the retriever set, we construct the retriever set by
\begin{equation}
C_{\text{Ret}}(q) = \{p \mid -R_{q, p} \leq \tau_{\text{Ret}}\},
\label{eq:set}
\end{equation} where $$\tau_{\text{Ret}} = \operatorname{Quantile}\left(\{s_n\}_{n=1}^N; \frac{\lceil(N+1)(1-\alpha_{\text{Ret}})\rceil}{N}\right).$$
\noindent
Given this construction and Assumptions~\ref{as:iid} and Assumption~\ref{as:ret}, the retriever sets are guaranteed to contain the most relevant passage with probability at least $1-\alpha_{\text{Ret}}$:
\begin{lemma}
Suppose the questions $q$ and their corresponding most relevant passage $p^*$ are sampled from the distribution $\mathcal{D_{\text{passage}}}$. Given the error budget $\alpha_{\text{Ret}}$, the retriever sets satisfy
$$\Pr_{(q, p^*) \sim \mathcal{D_{\text{Passage}}}} (p^* \in C_{\text{Ret}}(q)) \geq 1-\alpha_{\text{Ret}}.$$
\label{co:ret}
\end{lemma}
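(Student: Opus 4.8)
The plan is to recognize this lemma as a direct instance of the conformal prediction guarantee in Theorem~\ref{th:cp}, specialized to the retriever component with its particular nonconformity measure. First I would set up the correspondence between the generic conformal framework and the retrieval task: take the input space to be the space of questions, the output space to be the space of passages, the true label associated with a question $q$ to be its most relevant passage $p^*$, and the nonconformity measure to be $s(q, p) = -R_{q, p}$. Under this identification, the calibration scores $\{s_n\}_{n=1}^N = \{-R_{q_n, p_n^*}\}_{n=1}^N$ and the test score $-R_{q, p^*}$ are exactly the objects appearing in the statement of Theorem~\ref{th:cp}.

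Next I would verify that the hypotheses of Theorem~\ref{th:cp} are met. By Assumption~\ref{as:iid}, the calibration pairs $(q_n, p_n^*)$ and the test pair $(q, p^*)$ are drawn i.i.d.\ from $\mathcal{D}_{\text{Passage}}$, so the induced nonconformity scores are likewise i.i.d., which is precisely the exchangeability requirement the coverage guarantee relies on.

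The key step is to confirm that the threshold and set construction match those in the generic theorem. The prediction set \eqref{eq:set} is $C_{\text{Ret}}(q) = \{p \mid -R_{q,p} \leq \tau_{\text{Ret}}\}$, where $\tau_{\text{Ret}}$ is the $\frac{\lceil(N+1)(1-\alpha_{\text{Ret}})\rceil}{N}$-th empirical quantile of the calibration scores. This coincides exactly with the construction in \eqref{eq:cp} upon taking $\alpha = \alpha_{\text{Ret}}$, since the quantile level $\frac{\lceil(1-\alpha)(N+1)\rceil}{N}$ there is the same as the one used here. I would then observe that, by definition of the set, the membership event $p^* \in C_{\text{Ret}}(q)$ is equivalent to the inequality $-R_{q,p^*} \leq \tau_{\text{Ret}}$ on the test score. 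Applying Theorem~\ref{th:cp} directly then gives $\Pr_{(q,p^*)\sim\mathcal{D}_{\text{Passage}}}(p^* \in C_{\text{Ret}}(q)) = \Pr(-R_{q,p^*} \leq \tau_{\text{Ret}}) \geq 1 - \alpha_{\text{Ret}}$, which is the claim.

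I expect no substantive obstacle here: the lemma is essentially a restatement of Theorem~\ref{th:cp} for the retrieval stage, and the only care needed is confirming the exact alignment of the quantile levels and the nonconformity measure. I would also note that Assumption~\ref{as:ret} (Retriever Correctness) is not in fact invoked in establishing this particular coverage bound; it instead guarantees that $p^*$ is attainable within the top-$K$ candidates so that the constructed set is practically meaningful rather than forced to be vacuous. The coverage guarantee itself flows purely from the i.i.d.\ assumption and the quantile construction.
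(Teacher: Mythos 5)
Your proposal is correct and takes essentially the same route as the paper: both proofs are a direct instantiation of the generic conformal guarantee (Theorem~\ref{th:cp}) with inputs $=$ questions, labels $=$ passages, nonconformity score $s(q,p) = -R_{q,p}$, and the i.i.d.\ hypothesis supplied by Assumption~\ref{as:iid}. The one place you diverge is your closing remark that Assumption~\ref{as:ret} is not invoked: the paper's proof does use it, because the set actually implemented ranges only over the top-$K$ retrieved passages, so coverage of $p^*$ additionally requires that $p^*$ appear among those candidates --- your reading is valid only for the idealized set $\{p \mid -R_{q,p}\leq\tau_{\text{Ret}}\}$ taken over the entire corpus, and your own parenthetical about attainability within the top-$K$ is in fact the substance of the paper's second step rather than a dispensable aside.
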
 
\noindent
This result follows straightforwardly from Theorem~\ref{th:cp} and Assumptions~\ref{as:iid} \&~\ref{as:ret}. We give a proof in Appendix~\ref{pf:ret}.

\paragraph{LLM Set:} We utilize Monte Carlo sampling to approximate confidences for different semantic meanings; then, we use the negative approximated confidences as the NCMs to construct LLM sets. Specifically, for each \textit{(question, passage)} pair, we ask the LLM to generate $M$ responses ($M=30$ in our experiments). Given two responses $r$ and $r^{\prime}$, we cluster them together if they have high similarity, which is measured by Rouge score~\cite{lin-2004-rouge} or Natural Language Inference (NLI) model ~\cite{kuhn2023semantic, he2021deberta}. We consider the two responses to be semantically similar if they have a Rouge score greater than 0.7 or are deemed to entail each other by the NLI model. After clustering, for each cluster $i$, let $N_i$ be the number of responses in the cluster; 
we approximate the confidence of a response $r$ by \textbf{$N_i / M$} if $r$ belongs to the $i$-th cluster.
Finally, given the error budget for LLM $\alpha_{\text{LLM}}$, we can utilize a similar process to that in \eqref{eq:set} to construct LLM sets. The constructed sets satisfy the following:
\begin{lemma}
Suppose the questions $q$, their corresponding most relevant passage $p^*$, and semantically correct responses $r^*$ are sampled from distribution $\mathcal{D_{\text{Response}}}$. Given error budget $\alpha_{\text{LLM}}$, if Assumptions~\ref{as:iid} \&~\ref{as:chat} hold, the LLM sets satisfy
$$\Pr_{(q, p^*, r^*) \sim \mathcal{D_{\text{Response}}}} (r^* \in C_{\text{LLM}}(q, p^*)) \geq 1-\alpha_{\text{LLM}}.$$
\label{co:chat}
\end{lemma}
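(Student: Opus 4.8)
The plan is to mirror the structure of Lemma~\ref{co:ret}, since the LLM set is constructed by the same quantile thresholding procedure as the retriever set, just with a different nonconformity measure. First I would observe that Assumption~\ref{as:iid} gives us that the calibration triples $(q_n, p^*_n, r^*_n)$ together with the test triple $(q, p^*, r^*)$ are drawn i.i.d.\ from $\mathcal{D}_{\text{Response}}$, which is precisely the exchangeability hypothesis needed to invoke the vanilla conformal prediction guarantee of Theorem~\ref{th:cp}. Here the ``input'' is the pair $(q, p^*)$ and the ``output'' is the correct response $r^*$, and the nonconformity measure is the negative Monte-Carlo confidence $s(q, p^*, r) = -N_{i(r)}/M$, where $i(r)$ is the cluster to which $r$ belongs.

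Next I would verify that the threshold $\tau_{\text{LLM}}$ used to build $C_{\text{LLM}}$ is exactly the $\frac{\lceil (N+1)(1-\alpha_{\text{LLM}})\rceil}{N}$-th smallest calibration score, so that equation \eqref{eq:cp} applies verbatim with $\alpha = \alpha_{\text{LLM}}$. Once this identification is made, Theorem~\ref{th:cp} directly yields
\[
\Pr_{(q, p^*, r^*) \sim \mathcal{D}_{\text{Response}}}\bigl(r^* \in C_{\text{LLM}}(q, p^*)\bigr) \ge 1 - \alpha_{\text{LLM}},
\]
which is the claim. The role of Assumption~\ref{as:chat} is not in the probabilistic argument itself but in guaranteeing that $r^*$ is actually a candidate: since the correct response appears among the top-$M$ samples, it lands in some cluster and is assigned a finite (nonzero) confidence, so it is a well-defined element over which the nonconformity measure and hence the coverage statement range.

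The main obstacle, and the step I would treat most carefully, is reconciling the clustering-based, semantic-level construction with the pointwise framing of Theorem~\ref{th:cp}. The prediction set is over \emph{clusters} of semantically equivalent responses rather than individual strings, so I need to argue that the event $\{r^* \in C_{\text{LLM}}(q, p^*)\}$ is equivalent to the event that the cluster containing $r^*$ has confidence above threshold, and that the calibration scores $s_n = -N_{i(r^*_n)}/M$ are computed consistently with this semantic notion. The i.i.d.\ assumption must be understood at the level of the sampled responses and their induced clustering; provided the Monte-Carlo sampling and clustering procedure is applied identically to calibration and test points, exchangeability of the resulting scores is preserved, and the standard conformal argument goes through. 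I would flag that this is where the novelty of the semantic nonconformity measure is absorbed into an otherwise routine application of Theorem~\ref{th:cp}, and defer the full verification to the appendix in parallel with the proof of Lemma~\ref{co:ret}.
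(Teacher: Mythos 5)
Your proposal is correct and follows essentially the same route as the paper's proof: invoke Assumption~\ref{as:iid} for exchangeability, use Assumption~\ref{as:chat} to ensure the correct response is among the $M$ candidates and hence can be covered at all, and then apply the vanilla conformal guarantee of Theorem~\ref{th:cp} with error level $\alpha_{\text{LLM}}$. Your additional care about the cluster-level versus string-level framing of the nonconformity score is a point the paper's own (terser) proof glosses over, but it does not change the argument.
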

\noindent
The proof of Lemma~\ref{co:chat} is similar to that of Lemma~\ref{co:ret}; we give it in Appendix~\ref{pf:chat}.

Note that since the uncertainty score can be arbitrary in conformal prediction, the lemma~\ref{co:chat} holds regardless of the chosen heuristic measures (e.g., Rouge score or BERT embedding). If the chosen heuristic underperforms, conformal prediction will simply construct large prediction sets to compensate. We validate this claim in Section~\ref{sec:metrics}.

\paragraph{Aggregated Set:} To obtain an overall correctness guarantee, we construct an aggregated set $C_{\text{Agg}}$ by constructing an LLM set $C_{\rm LLM}$ for each passage $q$ contained in the retriever set; and take the union of the $C_{\rm LLM}$'s, i.e.
\begin{equation}
    C_{\rm Agg}(q) = \cup_{p \in C_{\rm Ret}(q)} C_{\rm LLM}(q, p).
    \label{eq:agg}
\end{equation}
Then, the resulting Aggregated set $C_{\text{Agg}}$ satisfies the following:
\begin{theorem}
Suppose the questions $q$ and semantically correct responses $r^*$ are sampled from the distribution $\mathcal{D}$, and a user-specified error level $\alpha$ is given. By aggregating retriever sets with error budget $\alpha_{\text{Ret}}$ by \eqref{eq:agg} with LLM sets with error budget $\alpha_{\text{LLM}}$, with $\alpha=\alpha_{\text{Ret}} + \alpha_{\text{LLM}}$, the aggregated sets satisfy
$$\Pr_{(q, r^*) \sim \mathcal{D}} (r^* \in C_{\text{Agg}}(q)) \geq 1-\alpha.$$
\label{th:e2e}
\end{theorem}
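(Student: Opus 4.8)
The plan is to prove the end-to-end guarantee by a union bound over the two failure events associated with the retriever and the LLM, combined with a set-containment argument that lets us pass from the component guarantees of Lemma~\ref{co:ret} and Lemma~\ref{co:chat} to the aggregated set. First I would fix a joint distribution over triples $(q, p^*, r^*)$ whose marginals are $\mathcal{D_{\text{Passage}}}$ on $(q, p^*)$, $\mathcal{D_{\text{Response}}}$ on $(q, p^*, r^*)$, and $\mathcal{D}$ on $(q, r^*)$, so that the two lemmas and the theorem all refer to events measured on one consistent probability space.

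Next I would define the two ``good'' events $A = \{p^* \in C_{\text{Ret}}(q)\}$ and $B = \{r^* \in C_{\text{LLM}}(q, p^*)\}$. Lemma~\ref{co:ret} gives $\Pr(A^c) \leq \alpha_{\text{Ret}}$ and Lemma~\ref{co:chat} gives $\Pr(B^c) \leq \alpha_{\text{LLM}}$. The crucial observation is the containment step: on $A \cap B$, the passage $p^*$ lies in $C_{\text{Ret}}(q)$, so $C_{\text{LLM}}(q, p^*)$ is one of the sets appearing in the union defining $C_{\text{Agg}}(q)$ in~\eqref{eq:agg}; since $B$ places $r^*$ inside that particular set, we obtain $r^* \in C_{\text{LLM}}(q, p^*) \subseteq C_{\text{Agg}}(q)$. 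Hence $A \cap B \subseteq \{r^* \in C_{\text{Agg}}(q)\}$.

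To finish, I would apply subadditivity to the complements: $\Pr(A^c \cup B^c) \leq \Pr(A^c) + \Pr(B^c) \leq \alpha_{\text{Ret}} + \alpha_{\text{LLM}} = \alpha$. Taking complements and using the containment, $\Pr(r^* \in C_{\text{Agg}}(q)) \geq \Pr(A \cap B) = 1 - \Pr(A^c \cup B^c) \geq 1 - \alpha$, which is the claim. I would stress that the union bound needs only subadditivity, not independence of $A$ and $B$, since the two events are plainly dependent through the shared question $q$.

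The step I expect to be the main obstacle is not the probabilistic bookkeeping but making the distributional alignment rigorous: the LLM guarantee in Lemma~\ref{co:chat} is stated for $C_{\text{LLM}}(q, p^*)$ evaluated at the \emph{true} relevant passage $p^*$, whereas the aggregated set unions over all $p \in C_{\text{Ret}}(q)$, which at test time need not equal $p^*$. The containment argument sidesteps having to control $C_{\text{LLM}}(q, p)$ for spurious $p$, but it relies on Assumptions~\ref{as:iid}, \ref{as:ret}, and~\ref{as:chat} holding jointly so that $A$ and $B$ are events on the same draw, and on both error budgets being calibrated on i.i.d. data; I would verify this coupling carefully before collapsing the bound.
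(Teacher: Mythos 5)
Your proposal is correct and is essentially the paper's own argument: a union bound (Bonferroni) over the retriever failure event $\{p^* \notin C_{\text{Ret}}(q)\}$ and the LLM failure event $\{r^* \notin C_{\text{LLM}}(q,p^*)\}$, hinging on the observation that when $p^* \in C_{\text{Ret}}(q)$ the set $C_{\text{LLM}}(q,p^*)$ appears in the union defining $C_{\text{Agg}}(q)$. The paper phrases this via a conditional decomposition of $\Pr(r^* \notin C_{\text{Agg}}(q))$ over that event and its complement, whereas you state the containment $A \cap B \subseteq \{r^* \in C_{\text{Agg}}(q)\}$ explicitly and apply subadditivity to the complements --- a cosmetic difference, with your note on the distributional alignment of the two lemmas being a welcome extra precision.
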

\noindent
We give a proof in Appendix~\ref{pf:e2e}. After taking the union, we remove duplicated responses and re-cluster semantic meanings. Given that this post-processing phase solely eliminates duplicate responses, it will not remove correct semantic meanings, and Theorem~\ref{th:e2e} remains valid.

Note that this aggregation process is actually a global hypothesis testing method called the Bonferroni correction. Lemmas~\ref{co:ret} \&~\ref{co:chat} and Theorem~\ref{th:e2e} can be straightforwardly extended to the probably approximately correct (PAC) guarantee by constructing PAC prediction sets; see Appendix~\ref{sec:pac_set} for details. 

\subsection{Performance Improvement}

By Theorem~\ref{th:e2e}, we can guarantee that semantically correct responses are included in the aggregated set with a probability of at least $1-\alpha$, assuming $\alpha=\alpha_{\text{Ret}}+\alpha_{\text{LLM}}$. This theorem is valid for any combination of the two error budgets. However, the predictive performance of the aggregation sets is influenced by the specific choice of the error budgets. This issue has been discussed in the Bonferroni correction and the global testing literature~\cite{NEUWALD1994698, HMP, Poole029637}.

Therefore, we optimize the error budgets using Bayesian optimization, a sampling-based global optimization technique suitable for non-convex, non-closed-form problems; see Appendix~\ref{sec:bayesian} for details. In TRAQ, Bayesian optimization first models the underlying performance landscape using a Gaussian process; then, it samples error budgets (i.e., $\alpha_{\text{Ret}}$ and $\alpha_{\text{LLM}}$) based on the Gaussian process, and identifies $\tau_{\rm Ret}$ and $\tau_{\rm LLM}$ on a held-out optimization set $B_{\rm Opt}$. After assessing the performance of the sampled error budgets on $B_{\rm Opt}$, the Gaussian process is modified to more accurately reflect the performance landscape. This process is repeated $T$ times. The pseudocode for this procedure is shown in Algorithm~\ref{alg:bayopt_appendix}.

\begin{algorithm}[t]
\caption{Prediction Set Optimization}
\label{alg:bayopt_appendix}
\begin{algorithmic}[1]
\Statex {\bfseries Input:}
Optimization set $B_{\text{Opt}}$, performance metric $f$, error level $\alpha$
\State Initialize Gaussian process $G$
\For{$t\in\{1,...,T\}$}
\State Sample $\alpha_{\text{Ret}}$ and $\alpha_{\text{LLM}}$ basing on $G$
\State Normalize $\alpha_{\rm Ret}$ and $\alpha_{\rm LLM}$ so that 
$\alpha_{\rm Ret}, \alpha_{\rm LLM} \in (0,1)$, and $\alpha_{\rm Ret}+\alpha_{\rm LLM}=\alpha$
\State Compute $\tau_{\text{Ret}}$ and $\tau_{\text{LLM}}$ on $B_{\text{Opt}}$
\State Construct $C_{\text{Agg}}$ on $B_{\text{Opt}}$
\State Evaluate performance of the $C_{\rm Agg}$ using $f$
\State Update $G$ using the evaluation results
\EndFor
\State {\bfseries return:} the best error budgets $\alpha_{\text{Ret}}$ and $\alpha_{\text{LLM}}$
\end{algorithmic}
\end{algorithm}

\section{Experiments}

\paragraph{Experiment Setup.}
We evaluate TRAQ on four datasets, including three standard QA datasets (Natural Question~\cite{kwiatkowski-etal-2019-natural}, TriviaQA~\cite{joshi-etal-2017-triviaqa}, SQuAD-1~\cite{rajpurkar2016squad}), and one biomedical QA dataset (BioASQ~\cite{TSA+12}). On each dataset, we collect 1,000 samples that met the criteria of Assumptions~\ref{as:ret} \&~\ref{as:chat}. We divide each dataset into calibration, optimization, and testing sets, with 300, 300, and 400 data points, respectively.

We employ two fine-tuned DPR models, one~\cite{karpukhin-etal-2020-dense} trained on the Natural Question, TriviaQA, and SQuAD-1 datasets, and the other fine-tuned on BioASQ (see Appendix~\ref{sec:ft_bio} for training details). Furthermore, we use two generative large language models (LLMs): \textit{GPT-3.5-Turbo-0613} (\textit{GPT-3.5}), whose internal embedding and prediction probabilities are not accessible, and \textit{Llama-2-7B} (\textit{Llama-2}). We separately fine-tune Llama-2 on Natural Question, TriviaQA, and SQuAD-1, with hyperparameters given in Appendix~\ref{sec:llama_ft}.

For each question, we retrieve the top-20 passages; for each \textit{(question, passage)} pair, we sample 30 responses, with a temperature of 1.0.

We evaluate using coverage levels 50\%, 60\%, 70\%, 80\%, and 90\%. For the PAC guarantee, we use confidence level 90\%. We use five random seeds for each experiment. To investigate the influence of prompt design, we design two prompts, one zero-shot and one few-shot prompt; the few-shot prompt includes two demonstrations. The prompt templates are provided in Appendix~\ref{zero}. Unless otherwise specified, the zero-shot prompt is used for both GPT-3.5 and Llama-2. 

We evaluate the performance of our approach using two metrics. The first metric is \emph{coverage rate}, which is the rate at which the correct responses are contained in the constructed sets. We consider the responses to be \textit{correct} if their \textit{Rouge-1}~\cite{lin-2004-rouge} scores with the annotated answers are greater than 0.3. The coverage rate is expected to be no less than the desired level on average across different random seeds. The second metric is the average prediction set size. Specifically, we consider two size measures: (i) the average number of semantic clusters and (ii) the average number of unique answers. Lower values indicate better performance. 


We compare our approaches, \textit{TRAQ} and \textit{TRAQ-P} (the PAC version), to several baselines, including \textit{Vanilla}, \textit{Bonf}, and \textit{Bonf-P}. Vanilla is a baseline that does not construct prediction sets and only uses the top retrieved passage and generated answers. Bonf and Bonf-P are ablations that omit Bayesian optimization.
In all plots, we also show the \textit{Reference} line indicating the desired coverage level.

We report both quantitative and qualitative results. Our quantitative experiments aim to answer the following. \\
\textit{$\left(Q_1\right)$ Do the coverage guarantees hold for the retriever and the generator? \\}
\textit{$\left(Q_2\right)$ Does the overall coverage guarantee hold? \\}
\textit{$\left(Q_3\right)$ How do Bayesian optimization and the coverage level affect prediction set sizes? \\}
\textit{$\left(Q_4\right)$ Does TRAQ work for different semantic clustering methods and performance metrics? \\}
\textit{$\left(Q_5\right)$ How does prompt affect results?}

\paragraph{Q1: Do the coverage guarantees hold for the retriever and generator?}

To validate the coverage guarantees of the retriever and generator, we consider the coverage rates of retriever and LLM sets (named \textit{Ret} and \textit{LLM}), and with the PAC guarantee (named \textit{Ret-P} and \textit{LLM-P}). We report results on BioASQ using GPT-3.5 in Figure~\ref{fig:individual_cov}; Results for other datasets and different LLMs are reported in
Figure~\ref{fig:additional_individual}, and are qualitatively similar.
As shown in Figure~\ref{fig:individual_cov}, the empirical coverage levels of the retrieval and QA prediction sets are close to the desired coverage levels. Thus, the coverage guarantees hold for individual components, as desired.

We also report empirical coverage rates with 20 random seeds in Figure~\ref{fig:additional_individual_20}. Compared to results with 5 random seeds, empirical coverage with more random seeds become closer to the desired level. Furthermore, when using the PAC prediction sets, the empirical coverage levels were almost always above the expected coverage levels across all random seeds, as desired.

\begin{figure}[t]
\centering
\includegraphics[width=0.45\textwidth]{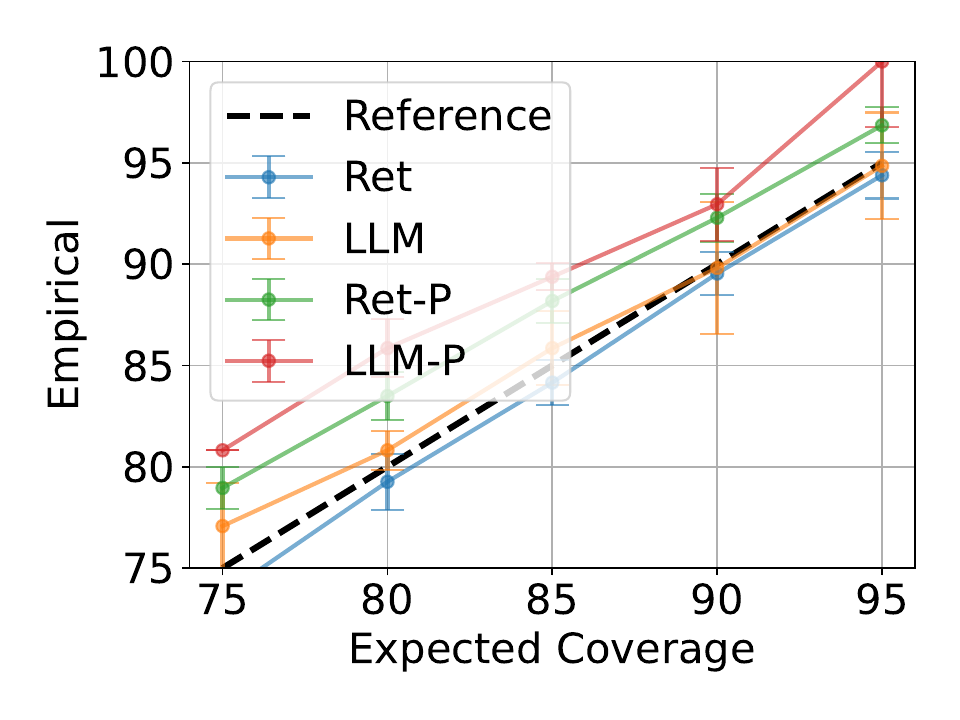}
\caption{Retriever and generator coverage rates on the BioASQ dataset.}
\label{fig:individual_cov}
\end{figure}

\paragraph{Q2: Does the end-to-end coverage guarantee hold?}

To verify the end-to-end guarantees from TRAQ, we report two rates. The first is the rate at which the correct responses are covered considering only the annotated most relevant passages:
\begin{equation*}
\Pr(p^{*} \in C_{\text{Ret}}(q)) \times \Pr(r^{*} \in C_{\text{LLM}}(q, p^{*})).
\end{equation*}
These results are shown in Figure~\ref{fig:e2e1_bio}. They show that the rates on average satisfy the desired coverage levels when using conformal prediction. In addition, the rates are mostly above the desired coverage levels when using PAC prediction sets. Second, we report the rate at which the correct responses are covered in the aggregated prediction set.
\begin{equation*}
\Pr(r^{*} \in C_{\text{Agg}}(q)).
\end{equation*}  The results are shown in Figure~\ref{fig:e2e2_bio}. Different from Figure~\ref{fig:e2e1_bio}, empirical levels of both conformal prediction and PAC prediction sets are above the expected coverage levels most of the time. This is because the generator might output the correct response even if it is not given a relevant passage.

\begin{figure}[t]
\centering
\includegraphics[width=0.45\textwidth]{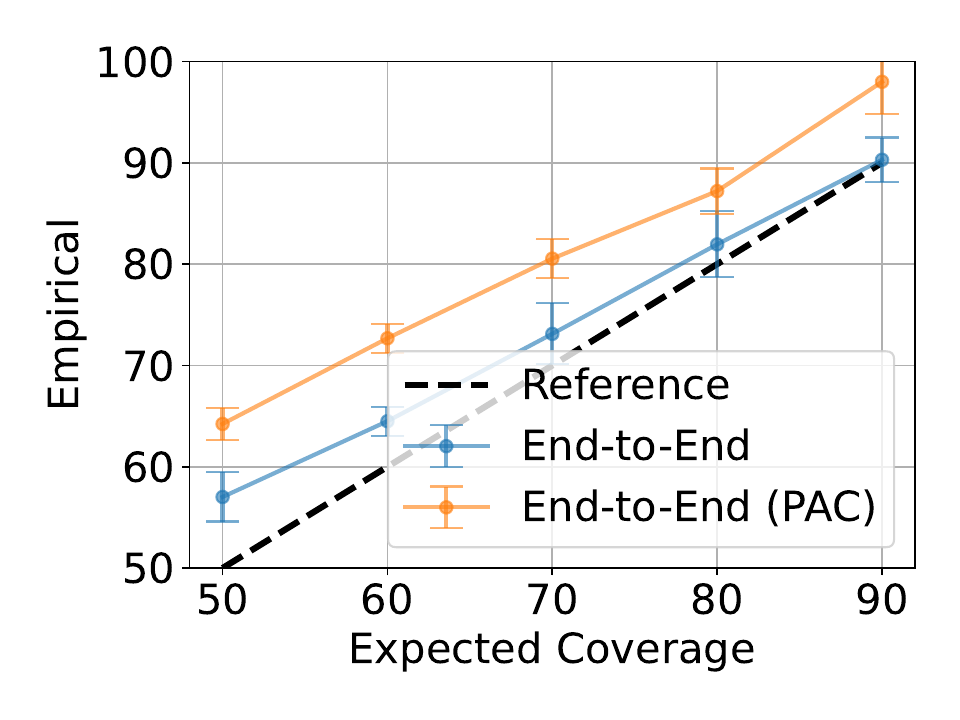}
\caption{End-to-end guarantee considering only the most relevant passage on BioASQ Dataset.}
\label{fig:e2e1_bio}
\end{figure}

\begin{figure}[t]
\centering
\includegraphics[width=0.45\textwidth]{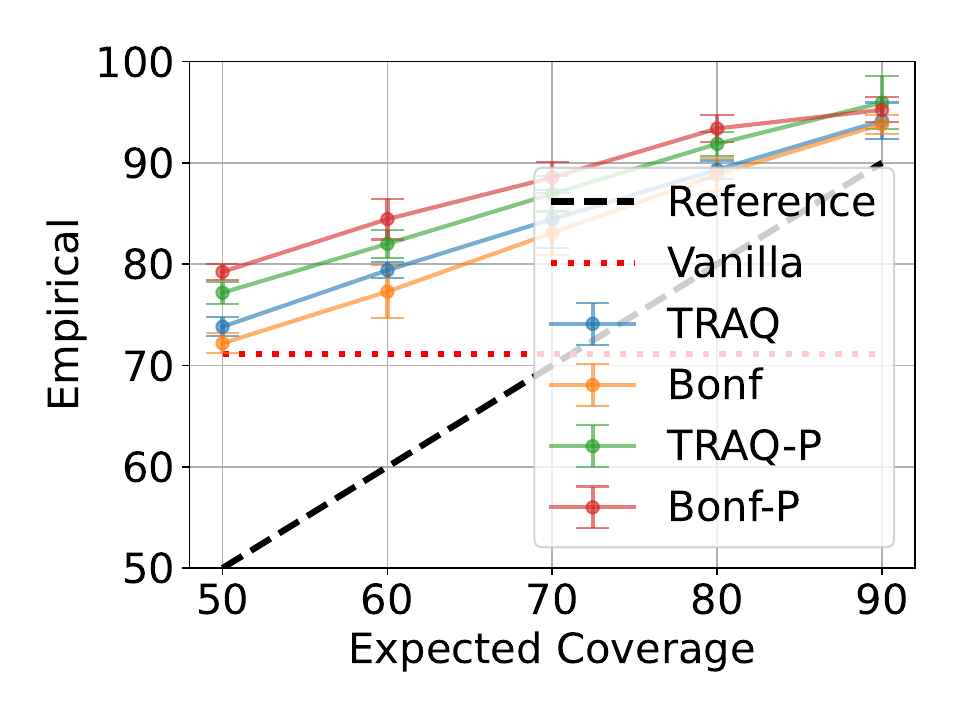}
\caption{End-to-end coverage guarantee considering all passages on the BioASQ dataset.}
\label{fig:e2e2_bio}
\end{figure}

\paragraph{Q3: How do Bayesian optimization and the coverage level affect prediction set sizes?}
\label{sec:efficiency}
To demonstrate the advantages of incorporating Bayesian optimization, we evaluate the average prediction set sizes (in terms of the number of semantic clusters) across different approaches. We first show results across different coverage levels and random seeds using different methods on BioASQ dataset Figure~\ref{fig:avg_size}. It shows that TRAQ and TRAQ-P are able to construct smaller prediction sets than their counterparts without Bayesian optimization (Bonf and Bonf-P). Furthermore, we report the average semantic counts on different datasets and coverage levels using GPT-3.5 in Table~\ref{tab:all_semantic_chatgpt} and using Llama-2 in Table~\ref{tab:all_semantic_llama}. As can be seen, Bayesian optimization is especially effective in reducing prediction set size when higher coverage rates are desired (80\% and 90\%). In these cases, both TRAQ and TRAQ-P are able to construct significantly smaller prediction sets, reducing their size by 16.2\% on average (18.1\% in Table~\ref{tab:all_semantic_chatgpt} and 14.2\% in Table~\ref{tab:all_semantic_llama}).
Importantly, even though the prediction sets are smaller, the desired overall coverage guarantees still hold. These tables also show that higher coverage levels tend to result in larger prediction set sizes; this trade-off is expected since stronger statistical guarantees require more conservative prediction sets.

\begin{figure}[t]
\centering
\includegraphics[width=0.45\textwidth]{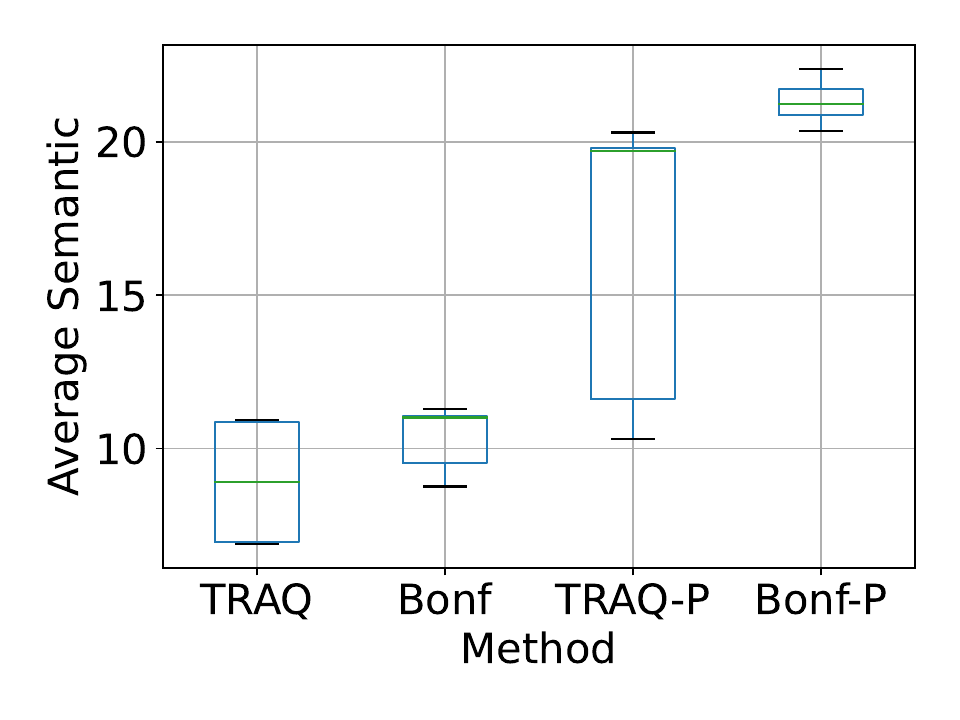}
\caption{Prediction set sizes according to the average number of semantic clusters.}
\label{fig:avg_size}
\end{figure}

\begin{table}
\centering
\resizebox{\columnwidth}{!}{%
\begin{tabular}{llllll}
\toprule
 Task & Cov(\%)  & TRAQ & Bonf & TRAQ-P & Bonf-P \\
\midrule
\multirow[t]{5}{*}{BIO} & 50 & $2.5_{0.1}$ & $\mathbf{2.4_{0.1}}$ & $2.9_{0.1}$ & $2.9_{0.2}$ \\
 & 60 & $\mathbf{2.9_{0.2}}$ & $\mathbf{2.9_{0.2}}$ & $3.4_{0.1}$ & $3.6_{0.2}$ \\
 & 70 & $\mathbf{3.5_{0.2}}$ & $3.6_{0.2}$ & $4.0_{0.3}$ & $4.6_{0.1}$ \\
 & 80 & $\mathbf{4.4_{0.2}}$ & $5.0_{0.2}$ & $5.8_{0.6}$ & $7.2_{0.5}$ \\
 & 90 & $\mathbf{8.9_{2.0}}$ & $10.3_{1.1}$ & $16.3_{4.9}$ & $21.3_{0.8}$ \\
\cline{1-6}
\multirow[t]{5}{*}{NQ} & 50 & $\mathbf{3.0_{0.3}}$ & $3.2_{0.2}$ & $3.6_{0.2}$ & $3.7_{0.1}$ \\
 & 60 & $\mathbf{3.7_{0.1}}$ & $\mathbf{3.7_{0.1}}$ & $4.5_{0.2}$ & $4.4_{0.1}$ \\
 & 70 & $\mathbf{4.6_{0.3}}$ & $\mathbf{4.6_{0.2}}$ & $5.7_{0.5}$ & $5.7_{0.2}$ \\
 & 80 & $\mathbf{6.1_{0.5}}$ & $6.4_{0.2}$ & $7.3_{0.6}$ & $9.3_{1.1}$ \\
 & 90 & $\mathbf{10.3_{2.7}}$ & $12.2_{1.5}$ & $16.7_{4.6}$ & $23.6_{0.6}$ \\
\cline{1-6}
\multirow[t]{5}{*}{Trivia} & 50 & $\mathbf{2.0_{0.2}}$ & $\mathbf{2.0_{0.1}}$ & $2.4_{0.4}$ & $2.4_{0.1}$ \\
 & 60 & $2.5_{0.3}$ & $\mathbf{2.4_{0.1}}$ & $2.9_{0.4}$ & $2.7_{0.2}$ \\
 & 70 & $3.0_{0.4}$ & $\mathbf{2.9_{0.2}}$ & $3.5_{0.3}$ & $3.4_{0.2}$ \\
 & 80 & $\mathbf{3.7_{0.3}}$ & $3.8_{0.3}$ & $4.6_{0.3}$ & $4.6_{0.3}$ \\
 & 90 & $5.9_{0.6}$ & $\mathbf{5.8_{0.4}}$ & $7.2_{0.9}$ & $7.8_{0.3}$ \\
\cline{1-6}
\multirow[t]{5}{*}{SQuAD1} & 50 & $3.6_{0.1}$ & $\mathbf{3.5_{0.0}}$ & $4.1_{0.2}$ & $4.0_{0.1}$ \\
 & 60 & $\mathbf{4.1_{0.2}}$ & $\mathbf{4.1_{0.1}}$ & $4.6_{0.1}$ & $5.0_{0.1}$ \\
 & 70 & $\mathbf{4.8_{0.2}}$ & $5.2_{0.2}$ & $5.5_{0.3}$ & $7.4_{0.2}$ \\
 & 80 & $\mathbf{6.2_{0.6}}$ & $8.2_{0.3}$ & $8.9_{1.3}$ & $11.0_{0.2}$ \\
 & 90 & $\mathbf{12.6_{2.1}}$ & $14.1_{0.4}$ & $21.3_{5.6}$ & $25.9_{0.5}$ \\
\cline{1-6}
\bottomrule
\end{tabular}
}
\caption{Average semantic counts using GPT-3.5.}
\label{tab:all_semantic_chatgpt}
\end{table}

\paragraph{Q4: Does TRAQ work with different semantic clustering methods?} \label{sec:metrics}
We evaluate whether TRAQ remains effective with different semantic clustering methods and performance metrics. We use the semantic clustering method proposed by~\citet{kuhn2023semantic}, which is based on BERT~\cite{devlin2019bert, reimers2019sentencebert}, and specified the performance metric as the average number of unique answers in the aggregated prediction sets. We evaluate this setup on the SQuAD-1 dataset using GPT-3.5. The results, shown in Figures~\ref{fig:cov_semantic} \&~\ref{fig:eff_semantic}, demonstrate that TRAQ remains successful. Specifically, Figure~\ref{fig:cov_semantic} shows that the overall coverage guarantee holds, and Figure~\ref{fig:eff_semantic} demonstrates that TRAQ and TRAQ-P reduce prediction set sizes compared to their ablations Bonf and Bonf-P, respectively.

\begin{figure}[t]
\centering
\includegraphics[width=0.45\textwidth]{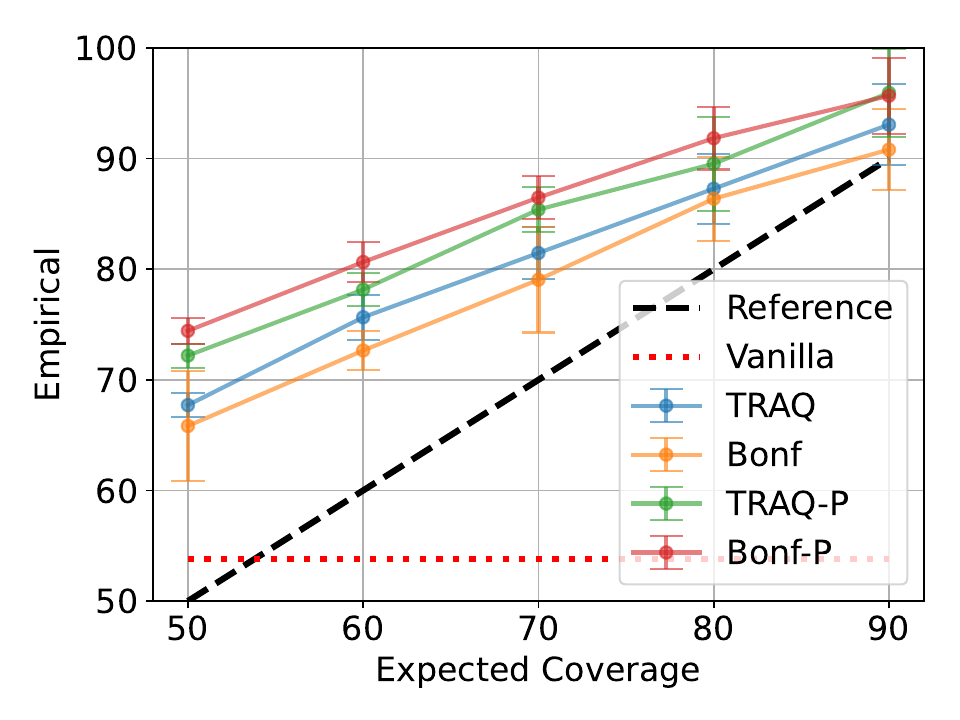}
\caption{Coverage rate using BERT embeddings on SQuAD-1 dataset.}
\label{fig:cov_semantic}
\end{figure}

\begin{figure}[t]
\centering
\includegraphics[width=0.45\textwidth]{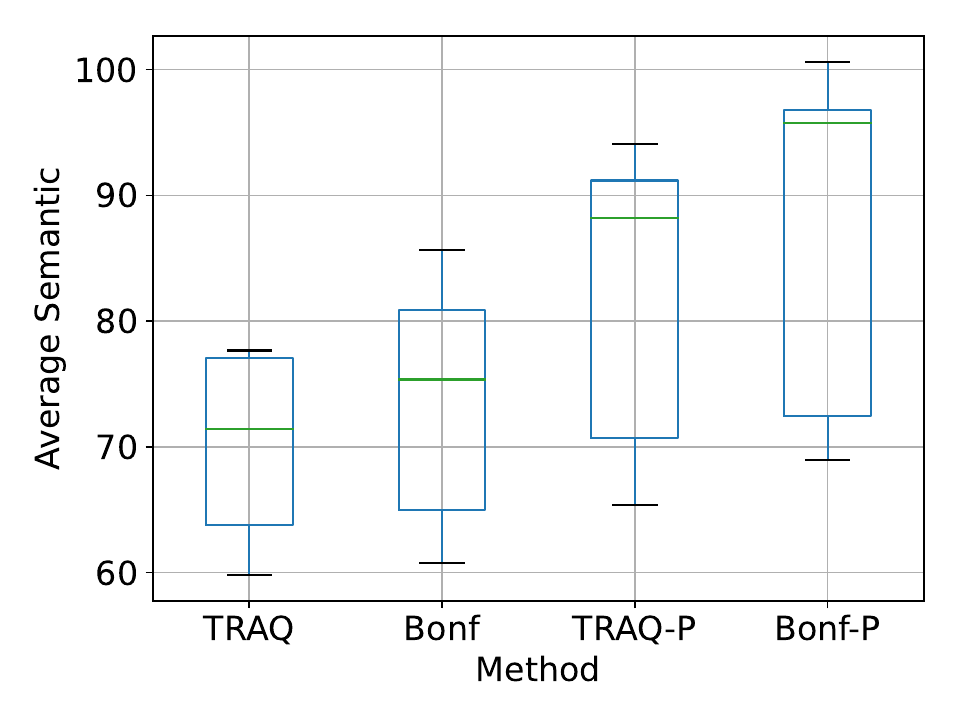}
\caption{Prediction set size according to average number of unique answers.}
\label{fig:eff_semantic}
\end{figure}

\paragraph{Q5: How does prompt engineering affect results?}
We investigate how prompt engineering affects TRAQ performance using a few-shot prompt with two demonstrations. The prompt template is provided in Appendix~\ref{fewshot}. We evaluate TRAQ on Natural Question using GPT-3.5. The end-to-end coverage rates and prediction set sizes using different methods are shown in Figure~\ref{fig:additional_semantic}. TRAQ with a few shot prompt achieves the desired coverage rate on average and reduces prediction set size compared to its ablation. In Figure~\ref{fig:zs_fs}, we also compare the zero-shot and few-shot prompts in terms of performance. Interestingly, zero-shot prompting mostly yields better efficiencies. This could be because zero-shot prompting generated more diverse answers and had lower confidence in wrong answers. An example of the comparison between responses using different prompts is given in the Appendix~\ref{sec: prompts}.

\begin{figure}[t]
\centering
\includegraphics[width=0.45\textwidth]{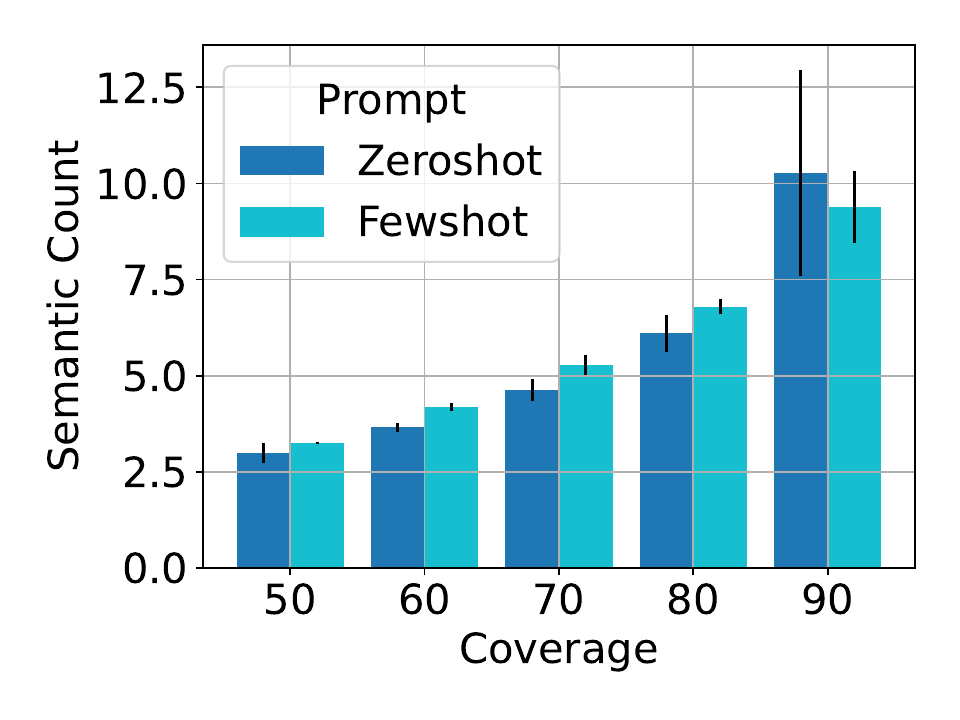}
\caption{Comparison between zero-shot and few-shot prompts on prediction set size.}
\label{fig:zs_fs}
\end{figure}

\paragraph{Qualitative Analysis.}
By constructing prediction sets, TRAQ guarantees that it includes correct responses with high probability. For example, we consider the following question: \textit{Who played in the movie a star is born with Judy Garland?}, where \textit{James Mason} is a correct answer. The responses of different methods are shown below. While standard RAG fails to return the correct answer, TRAQ and Bonf output sets containing the correct answers; and TRAQ obtains a smaller set.
%
%
%
%
%
%
\begin{lstlisting}

Question: who played in the movie a star is born with judy garland

True Answers: {'James Mason', 'Charles Bickford', 'Jack Carson'}

Standard: {'Gary Busey', 'Judy Garland', 'Barbra Streisand'}

TRAQ: {'Judy Garland',  'James Mason', 'Lady Gaga', 'Sid Luft', 'Danny Kaye'}

Bonf {'Gary Busey', 'Judy Garland',  'James Mason', 'Lady Gaga', 'Bradley Cooper', 'Sidney Luft', 'Danny Kaye'}
\end{lstlisting}
We show additional examples in Appendix~\ref{sec:qual}.










\section{Conclusion}
\label{sec:conclusion}
We propose an algorithm, called \emph{Trustworthy Retrieval Augmented Question Answering (TRAQ)}, which applies conformal prediction to construct prediction sets for Retrieval Augmented Generation (RAG). TRAQ first constructs prediction sets for the retriever and generator and then aggregates these sets. TRAQ guarantees that for each question, a semantically correct answer is included in the prediction set it outputs with high probability. To the best of our knowledge, this guarantee is the first conformal guarantee for retrieval augmented generation. Additionally, to minimize prediction set size, TRAQ leverages Bayesian optimization to identify optimal hyperparameters. In our comprehensive experiments, we demonstrate that TRAQ provides an overall semantic level coverage guarantee across different tasks, and that Bayesian optimization effectively reduces prediction set size.

\section{Broader Impacts}
\label{sec:impact}

The need for trustworthy AI algorithms has recently become paramount due to the risks of spreading misleading information~\cite{Presiden9:online, Commissi41:online}. We propose TRAQ, a framework that aims to address the hallucination problem by using conformal prediction to provide probabilistic guarantees for retrieval augmented generation (RAG). In addition, TRAQ leverages novel techniques to improve performance that may be useful more broadly in conformal prediction.

\section{Limitations}
\label{sec:limitation}
TRAQ makes three assumptions: that the data is independent and identically distributed (\ref{as:iid}), that the retriever has good performance (\ref{as:ret}), and that the language model can generate a response to the input question (\ref{as:chat}). Our experiments have verified \ref{as:iid}, but \ref{as:ret} and \ref{as:chat} may not be valid if the underlying retriever and language model do not perform well. To relax \ref{as:ret}, we can select more passages than the top-\textbf{20} used in our experiments. To remove \ref{as:chat}, we propose providing a guarantee of including \textit{I do not know} in the aggregation set if the language model cannot answer the input question. We describe how TRAQ can be modified to provide such guarantees in Appendix~\ref{sec: idk}.

TRAQ is a post-hoc method, so its prediction sets may be larger than necessary if the underlying models, such as the retriever and large language model, do not work properly. Additionally, if the semantic clustering techniques (Rouge score based or BERT-based) are invalid, then some semantically unrelated answers may be aggregated.

Finally, TRAQ can reduce inference speed due to the need for multiple retrievals, each of which needs to be embedded separately by the LLM. In our current setup, the computational complexity of the retrieval phase increases linearly with the number of retrievals (typically around 15). Avoiding this overhead is a key direction for future research.

\section*{Ackowledgement}
This work was generously supported by NSF Award CCF-1917852, ARO Award W911NF20-1-0080, and 
Institute of Information \& communications Technology Planning \& Evaluation (IITP) grant funded by the Korea government (MSIT) (No.2019-0-01906, Artificial Intelligence Graduate School Program (POSTECH)).

\bibliography{custom}

\newpage
\appendix
\onecolumn

\section{Conformal Prediction and PAC Guarantees}

\subsection{Conformal Prediction and Hypothesis Testing}
\label{sec:cp}
Conformal prediction is a distribution-free uncertainty quantification technique that constructs provable prediction sets for black-box models. Specifically, let $\mathcal{X}$ and $\mathcal{Y}$ be the input and label spaces, respectively, and $(x,y)$ be an input-label pair. Conformal prediction assumes given a calibration set $B=\{x_i,y_i\}_{i=1}^{N}$ with $N$ input-label pairs, along with a \textit{nonconformity measure} $s: \mathcal{X} \times \mathcal{Y} \rightarrow \mathbb{R}$ that measures how different a pair $(x,y)$ is from the examples sampled from the distribution $\mathcal{D}$. Given a new input $x_{\text{test}}$, conformal prediction constructs a prediction set $C(x_{\text{test}})\subseteq\mathcal{Y}$ using Algorithm~\ref{alg:cp}. Intuitively, for each label $y \in \mathcal{Y}$, this algorithm checks whether $(x_{\text{test}},y)$ is similar to the examples in $B$ according to the nonconformity measure $s(x_{\text{test}}, y)$. If $s(x, y)$ is low enough, then $y$ is included in the prediction set $C(x_{N+1})$; otherwise, $y$ is excluded from $C(x_{N+1})$. 

\begin{algorithm}[tbh]
\caption{The Conformal Algorithm}
\label{alg:cp}
\begin{algorithmic}
\State {\bfseries Input:} Nonconformity measure $s$, significance level $\alpha$, calibration st $B=\{x_n, y_n\}_{n=1}^N$, a new input $x_{\text{test}}$, label space $\mathcal{Y}$
\State Compute the threshold $\tau$ as the $\frac{\lceil(1-\alpha)(N+1)\rceil}{N}$-th smallest score in $\{s(x_i, y_i)\}_{i=1}^N$.
\State Construct prediction set for $x_{\text{test}}$ by 
\begin{equation*}
C(x_{\text{test}}) = \{y \mid s(x_{\text{test}}, y), y \in \mathcal{Y}\}
\end{equation*}
\State {\bfseries Return:} $C(x_{\text{test}})$.
\end{algorithmic}
\end{algorithm}

\subsection{PAC Prediction Set}
\label{sec:pac}
PAC prediction sets~\cite{vovk2012conditional,park2021pac} are a variant of conformal prediction approach that satisfies stronger PAC-style guarantees. Let $\mathcal{D}$ be the distribution of samples, and $B=\{x_n, y_n\}_{n=1}^N$ be a held-out calibration set of i.i.d. data points from $\mathcal{D}$ of size $N$. We denote the joint distribution on N samples by $\mathcal{D}^N$. The goal is to find a set of a small size satisfying the PAC property, that is, given $\alpha,\delta\in(0,1)$, 
\begin{equation*}
\Pr_{Z \sim \mathcal{D}^n}[ L_\mathcal{D}(C) \leq \alpha ] \ge 1 - \delta,
\end{equation*} 
where the $\Pr_{Z \sim \mathcal{D}^n}$ refers to the chances of calibration succeeding. In this case, we say $C$ is \emph{$(\alpha, \delta)$-probably approximately correct (PAC)}. To construct $(\alpha, \delta)$-PAC sets, the PAC prediction set considers the following one-dimensional parameterization of the prediction sets:
\begin{equation*}
C_\tau(x) = \{ y \in \mathcal{Y} \mid g(x, y) \geq \tau \},
\end{equation*}
where $\tau \geq 0$ and $g:\mathcal{X} \times \mathcal{Y} \rightarrow \mathbb{R}_{\geq 0}$ is any given scoring function (e.g., the label probabilities output by a deep neural network). The threshold $\tau$ is computed by solving the following optimization problem:
\begin{equation}
\hat\tau = \operatorname*{\arg\max}_{\tau \geq 0}~ \tau ~~ \text{subj. to} \sum_{(x, y) \in Z} \mathbb{I}[y \notin C_\tau(x)] \leq k^*,
\label{eqn:algorithm}    
\end{equation}
where
\begin{equation*}
k^* =\operatorname*{\arg\max}_{k\in\mathbb{N}\cup\{0\}}~k\qquad\text{subj. to}\qquad F(k;N,\alpha) \leq \delta,    
\end{equation*}
where $F(k; N, \alpha)$ is the cumulative distribution function of the binomial random variable $\text{Binomial}(N, \alpha)$
with $N$ trials and success probability $\alpha$. 
Maximizing $\tau$ corresponds to minimizing the prediction set size. We have the following theorem:

\begin{theorem}[\cite{vovk2012conditional,park2021pac}] \label{thm:pred_set}
$C_{\hat\tau}$ is $(\alpha, \delta)$-correct for $\hat\tau$ as in \eqref{eqn:algorithm}.
\end{theorem}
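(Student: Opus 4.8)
The plan is to exploit the one-dimensional monotone structure of the parameterization $C_\tau$ to reduce a statement about the \emph{data-dependent} threshold $\hat\tau$ to a concentration statement at a single \emph{deterministic} threshold, at which point the empirical miscoverage count becomes an ordinary binomial sum and the defining property of $k^*$ closes the argument. The whole difficulty is that one cannot apply a concentration bound directly to $\hat\tau$, since $\hat\tau$ is itself a function of the calibration data $Z$; the monotone reduction is what decouples the two.

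First I would record the two monotonicity facts that drive everything. Since $C_\tau(x) = \{y : g(x,y) \ge \tau\}$ shrinks as $\tau$ grows, both the population loss $L_\mathcal{D}(C_\tau) = \Pr_{(x,y)\sim\mathcal{D}}(y \notin C_\tau(x))$ and the empirical count $e(\tau) := \sum_{(x,y) \in Z} \mathbb{I}[y \notin C_\tau(x)]$ are non-decreasing in $\tau$. Consequently the feasible region $\{\tau : e(\tau) \le k^*\}$ in \eqref{eqn:algorithm} is a down-set, and $\hat\tau$ is simply its right endpoint.

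Next I would introduce the deterministic critical threshold $\tau^* := \inf\{\tau \ge 0 : L_\mathcal{D}(C_\tau) > \alpha\}$, which depends only on $\mathcal{D}$, $g$, and $\alpha$, not on $Z$, and for which $L_\mathcal{D}(C_{\tau^*}) \ge \alpha$. The key observation is that the failure event factors through $\tau^*$: if $L_\mathcal{D}(C_{\hat\tau}) > \alpha$ then monotonicity of the loss forces $\hat\tau \ge \tau^*$, and then monotonicity of $e$ together with the feasibility of $\hat\tau$ yields $e(\tau^*) \le e(\hat\tau) \le k^*$. Hence $\{L_\mathcal{D}(C_{\hat\tau}) > \alpha\} \subseteq \{e(\tau^*) \le k^*\}$, and the right-hand event now refers to a \emph{fixed} threshold.

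Finally I would apply concentration at $\tau^*$. Because $Z$ is i.i.d. and $\tau^*$ is deterministic, $e(\tau^*)$ is a sum of $N$ independent Bernoulli indicators each with mean $L_\mathcal{D}(C_{\tau^*}) \ge \alpha$, so $e(\tau^*)$ stochastically dominates a $\mathrm{Binomial}(N,\alpha)$ variable and $\Pr[e(\tau^*) \le k^*] \le F(k^*; N, \alpha)$. Since $k^*$ is chosen as the largest $k$ with $F(k; N, \alpha) \le \delta$, chaining the inclusions gives $\Pr[L_\mathcal{D}(C_{\hat\tau}) > \alpha] \le F(k^*; N, \alpha) \le \delta$, i.e. the $(\alpha,\delta)$-PAC claim. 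The step I expect to require the most care is the boundary handling around $\tau^*$: the implications $L_\mathcal{D}(C_{\tau^*}) \ge \alpha$ and $L_\mathcal{D}(C_{\hat\tau}) > \alpha \Rightarrow \hat\tau \ge \tau^*$ rely on the monotone loss being well-behaved at $\tau^*$, which can fail if ties in the scores $g$ produce jumps; this is resolved by using the infimum definition of $\tau^*$ with limits taken from the right, or by fixing a tie-breaking convention on $g$.
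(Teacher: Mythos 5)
The paper does not actually prove Theorem~\ref{thm:pred_set}; it imports the result from \citet{vovk2012conditional} and \citet{park2021pac}, so there is no in-paper argument to compare against. Your proposal reconstructs the standard proof from those references, and its architecture is right: monotonicity of $\tau \mapsto C_\tau$ converts a statement about the data-dependent threshold $\hat\tau$ into one about a single deterministic threshold, and the binomial tail condition defining $k^*$ then closes the bound.

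The one step that is wrong as written is the inequality at $\tau^*$. With $C_\tau(x)=\{y : g(x,y)\ge\tau\}$ the population loss is $L_\mathcal{D}(C_\tau)=\Pr[g(X,Y)<\tau]$, which is \emph{left}-continuous and non-decreasing in $\tau$; hence at $\tau^*=\inf\{\tau : L_\mathcal{D}(C_\tau)>\alpha\}$ one has $L_\mathcal{D}(C_{\tau^*})\le\alpha$, not $\ge\alpha$, and the indicators $\mathbb{I}[g(x_i,y_i)<\tau^*]$ need not dominate $\mathrm{Bernoulli}(\alpha)$, so the concentration step as stated does not go through. The repair is exactly the right-limit version you allude to at the end: on the failure event $L_\mathcal{D}(C_{\hat\tau})>\alpha$ one has $\hat\tau>\tau^*$ strictly, and $e(\tau)\le k^*$ for all $\tau\in(\tau^*,\hat\tau]$ forces $\sum_i \mathbb{I}[g(x_i,y_i)\le\tau^*]\le k^*$; this non-strict count is a sum of i.i.d.\ Bernoulli variables with mean $\Pr[g(X,Y)\le\tau^*]=\lim_{\tau\downarrow\tau^*}L_\mathcal{D}(C_\tau)\ge\alpha$, so it stochastically dominates $\mathrm{Binomial}(N,\alpha)$ and $\Pr[L_\mathcal{D}(C_{\hat\tau})>\alpha]\le F(k^*;N,\alpha)\le\delta$. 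Since you flagged this boundary issue yourself and named the correct resolution, I would count the proposal as correct once that step is written out; two further small points worth a sentence each are that $\hat\tau$ is indeed feasible (the feasible set is closed on the right because $e$ is left-continuous, so the maximum in \eqref{eqn:algorithm} is attained) and the degenerate case where no $k$ satisfies $F(k;N,\alpha)\le\delta$, handled by returning the full label space.
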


\subsection{Conformal Prediction and PAC Prediction Set Comparison}
\label{sec:cp_pac}

\paragraph{Conformal Prediction Guarantee}

Formally, we can write the conformal prediction guarantee as $$Pr_{(X,Y) \sim \mathcal{D}} (Y \in C(X)) \geq 1 - \alpha.$$

In other words, the prediction sets constructed by conformal prediction guarantee that over the whole distribution $\mathcal{D}$, the probability that the true label is contained in the set is at least $1-\alpha$. Note that this coverage probability is marginalized over all possible calibration sets. On the other hand, for a specific calibration set $B$, this guarantee might not hold. For example, the guarantee will not hold if the samples in $B$ are concentrated in a small region of the joint distribution and therefore are not representative of the joint distribution $\mathcal{D}$. 

\paragraph{PAC Prediction Set Guarantee}

Formally, we can write the guarantee of the PAC prediction set guarantee as
$$
\Pr_{B \sim \mathcal{D}^N} (Pr_{(X, Y) \sim \mathcal{D}} \geq 1-\alpha) \geq 1-\delta.
$$

Compared to the conformal prediction guarantee, the difference is the outer probability, which is on the given calibration set $B$. Intuitively, the guarantee of the PAC prediction set says that conditioning on the given calibration set $B$, we can say with high confidence (at least $1-\delta$) that the true label is contained in the constructed set $C(X)$ with high probability $(1-\alpha)$. As a result, the PAC prediction set guarantee is stronger than the conformal prediction guarantee, as the PAC prediction set guarantee is over an individual calibration set, while the conformal prediction guarantee is marginalized over all possible calibration sets.

\subsection{Bayesian Optimization}
\label{sec:bayesian}

Bayesian optimization (BO) is a technique to find the global optimum of a potentially nonconvex, nonlinear, or nonclosed-form objective function $f$ with decision variables $\{b^1, \ldots, b^M\}$. It builds a probabilistic model of the objective function and then selects parameters that could maximize it. The model is then refined using the chosen parameters. This process is repeated until an iteration budget $T$ is reached, as shown in Algorithm~\ref{alg:bo}~\cite{https://doi.org/10.48550/arxiv.1807.02811}. Our implementation of Bayesian optimization is based on \textit{scikit-optimization}~\cite{head_2021_5565057}.

\begin{algorithm}[tbh]
\caption{Bayesian Optimization}
\label{alg:bayopt}
\begin{algorithmic}[1]
\State Place a Gaussian process prior $g$ on $f$.
\State Observe $f$ at $t_0$ points according to an initial space-filling experimental design. Set $t=t_0$.
\While{$t \leq T$}
\State Update the posterior probability distribution on $g$ using all available data.
\State Let $b_t$ be a maximizer of the acquisition function over $b$, where the acquisition function is computed using the current posterior distribution.
\State Observe $f(b_t)$.
\State Increment $t$.
\EndWhile
\State {\bfseries Return:} either the point evaluated with the smallest $f(b)$ or the point with the smallest posterior.
\end{algorithmic}
\label{alg:bo}
\end{algorithm}

\section{Proofs}
\label{sec:e2e_proof}

\begin{proof}[Proof of Lemma~\ref{co:ret}]
First, based on Assumption~\ref{as:iid}, samples collected for the construction of the retrieval prediction set are i.i.d. with unobserved samples, satisfying the i.i.d. (exchangeability) assumption required by conformal prediction (PAC prediction set).

Second, based on Assumption~\ref{as:ret}, for each input question $q$, since its relevant passage can be retrieved, the prediction set can contain the relevant passage if the threshold $\tau_{\text{Ret}}$ is appropriately set. (Otherwise, the prediction set cannot contain the relevant passage even if all retrieved passages are included.)

Third, since we construct the retriever set following conformal prediction with the error level being $\alpha_{\text{Ret}}$, the resulting retriever sets satisfy:
\begin{equation*}
\Pr_{(q, p^*) \sim \mathcal{D_{\text{Passage}}}} (p^* \in C_{\text{Ret}}(q)) \geq 1-\alpha_{\text{Ret}}.
\end{equation*}
\label{pf:ret}
\end{proof}

\begin{proof}[Proof of Lemma~\ref{co:chat}]
First, based on Assumption~\ref{as:iid}, samples collected for the construction of the LLM prediction set are i.i.d. with unobserved samples, satisfying the i.i.d. (exchangeability) assumption required by conformal prediction (PAC prediction set).

Second, based on Assumption~\ref{as:chat}, for every input question and its most relevant passage $q^*$, since its semantically correct responses can be retrieved, the prediction set can contain correct responses if the threshold $\tau_{\text{LLM}}$ is appropriately set. (Otherwise, the prediction set cannot contain correct responses even if all responses are included.)

Third, since we construct the LLM prediction set following conformal prediction with the error level being $\alpha_{\text{LLM}}$, the resulting retriever sets satisfy:
\begin{equation*}
\Pr_{(q, p^*, r^*) \sim \mathcal{D_{\text{Response}}}} (r^* \in C_{\text{LLM}}(q, p^*) ) \geq 1-\alpha_{\text{LLM}}.
\end{equation*}
\label{pf:chat}
\end{proof}


\begin{proof}[Proof of Theorem~\ref{th:e2e}]
    We prove this theorem by union bound. Specifically, given two event $A$ and $B$, we have the following inequality:
\begin{equation*}
\Pr(A \cup B) = \Pr(A) + \Pr(B) - \Pr(A \cap B) \leq \Pr(A) + \Pr(B).
\end{equation*}
In TRAQ, let event $A$ be
\begin{equation*}
    \{p^* \notin C_{\rm Ret}(q)\};
\end{equation*}
\noindent and event $B$ be 
\begin{equation*}
    \{r^* \notin C_{\rm LLM}(q, p^*)\}.
\end{equation*}
By Lemma~\ref{co:ret} and~\ref{co:chat}, we have
\begin{align*}
    \Pr (p^* \notin C_{\rm Ret}(q)) &= 1 - \Pr (p^* \in C_{\rm Ret}(q)) \leq \alpha_{\rm Ret} \\
    \Pr (p^* \notin C_{\rm LLM}(q, p^*)) &= 1 - \Pr (r^* \in C_{\rm LLM}(q, p^*)) \leq \alpha_{\rm LLM}.
\end{align*}

Then, we have the following inequalities
\begin{align*}
    \Pr & (r^* \notin C_{\text{Agg}}(q)) \\
    &= \Pr ( r^* \notin \cup_{p \in C_{\rm Ret}(q)} C_{\rm LLM}(q, p) )
    \\
    &= \Pr ( r^* \notin \cup_{p \in C_{\rm Ret}(q)} C_{\rm LLM}(q, p), A ) + \Pr ( r^* \notin \cup_{p \in C_{\rm Ret}(q)} C_{\rm LLM}(q, p), A^C )
    \\
    &= \Pr ( r^* \notin \cup_{p \in C_{\rm Ret}(q)} C_{\rm LLM}(q, p) | A )\Pr(A) 
    + \Pr ( r^* \notin \cup_{p \in C_{\rm Ret}(q)} C_{\rm LLM}(q, p) | A^C ) \Pr(A^C)
    \\
    &\le
    \Pr(A) + \Pr ( r^* \notin \cup_{p \in C_{\rm Ret}(q)} C_{\rm LLM}(q, p) | A^C ) \Pr(A^C)
    \\
    &\le 
    \Pr(A) + \Pr ( r^* \notin C_{\rm LLM}(q, p^*))
    \\
    &\le \alpha_{\rm Ret} + \alpha_{\rm LLM} = \alpha.
\end{align*}
\label{pf:e2e}
\end{proof}

\subsection{PAC Prediction Set Construction}
\label{sec:pac_set}
To construct prediction sets with probably approximately correct (PAC) guarantees, we use the same nonconformity measures states in~\ref{sec:cp_sets} for retrieval and LLM tasks, respectively. Also, we will assign the error budgets $\alpha_{\text{Ret}}$ and $\alpha_{\text{LLM}}$ with $\alpha_{\text{Ret}}+\alpha_{\text{LLM}}=\alpha$. Additionally, we need to specify confidence levels for PAC prediction set. In our work, we specify $1-\frac{\delta}{2}$ to the retriever and LLM PAC prediction set. Then, we have the following Corollaries:

\begin{lemma}
Suppose the questions and their corresponding most relevant passage $p^*$'s are subject to the distribution $\mathcal{D_{\text{passage}}}$. Given the error budget $\alpha_{\text{Ret}}$ and confidence level $1-\frac{\delta}{2}$,, the constructed retriever sets satisfy the following inequality:
\begin{equation}
\Pr_{B \sim \mathcal{D_{\text{Passage}}}}[\Pr_{(q, p^*) \sim \mathcal{D_{\text{Passage}}}} (p^* \in C_{\text{Ret}}(q)) \geq 1-\alpha_{\text{Ret}}]\geq 1-\frac{\delta}{2}.
\label{eq:ret_pac}
\end{equation}
\label{co:ret_pac}
\end{lemma}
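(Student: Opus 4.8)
The plan is to treat this statement as the PAC analog of Lemma~\ref{co:ret}: I would reuse the identical three-step template from the proof of Lemma~\ref{co:ret}, but invoke the PAC guarantee of Theorem~\ref{thm:pred_set} in place of the vanilla conformal guarantee (Theorem~\ref{th:cp}). The three ingredients are exactly Assumptions~\ref{as:iid} and~\ref{as:ret} together with the PAC prediction-set construction of Appendix~\ref{sec:pac_set}, instantiated at error level $\alpha_{\text{Ret}}$ and confidence level $1-\frac{\delta}{2}$.

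First I would use Assumption~\ref{as:iid} to certify that the calibration pairs $(q, p^*)$ and the unseen test pair are drawn i.i.d. from $\mathcal{D_{\text{Passage}}}$, which is precisely the sampling hypothesis required by Theorem~\ref{thm:pred_set}. Second, I would appeal to Assumption~\ref{as:ret} to establish realizability: because the relevant passage $p^*$ always appears among the top-$K$ retrieved passages, there exists a threshold for which $p^*$ is actually contained in $C_{\text{Ret}}(q)$. This step rules out the degenerate situation in which coverage fails not because the threshold is too tight but because $p^*$ could never be included regardless of $\tau_{\text{Ret}}$; it ensures that the binomial feasibility condition on $k^*$ in \eqref{eqn:algorithm} translates into genuine coverage rather than vacuous set membership. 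Third, having matched the hypotheses, I would instantiate Theorem~\ref{thm:pred_set} with error parameter $\alpha_{\text{Ret}}$ and confidence parameter $\frac{\delta}{2}$. The conclusion that $C_{\text{Ret}}$ is $(\alpha_{\text{Ret}}, \frac{\delta}{2})$-correct is, after unfolding the definition of PAC correctness, exactly the claimed inequality \eqref{eq:ret_pac}.

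The only non-mechanical part — and hence the main obstacle — is reconciling the direction of the nonconformity measure with the scoring-function convention of Theorem~\ref{thm:pred_set}. The retriever NCM $-R_{q,p}$ is a nonconformity score (lower values indicate better agreement), and the PAC construction places the threshold via the binomial program \eqref{eqn:algorithm}, whereas Theorem~\ref{thm:pred_set} is stated for a scoring function $g$ with sets $C_\tau(x)=\{y\mid g(x,y)\ge\tau\}$ in which higher values are better. I would therefore set $g(q,p)=R_{q,p}$, under which $\{p\mid g(q,p)\ge\tau\}=\{p\mid -R_{q,p}\le-\tau\}$ recovers the retriever-set form of \eqref{eq:set}. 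Once this sign bookkeeping is discharged, the retriever PAC construction is literally an instance of the PAC algorithm at confidence $\frac{\delta}{2}$, and the stated guarantee follows immediately; the proof of the companion LLM statement (the PAC analog of Lemma~\ref{co:chat}) is identical after substituting Assumption~\ref{as:chat} for Assumption~\ref{as:ret}.
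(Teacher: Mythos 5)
Your proposal is correct and follows essentially the same route the paper takes: the paper treats Lemma~\ref{co:ret_pac} as an immediate corollary of the PAC prediction-set guarantee (Theorem~\ref{thm:pred_set}) combined with Assumptions~\ref{as:iid} and~\ref{as:ret}, mirroring the three-step proof of Lemma~\ref{co:ret} with the confidence parameter instantiated at $\frac{\delta}{2}$, and does not even write the argument out explicitly. Your additional care in reconciling the sign convention between the nonconformity score $-R_{q,p}$ and the scoring function $g$ of Theorem~\ref{thm:pred_set} is a sound elaboration of a detail the paper glosses over, not a departure from its approach.
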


\begin{lemma}
Suppose the questions, their corresponding most relevant passage $p^*$'s, and semantically correct responses $r^*$ are subject to the distribution $\mathcal{D_{\text{Response}}}$. Given the error budget $\alpha_{\text{LLM}}$ and confidence level $1-\frac{\delta}{2}$, if Assumption~\ref{as:iid} and Assumption~\ref{as:chat} hold, the LLM sets using PAC prediction set satisfy the following inequality:
\begin{equation}
\Pr_{B \sim \mathcal{D_{\text{Response}}}^N}[\Pr_{(q, p^*, r^*) \sim \mathcal{D_{\text{Response}}}} (r^* \in C_{\text{LLM}}(q, p^*)) \geq 1-\alpha_{\text{LLM}}] \geq 1-\frac{\delta}{2}.
\label{eq:chat_pac}
\end{equation}
\label{co:chat_pac}
\end{lemma}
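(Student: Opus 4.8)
The plan is to mirror the proof of Lemma~\ref{co:chat}, replacing the appeal to the vanilla conformal guarantee (Theorem~\ref{th:cp}) with the PAC prediction set guarantee (Theorem~\ref{thm:pred_set}). The construction of the LLM set is identical to that in Section~\ref{sec:cp_sets} --- the same Monte Carlo clustering and the same nonconformity measure $N_i/M$ --- except that the threshold is now chosen via the PAC calibration rule \eqref{eqn:algorithm} at error level $\alpha_{\text{LLM}}$ and confidence level $1-\tfrac{\delta}{2}$, rather than via the quantile rule. So the entire argument reduces to verifying that the hypotheses of Theorem~\ref{thm:pred_set} are met and then instantiating it with the correct parameters.

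First I would check the i.i.d. hypothesis: by Assumption~\ref{as:iid}, the calibration triples $(q,p^*,r^*)$ drawn for the LLM task are i.i.d. with the test triple from $\mathcal{D_{\text{Response}}}$, which is exactly the sampling condition required by the PAC construction. Second, I would invoke Assumption~\ref{as:chat} to guarantee achievability: since a semantically correct response appears among the top-$M$ samples, there exists a threshold for which the correct cluster is included, so the feasibility set in \eqref{eqn:algorithm} is nonempty and the PAC set can in principle cover $r^*$. Third, after matching our construction to the scoring-function parameterization $C_\tau$ of Appendix~\ref{sec:pac} (noting only the sign convention, since our nonconformity measure is the negative confidence), I would apply Theorem~\ref{thm:pred_set} with its error parameter set to $\alpha_{\text{LLM}}$ and its confidence parameter set to $\tfrac{\delta}{2}$; this yields exactly \eqref{eq:chat_pac}. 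The companion statement Lemma~\ref{co:ret_pac} follows by the identical argument with $r^*$ replaced by $p^*$ and Assumption~\ref{as:chat} replaced by Assumption~\ref{as:ret}.

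I do not expect any genuine obstacle here, since the result is a direct specialization of an off-the-shelf PAC theorem; the work is bookkeeping rather than mathematics. The one point that deserves care is the choice of confidence level $1-\tfrac{\delta}{2}$ rather than $1-\delta$. This is not arbitrary: it is made so that, when the retriever and LLM PAC sets are later aggregated, a union bound over the two calibration-failure events (each of probability at most $\tfrac{\delta}{2}$) leaves an overall calibration-failure probability of at most $\delta$, delivering the end-to-end PAC analogue of Theorem~\ref{th:e2e} at confidence $1-\delta$. Thus the only thing to get right is that the $\delta$-budget is split evenly across the two components, in the same Bonferroni spirit in which the $\alpha$-budget is split as $\alpha=\alpha_{\text{Ret}}+\alpha_{\text{LLM}}$.
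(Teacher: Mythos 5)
Your proposal is correct and matches the paper's intended argument: the paper states this lemma without an explicit proof, treating it as the direct PAC analogue of Lemma~\ref{co:chat}, obtained by verifying the i.i.d.\ hypothesis (Assumption~\ref{as:iid}), using Assumption~\ref{as:chat} for achievability, and invoking Theorem~\ref{thm:pred_set} with error level $\alpha_{\text{LLM}}$ and confidence parameter $\tfrac{\delta}{2}$ --- exactly your three steps. Your remark on why the confidence budget is split as $\tfrac{\delta}{2}$ per component also agrees with the paper's proof of Theorem~\ref{th:e2e_pac}, which takes a union bound over the two outer calibration-failure events.
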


\begin{theorem}
Suppose the questions $q$'s, and semantically correct responses $r^*$'s are subject to the distribution $\mathcal{D}$; a user-specified error level $\alpha$ is given. By aggregating retriever sets with error budget $\alpha_{\text{Ret}}$ with LLM sets with error budget $\alpha_{\text{LLM}}$ and confidence levels $1-\delta/2$, with $\alpha=\alpha_{\text{Ret}} + \alpha_{\text{LLM}}$, the aggregation sets satisfy the following inequality:
$$\Pr_{B \sim \mathcal{D}}[\Pr_{(q, r^*) \sim \mathcal{D}} (r^* \in C_{\text{Agg}}(q)) \geq 1-\alpha] \geq 1-\delta.$$
\label{th:e2e_pac}
\end{theorem}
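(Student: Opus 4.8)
The plan is to lift the single-draw union-bound argument from the proof of Theorem~\ref{th:e2e} to the two-level probability structure of the PAC guarantee, applying a Bonferroni split at the confidence level $\delta$ that mirrors the split of $\alpha$ into $\alpha_{\text{Ret}} + \alpha_{\text{LLM}}$. First I would define the two ``good calibration'' events. Let $G_{\text{Ret}}$ denote the event, over the draw of the retriever calibration set, that the retriever coverage holds, i.e. $\Pr_{(q,p^*)}(p^* \in C_{\text{Ret}}(q)) \geq 1 - \alpha_{\text{Ret}}$, and let $G_{\text{LLM}}$ denote the analogous event that $\Pr_{(q,p^*,r^*)}(r^* \in C_{\text{LLM}}(q,p^*)) \geq 1 - \alpha_{\text{LLM}}$. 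By Lemma~\ref{co:ret_pac} and Lemma~\ref{co:chat_pac}, each of these events has probability at least $1 - \delta/2$ over the corresponding calibration draw.

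Second, I would apply a union bound over the calibration randomness to control the two failure events simultaneously:
\begin{equation*}
\Pr(G_{\text{Ret}}^C \cup G_{\text{LLM}}^C) \leq \Pr(G_{\text{Ret}}^C) + \Pr(G_{\text{LLM}}^C) \leq \frac{\delta}{2} + \frac{\delta}{2} = \delta,
\end{equation*}
so that $\Pr(G_{\text{Ret}} \cap G_{\text{LLM}}) \geq 1 - \delta$. Note that this step does not require the two calibration sets to be independent, since the union bound is valid for arbitrary dependence; this is what lets the argument go through whether the retriever and LLM are calibrated on shared or separate draws.

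Third, I would argue that conditioned on $G_{\text{Ret}} \cap G_{\text{LLM}}$, the sets $C_{\text{Ret}}$ and $C_{\text{LLM}}$ are frozen with the two coverage properties $\Pr(p^* \in C_{\text{Ret}}(q)) \geq 1 - \alpha_{\text{Ret}}$ and $\Pr(r^* \in C_{\text{LLM}}(q,p^*)) \geq 1 - \alpha_{\text{LLM}}$ holding over fresh test points. These are exactly the hypotheses used in the proof of Theorem~\ref{th:e2e}, so the identical chain of inequalities --- decomposing $\Pr(r^* \notin C_{\text{Agg}}(q))$ by conditioning on the retriever-failure event $A = \{p^* \notin C_{\text{Ret}}(q)\}$, bounding the $A$-branch by $\Pr(A) \leq \alpha_{\text{Ret}}$ and the $A^C$-branch by $\Pr(r^* \notin C_{\text{LLM}}(q,p^*)) \leq \alpha_{\text{LLM}}$ --- yields $\Pr(r^* \in C_{\text{Agg}}(q)) \geq 1 - \alpha_{\text{Ret}} - \alpha_{\text{LLM}} = 1 - \alpha$. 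Combining with the second step, the inner coverage holds at level $1 - \alpha$ on an event of calibration probability at least $1 - \delta$, which is precisely the claimed PAC bound.

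The main obstacle I anticipate is bookkeeping rather than any deep argument: one must keep straight which randomness is over calibration draws (the outer $\delta$) and which is over test points (the inner $\alpha$), and verify that the Theorem~\ref{th:e2e} inequality chain remains valid verbatim once the prediction sets are treated as deterministic on a good calibration set. The conceptual point worth emphasizing is that the proof applies the Bonferroni/union-bound principle twice in parallel --- once to split the confidence budget $\delta$ across the two calibration events, and once to split the error budget $\alpha$ across the two test-time coverage failures --- and both splits are dependence-free, so no independence assumptions between the retriever and generator calibration are needed.
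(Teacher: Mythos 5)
Your proposal is correct and follows essentially the same route as the paper's (much terser) proof: a union bound over the two calibration-level failure events to obtain the outer $1-\delta$ confidence, combined with the inner $1-\alpha$ coverage argument inherited verbatim from Theorem~\ref{th:e2e}. The only difference is that you spell out the conditioning on the good-calibration event and the dependence-free nature of the union bound, details the paper leaves implicit.
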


\begin{proof}[Proof of Theorem~\ref{th:e2e_pac}]
Given Lemmas~\ref{co:ret_pac} \&~\ref{co:chat_pac} and $\alpha_{\text{Ret}}+\alpha_{\text{LLM}}=\alpha$, we can prove the end-to-end guarantee in the following way:
the $1-\alpha$ coverage guarantee can be proved as the proof of Theorem~\ref{th:e2e}. The confidence bound holds ($1-\delta$) by taking a union bound over the outer probabilities of Equation~\eqref{eq:ret_pac} and~\eqref{eq:chat_pac}.
\label{pf:e2e_pac}
\end{proof}

\section{Additional Results}
\subsection{Individual Coverage}

\begin{figure}[H] 
\begin{subfigure}{0.32\textwidth}
\includegraphics[width=\linewidth]{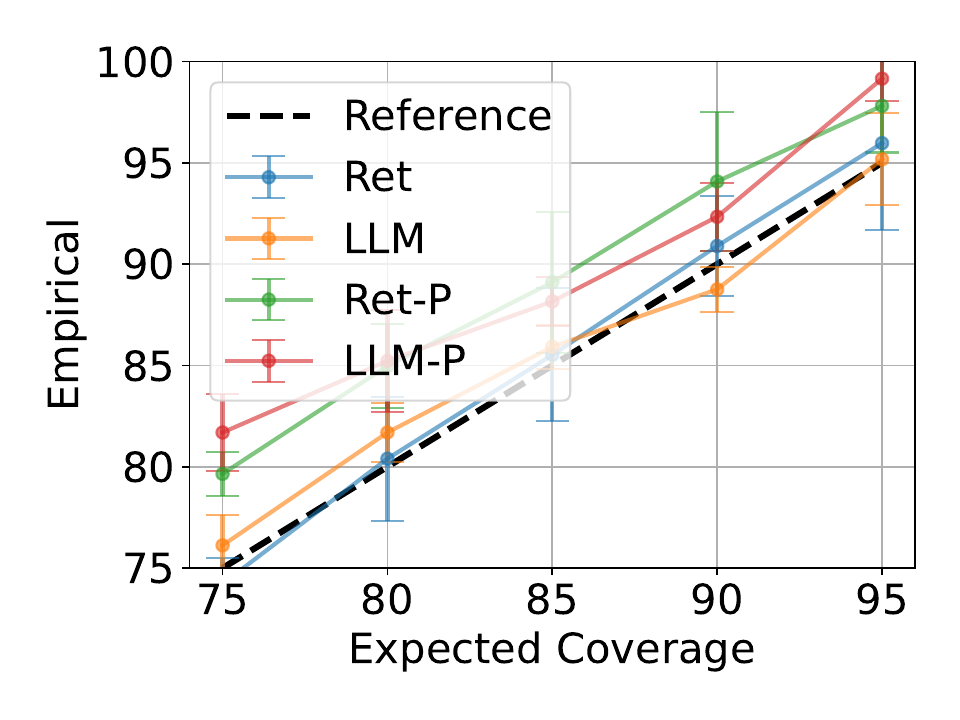}
\caption{Natural Question} \label{fig:a_individual}
\end{subfigure}\hspace*{\fill}
\begin{subfigure}{0.32\textwidth}
\includegraphics[width=\linewidth]{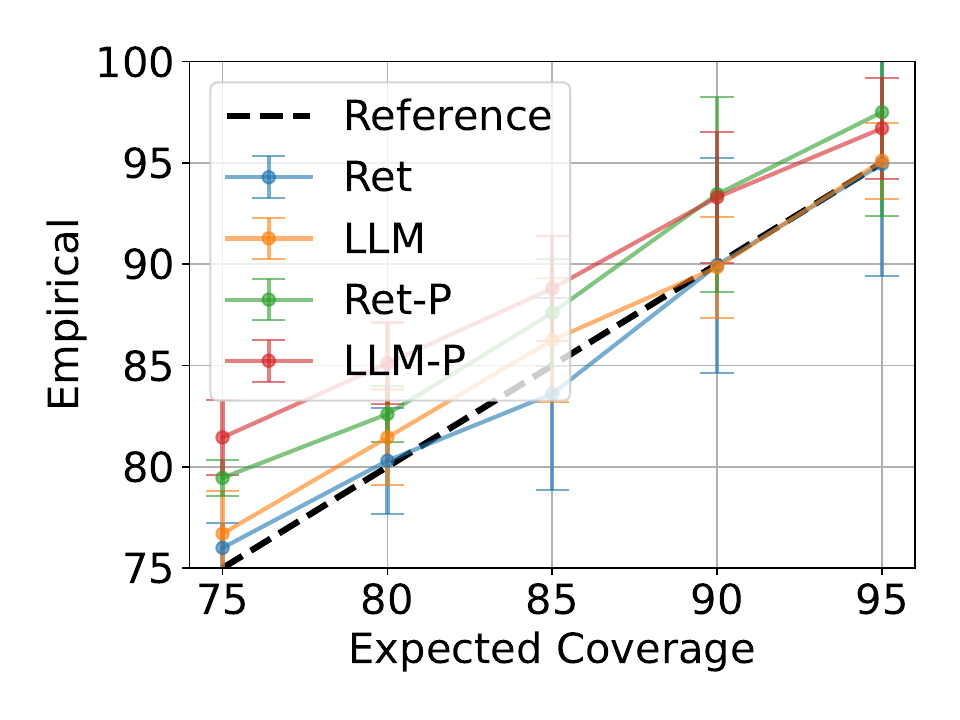}
\caption{TriviaQA} \label{fig:b_individual}
\end{subfigure}\hspace*{\fill}
\begin{subfigure}{0.32\textwidth}
\includegraphics[width=\linewidth]{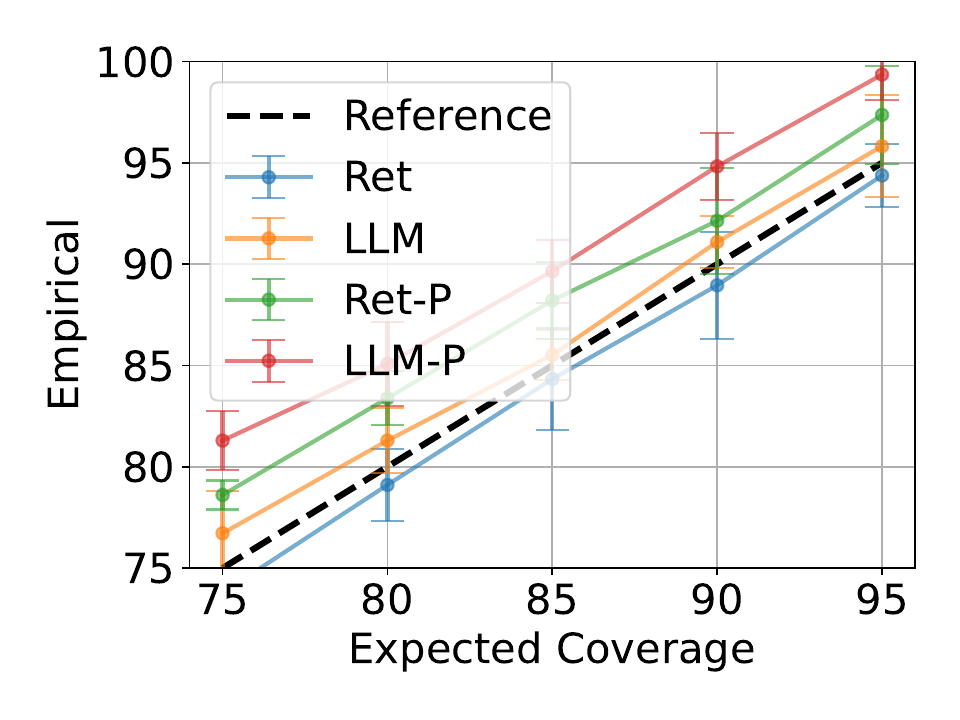}
\caption{SQuAD-1} \label{fig:c_individual}
\end{subfigure}

\medskip
\begin{subfigure}{0.32\textwidth}
\includegraphics[width=\linewidth]{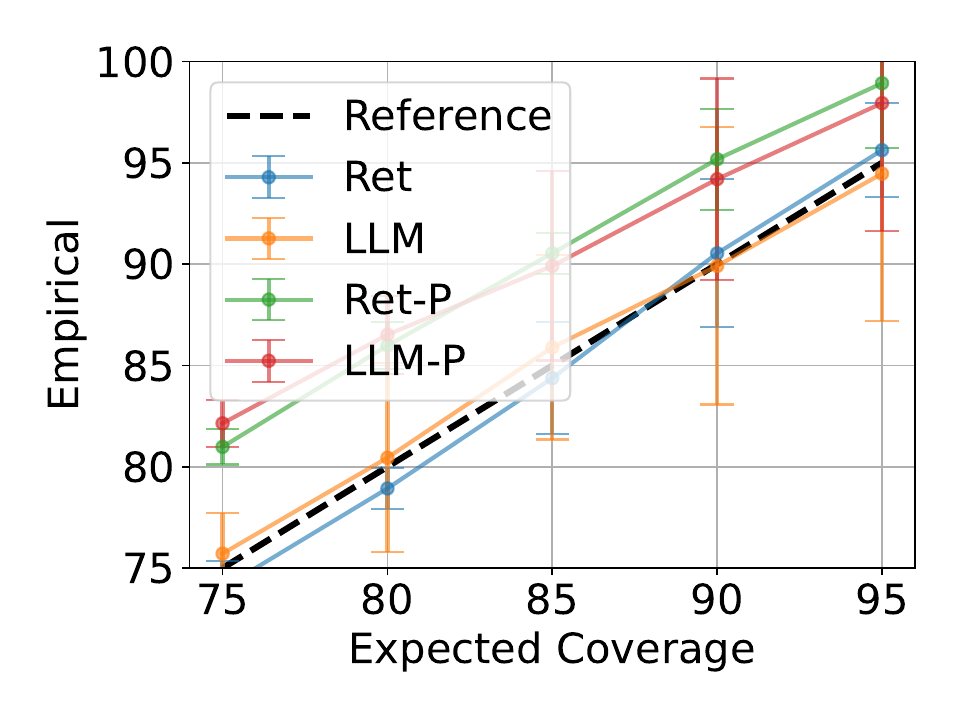}
\caption{Natural Question} \label{fig:d_individual}
\end{subfigure}\hspace*{\fill}
\begin{subfigure}{0.32\textwidth}
\includegraphics[width=\linewidth]{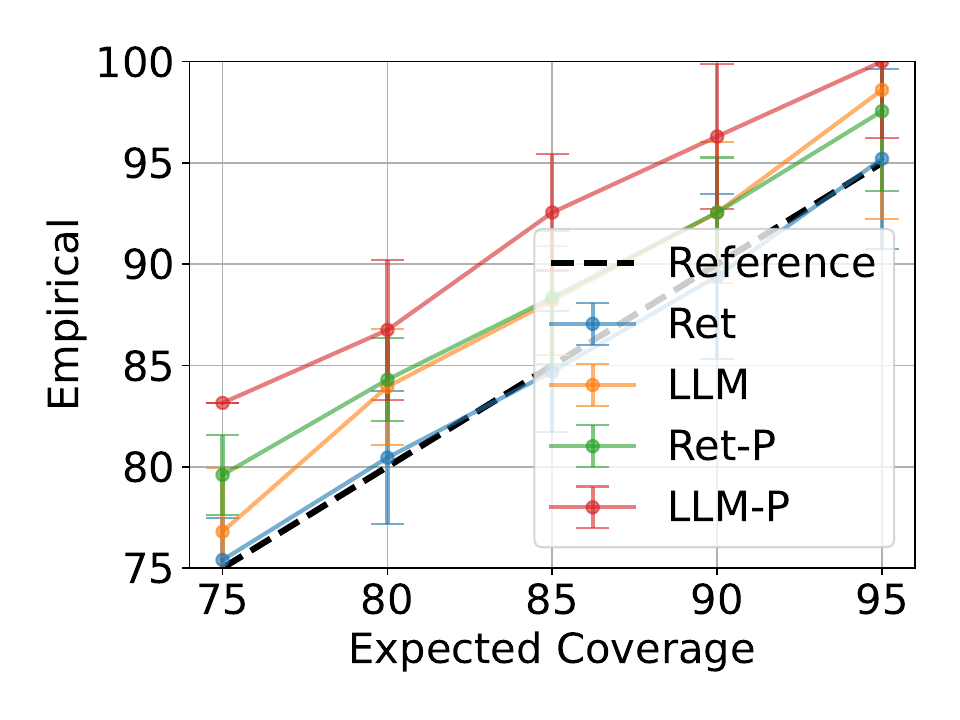}
\caption{TriviaQA} \label{fig:e_individual}
\end{subfigure}\hspace*{\fill}
\begin{subfigure}{0.32\textwidth}
\includegraphics[width=\linewidth]{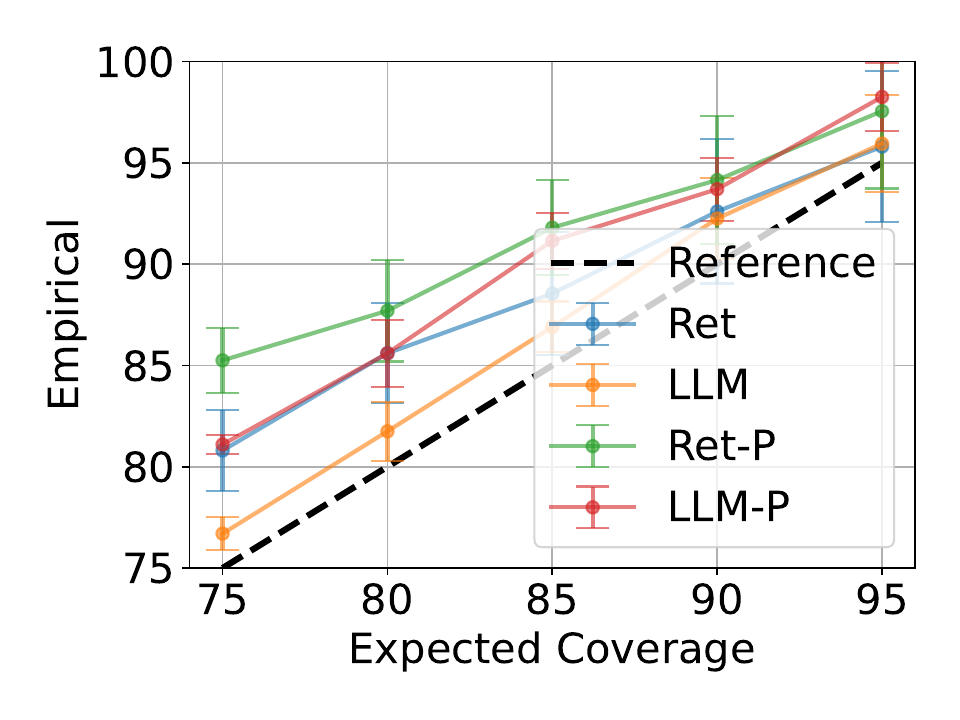}
\caption{SQuAD-1} \label{fig:f_individual}
\end{subfigure}

\caption{Individual coverages on all datasets using GPT-3.5 (first row) and Llama-2 (second row).} \label{fig:additional_individual}
\end{figure}

\subsection{Individual Coverage with More Random Seeds}

\begin{figure}[H] 
\begin{subfigure}{0.23\textwidth}
\includegraphics[width=\linewidth]{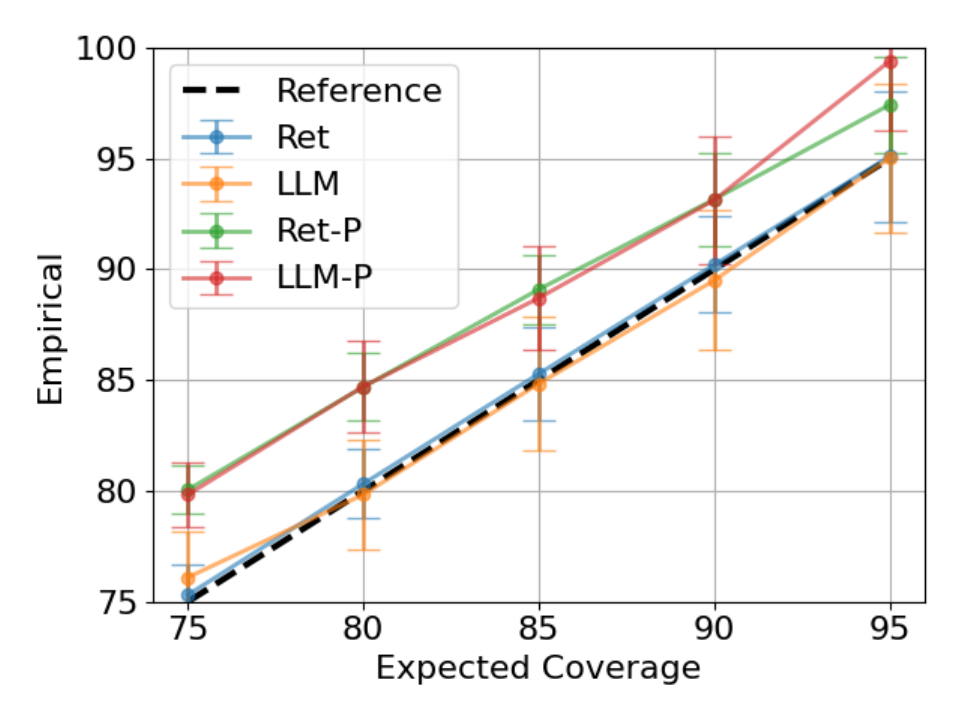}
\caption{BioASQ} \label{fig:a_more}
\end{subfigure}\hspace*{\fill}
\begin{subfigure}{0.23\textwidth}
\includegraphics[width=\linewidth]{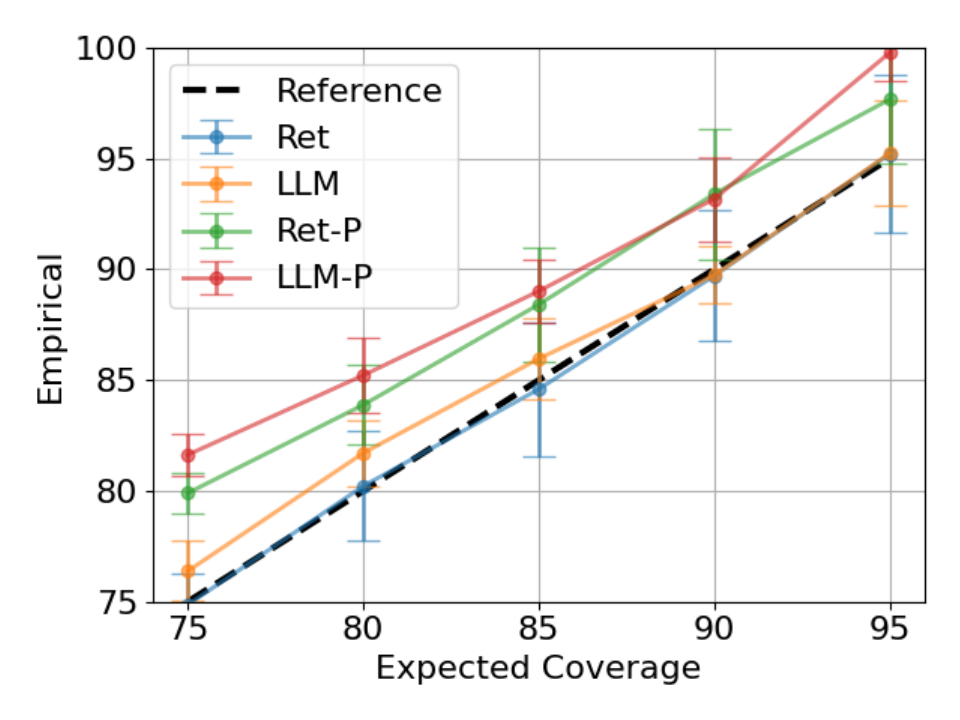}
\caption{Natural Question} \label{fig:b_more}
\end{subfigure}\hspace*{\fill}
\begin{subfigure}{0.23\textwidth}
\includegraphics[width=\linewidth]{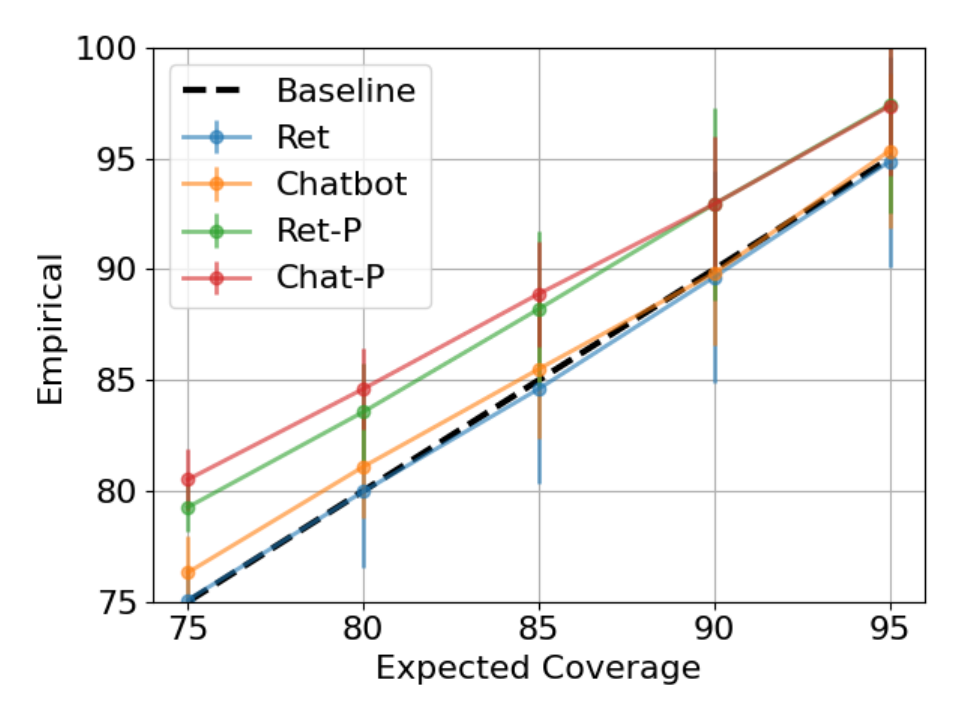}
\caption{TriviaQA} \label{fig:c_more}
\end{subfigure}\hspace*{\fill}
\begin{subfigure}{0.23\textwidth}
\includegraphics[width=\linewidth]{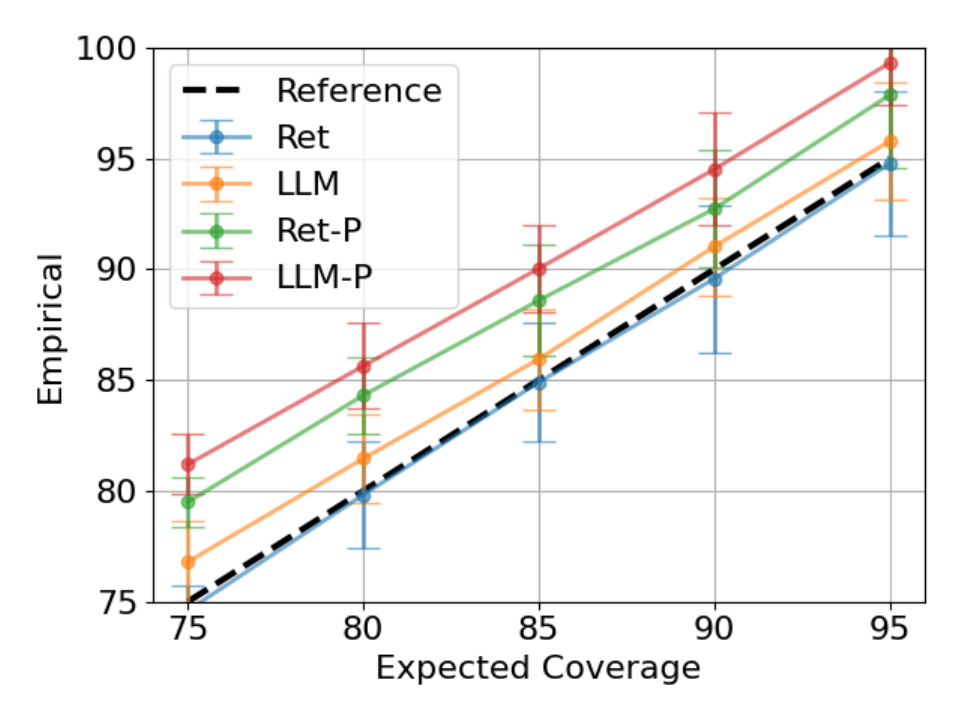}
\caption{SQuAD-1 using GPT-3.5} \label{fig:d_more}
\end{subfigure}

\medskip
\begin{subfigure}{0.23\textwidth}
\includegraphics[width=\linewidth]{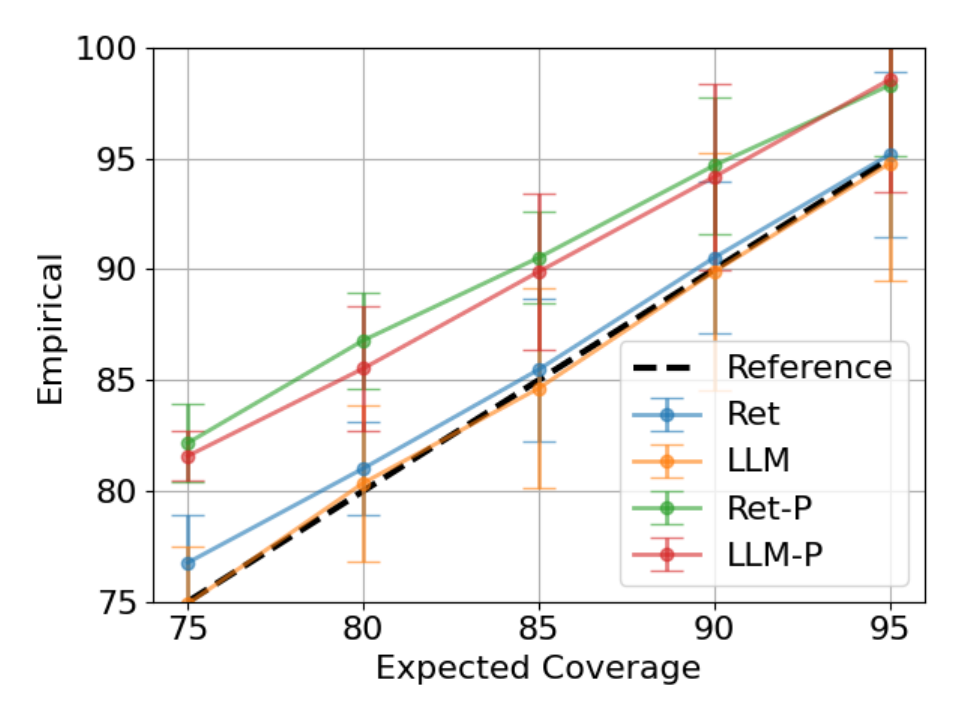}
\caption{Natural Question} \label{fig:e_more}
\end{subfigure}\hspace*{\fill}
\begin{subfigure}{0.23\textwidth}
\includegraphics[width=\linewidth]{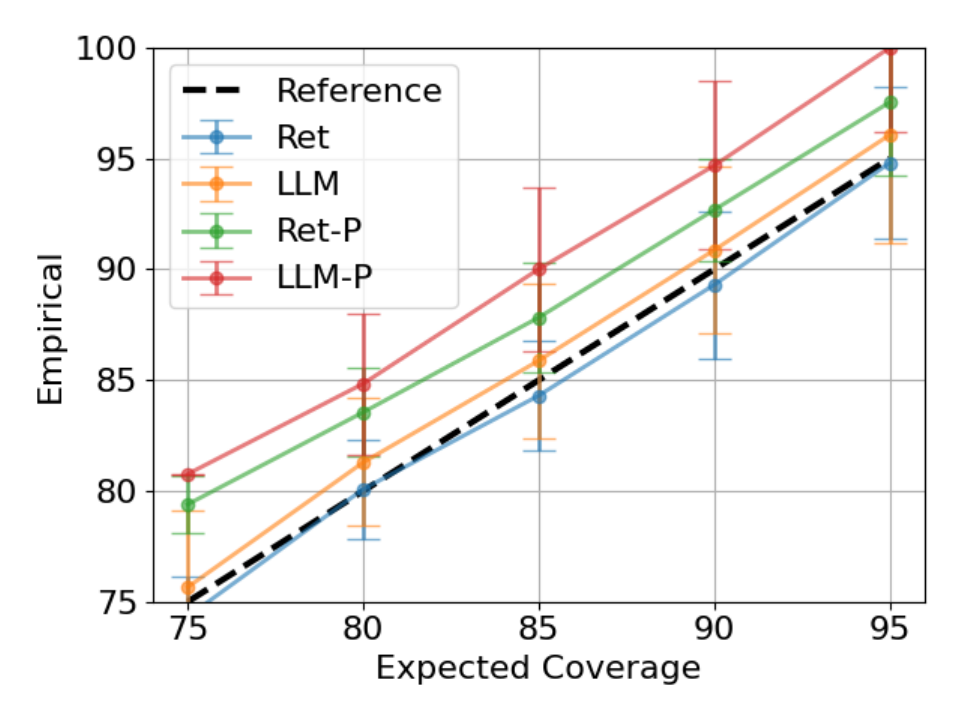}
\caption{TriviaQA} \label{fig:f_more}
\end{subfigure}\hspace*{\fill}
\begin{subfigure}{0.23\textwidth}
\includegraphics[width=\linewidth]{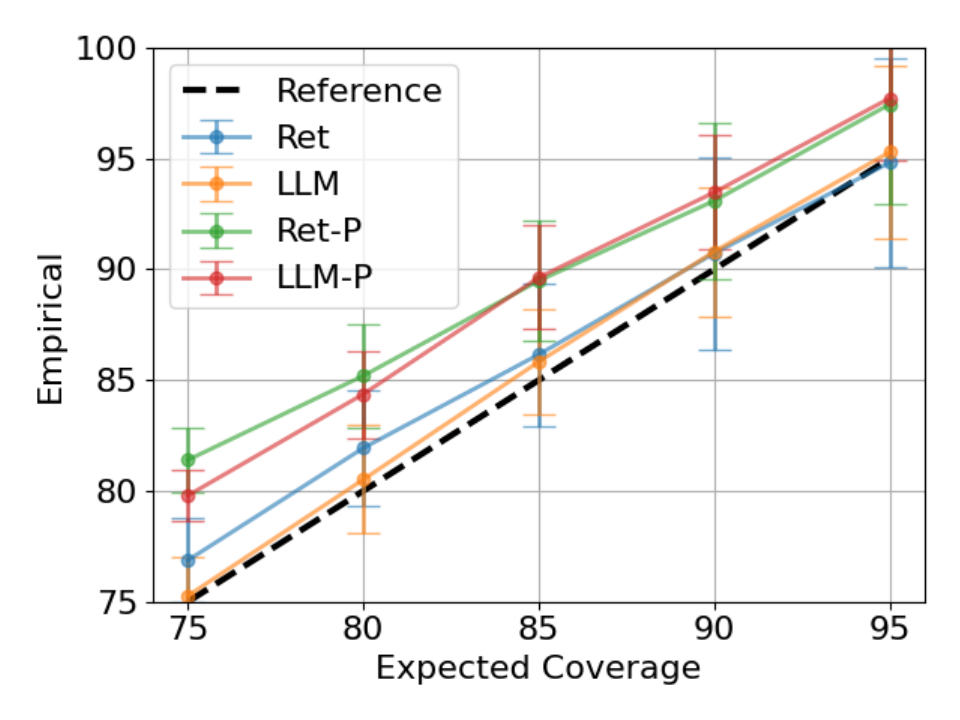}
\caption{SQuAD-1} \label{fig:g_more}
\end{subfigure}

\caption{Individual coverages on all Datasets using GPT-3.5 (first row) and Llama-2 (second row) with 20 random seeds.} \label{fig:additional_individual_20}
\end{figure}

\subsection{End-to-end Coverages}

\begin{figure}[H] 
\begin{subfigure}{0.32\textwidth}
\includegraphics[width=\linewidth]{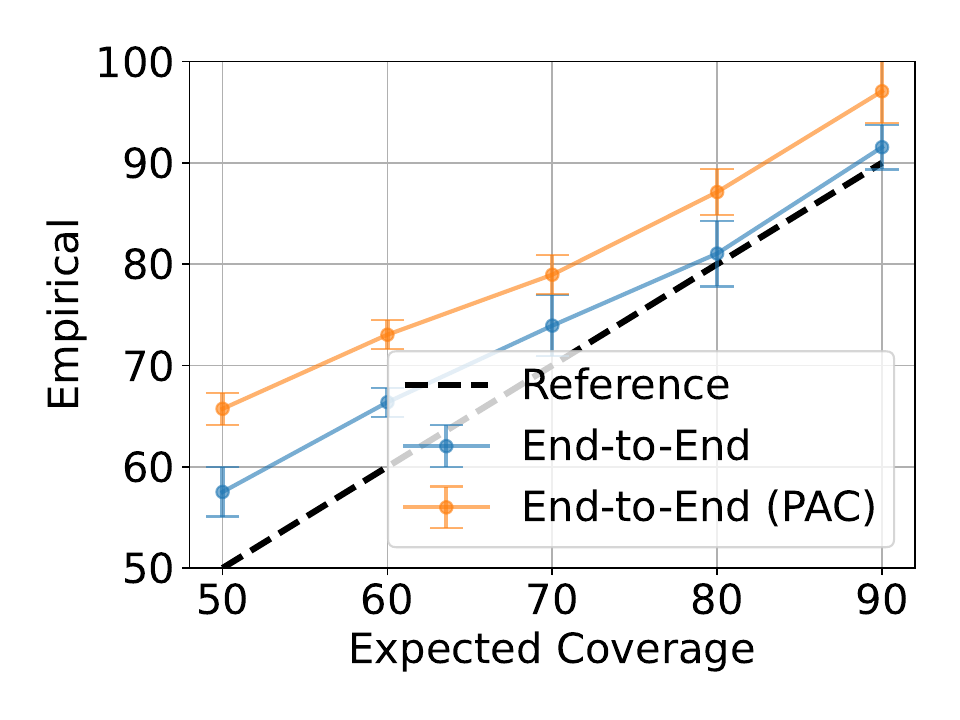}
\caption{Natural Question} \label{fig:e2e1_a}
\end{subfigure}\hspace*{\fill}
\begin{subfigure}{0.32\textwidth}
\includegraphics[width=\linewidth]{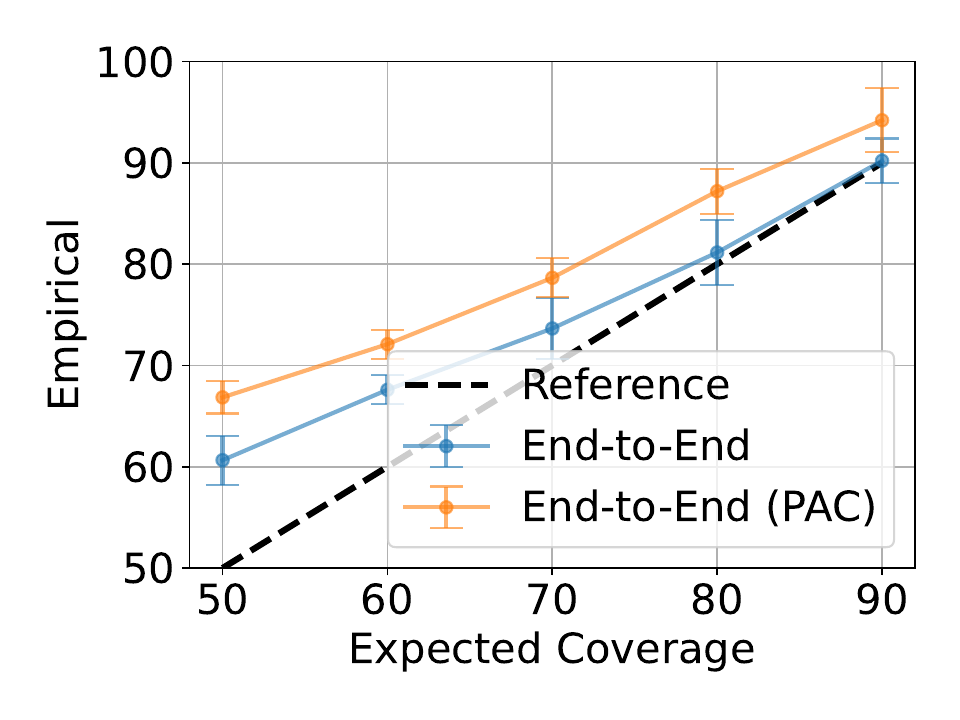}
\caption{TriviaQA} \label{fig:e2e1_b}
\end{subfigure}\hspace*{\fill}
\begin{subfigure}{0.32\textwidth}
\includegraphics[width=\linewidth]{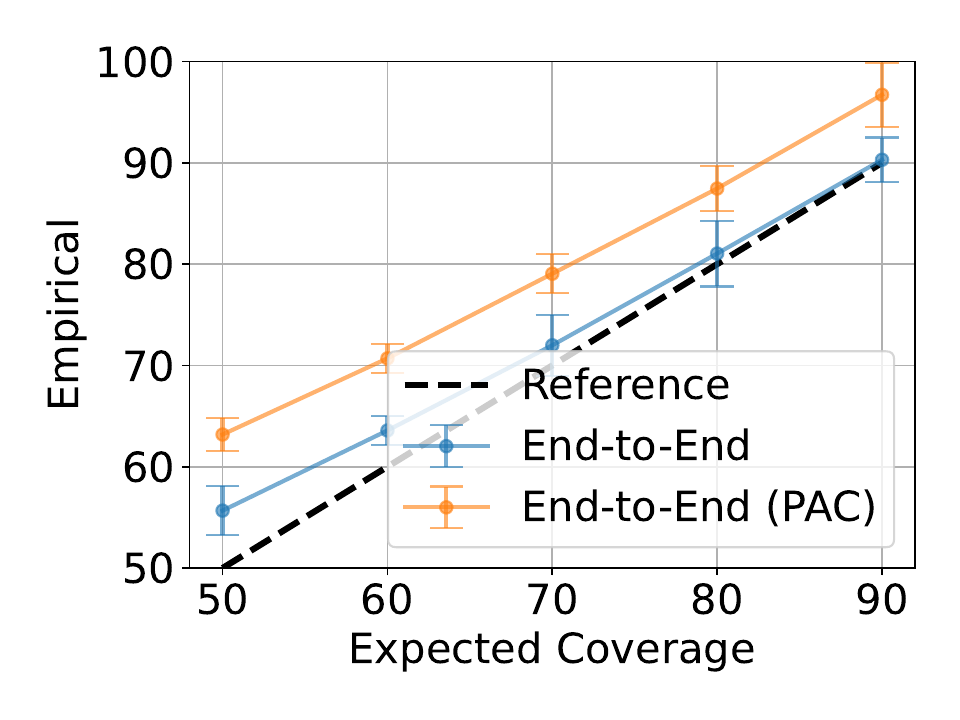}
\caption{SQuAD-1} \label{fig:e2e1_c}
\end{subfigure}

\medskip
\begin{subfigure}{0.32\textwidth}
\includegraphics[width=\linewidth]{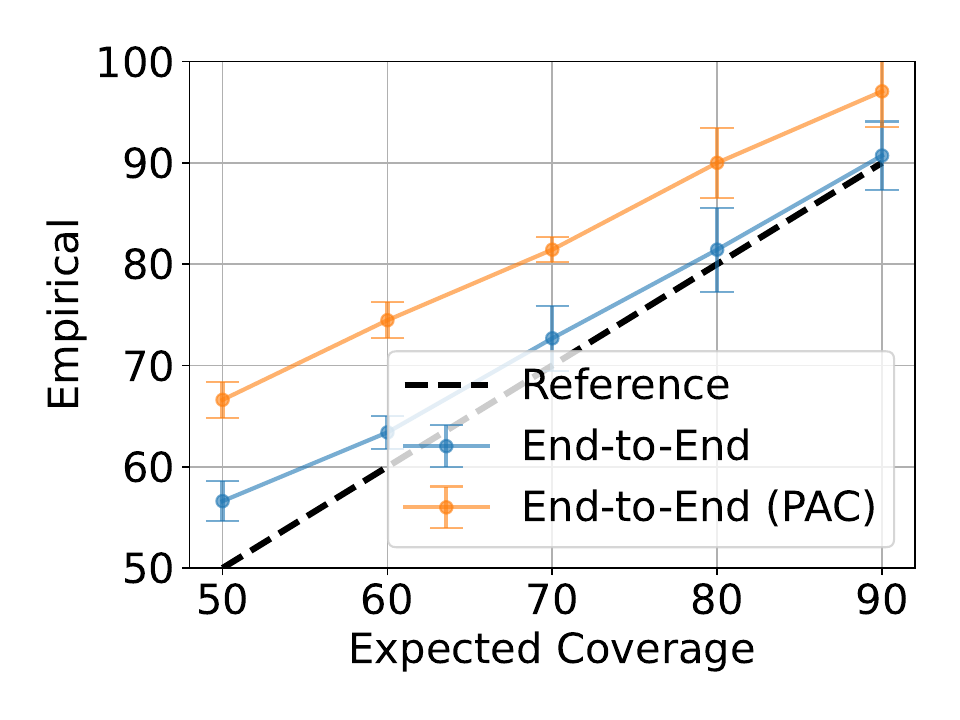}
\caption{Natural Question} \label{fig:e2e1_d}
\end{subfigure}\hspace*{\fill}
\begin{subfigure}{0.32\textwidth}
\includegraphics[width=\linewidth]{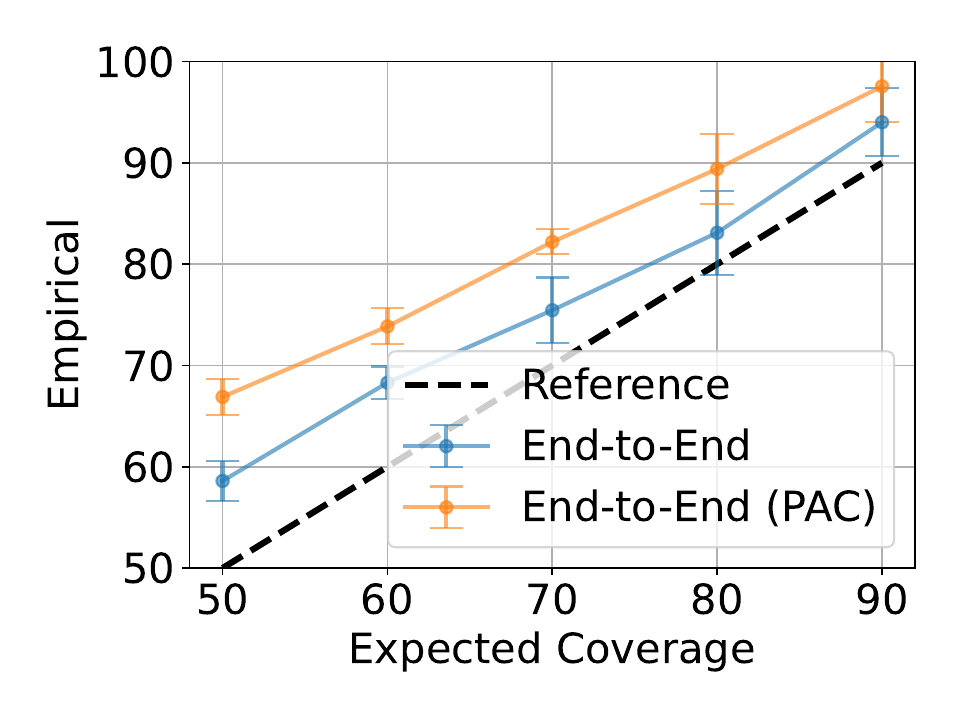}
\caption{TriviaQA} \label{fig:e2e1_e}
\end{subfigure}\hspace*{\fill}
\begin{subfigure}{0.32\textwidth}
\includegraphics[width=\linewidth]{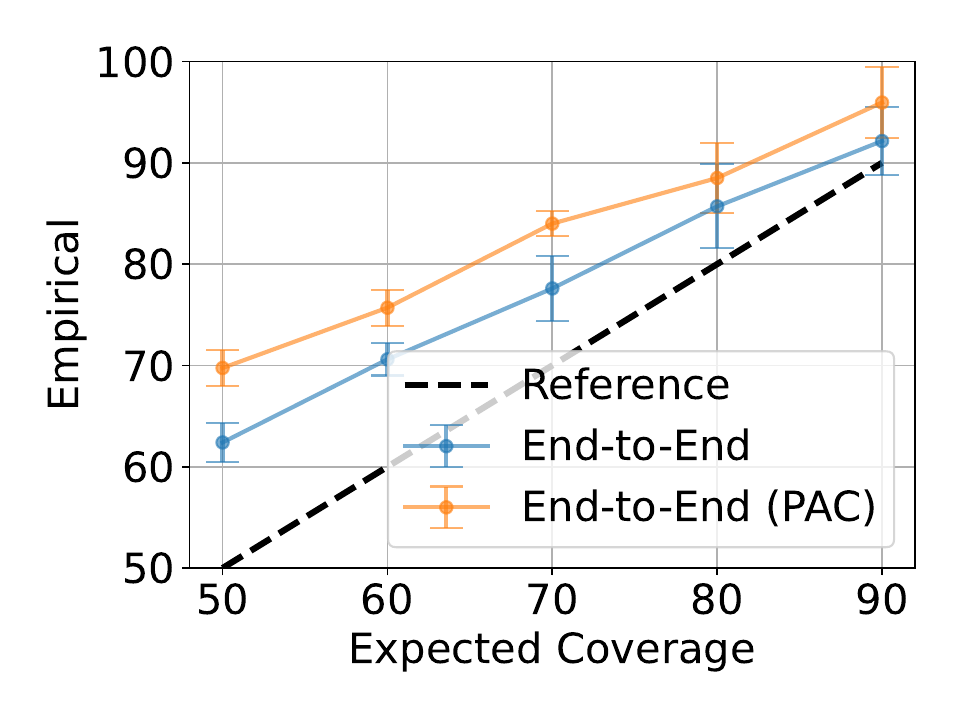}
\caption{SQuAD-1} \label{fig:e2e1_f}
\end{subfigure}

\caption{End-to-end coverage considering only the most relevant passage on all datasets using GPT-3.5 (first row) and Llama-2 (second row).} \label{fig:additional_e2e1}
\end{figure}

\subsection{End-to-end Coverages}

\begin{figure}[H] 
\begin{subfigure}{0.32\textwidth}
\includegraphics[width=\linewidth]{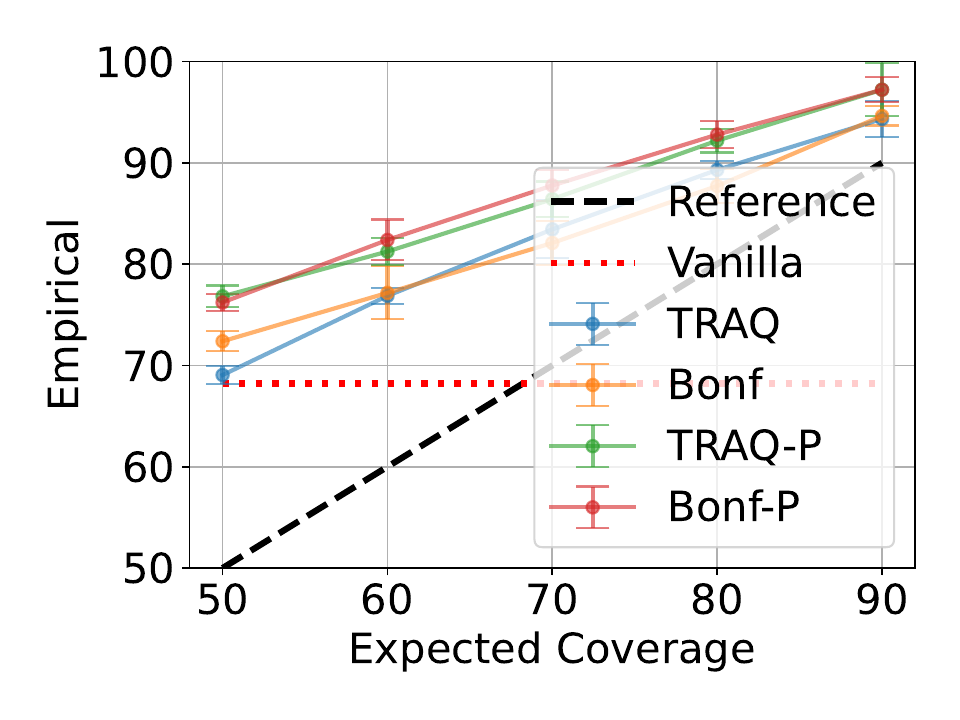}
\caption{Natural Question} \label{fig:e2e2_a}
\end{subfigure}\hspace*{\fill}
\begin{subfigure}{0.32\textwidth}
\includegraphics[width=\linewidth]{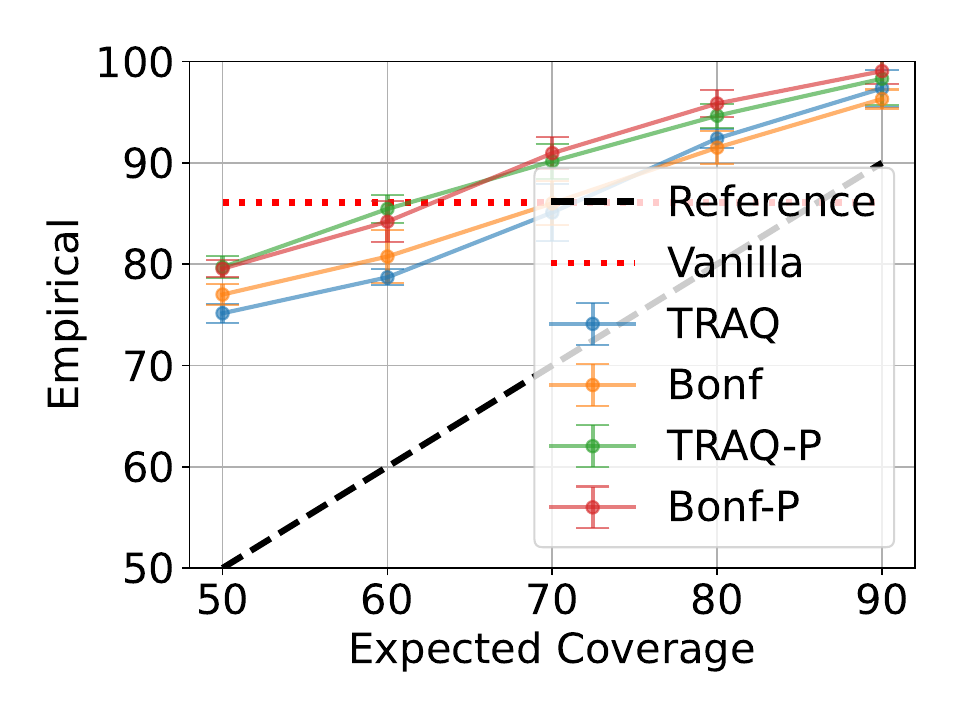}
\caption{TriviaQA} \label{fig:e2e2_b}
\end{subfigure}\hspace*{\fill}
\begin{subfigure}{0.32\textwidth}
\includegraphics[width=\linewidth]{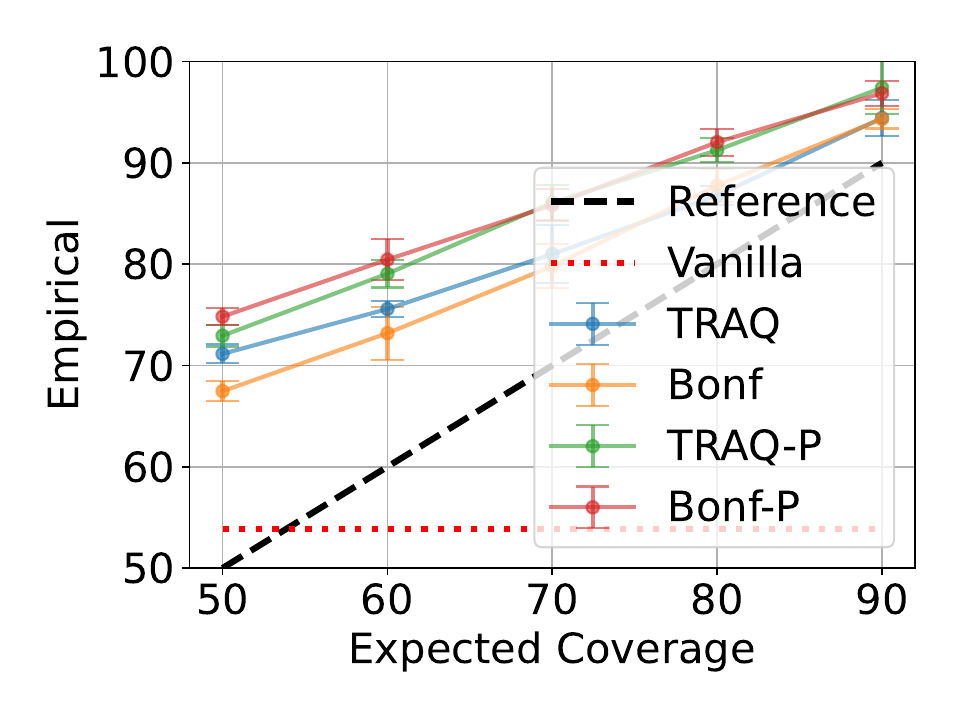}
\caption{SQuAD-1} \label{fig:e2e2_c}
\end{subfigure}

\medskip
\begin{subfigure}{0.32\textwidth}
\includegraphics[width=\linewidth]{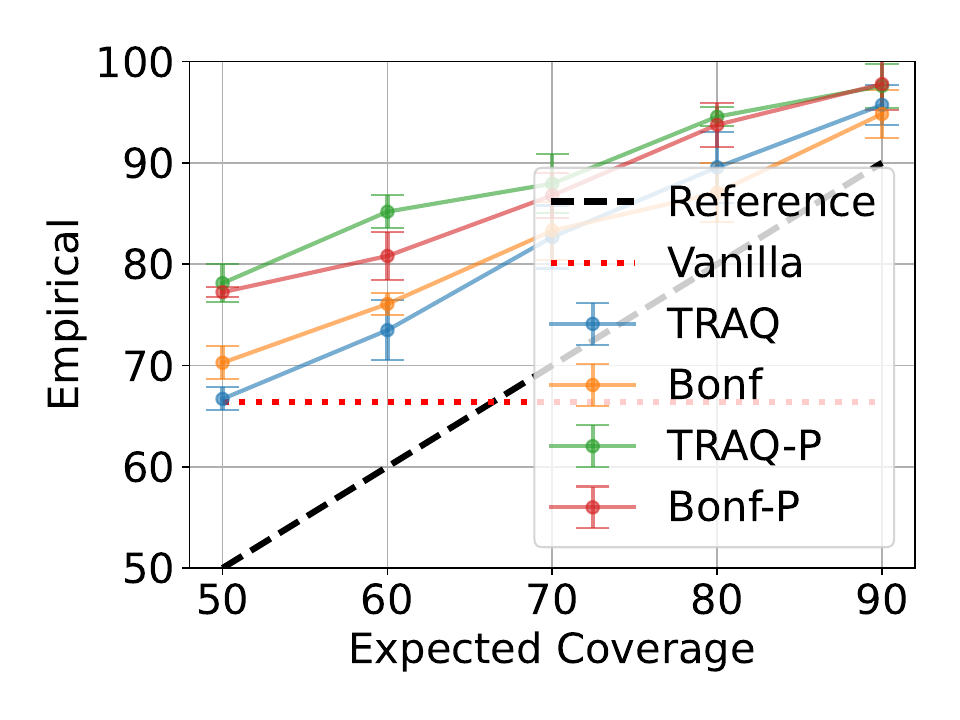}
\caption{Natural Question} \label{fig:e2e2_d}
\end{subfigure}\hspace*{\fill}
\begin{subfigure}{0.32\textwidth}
\includegraphics[width=\linewidth]{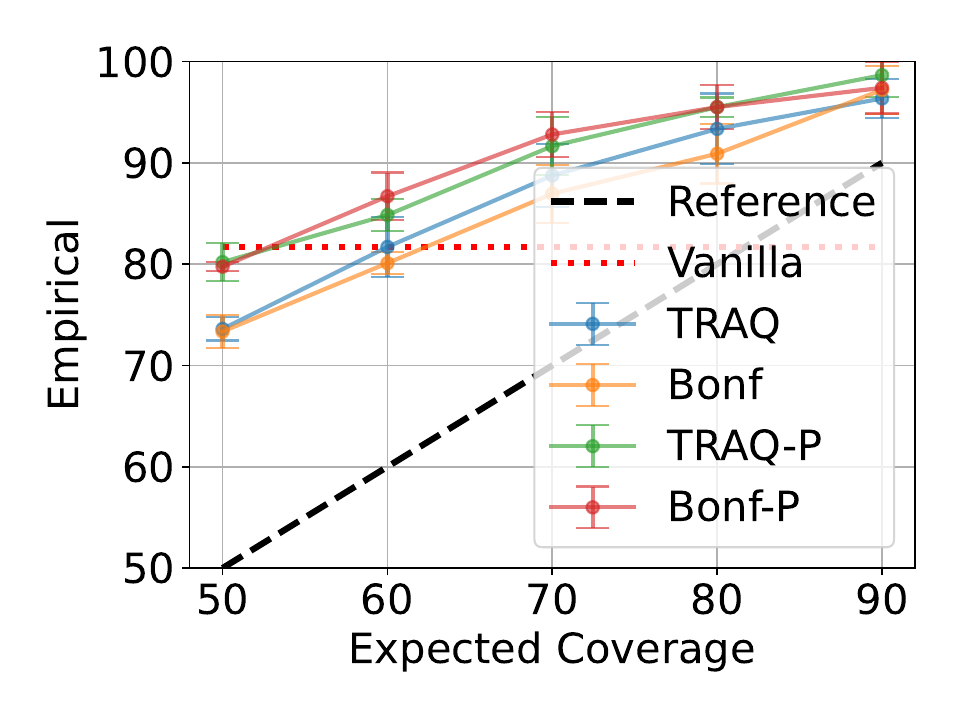}
\caption{TriviaQA} \label{fig:e2e2_e}
\end{subfigure}\hspace*{\fill}
\begin{subfigure}{0.32\textwidth}
\includegraphics[width=\linewidth]{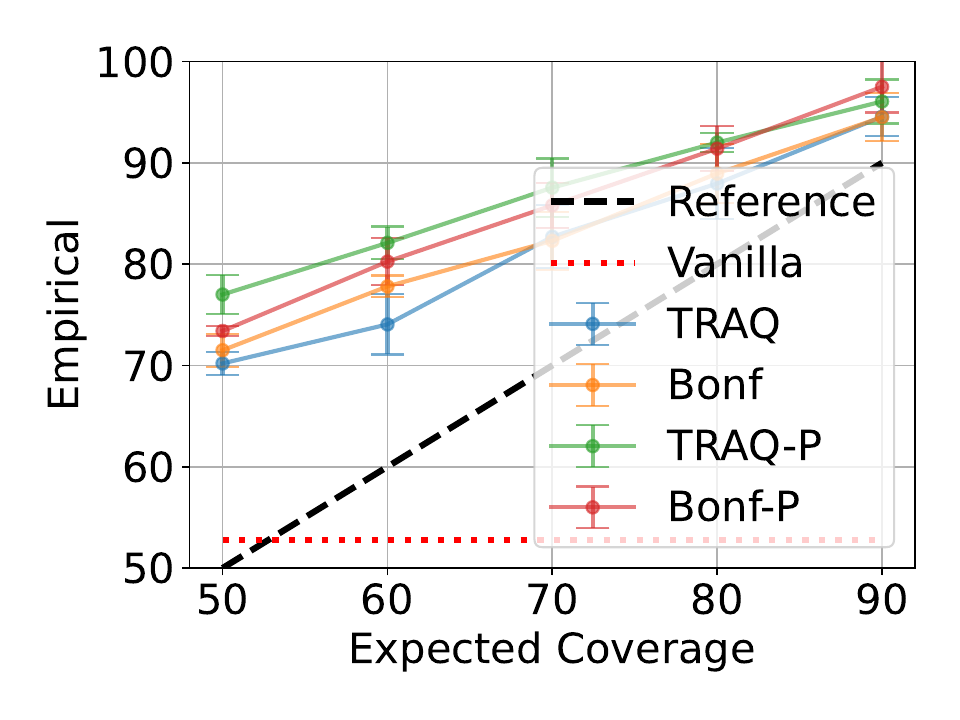}
\caption{SQuAD-1} \label{fig:e2e2_f}
\end{subfigure}

\caption{End-to-end coverage considering all passages on all datasets using GPT-3.5 (first row) and Llama-2 (second row).} \label{fig:additional_e2e2}
\end{figure}

\subsection{Performance}

Most of the results are similar to those in Figure~\ref{fig:avg_size}. The results on TriviaQA using Llama-2 have a relatively large prediction set size. This could be explained by the fact that the true scores on this task have a large variance. Therefore, the identified threshold $\tau_{\text{LLM}}$ was relatively low (as in Figure~\ref{fig:true_a} compared to other tasks (as in Figure~\ref{fig:true_b}).

\begin{figure}[H] 
\begin{subfigure}{0.32\textwidth}
\includegraphics[width=\linewidth]{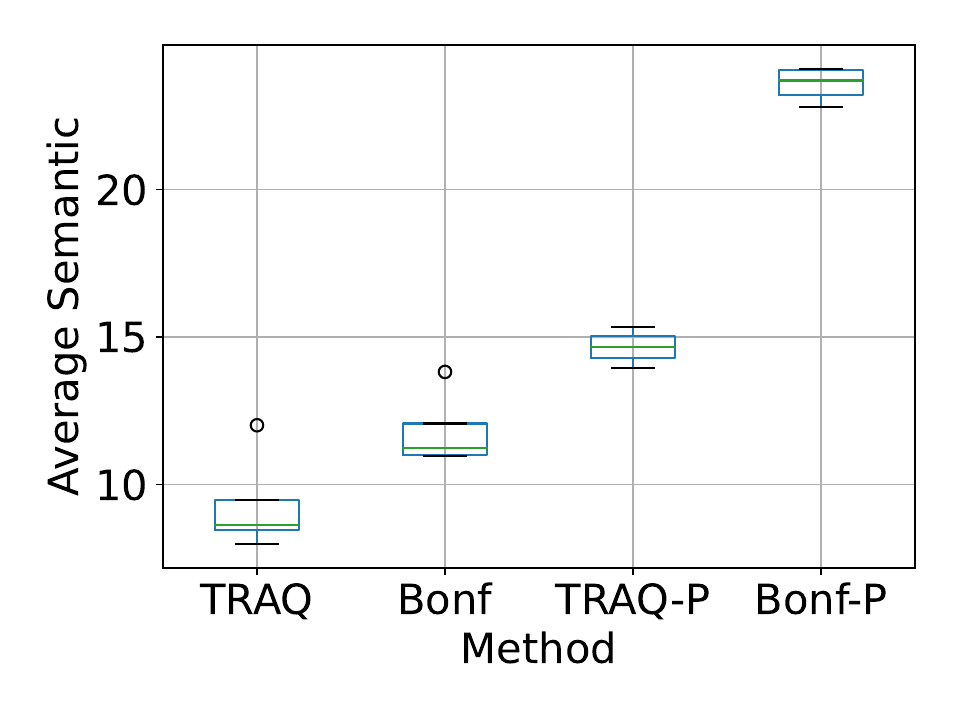}
\caption{Natural Question} \label{fig:eff_a}
\end{subfigure}\hspace*{\fill}
\begin{subfigure}{0.32\textwidth}
\includegraphics[width=\linewidth]{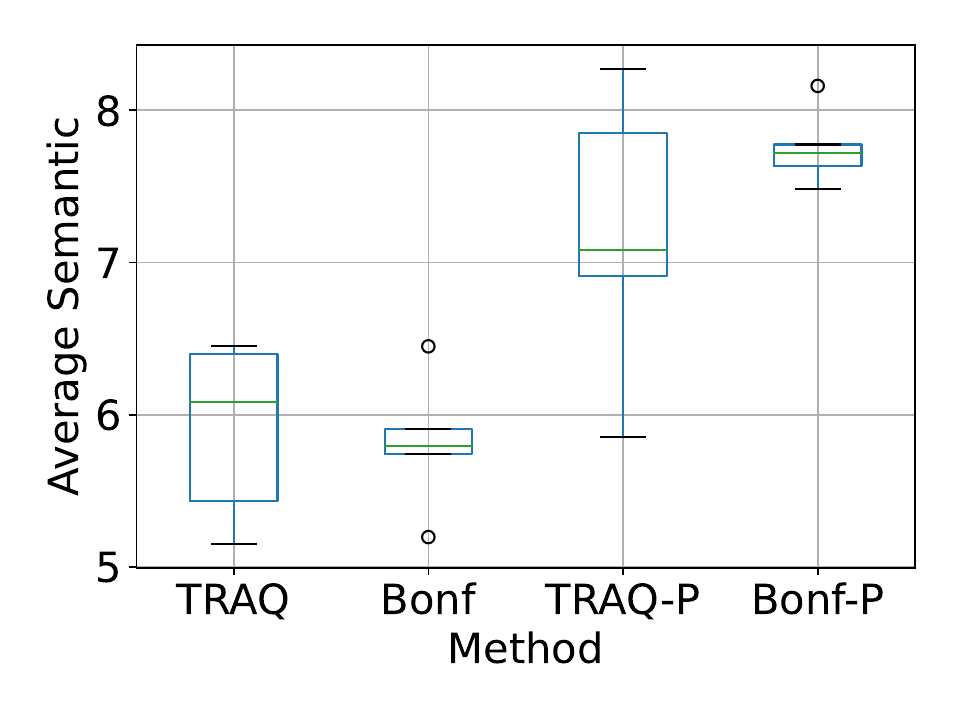}
\caption{TriviaQA} \label{fig:eff_b}
\end{subfigure}\hspace*{\fill}
\begin{subfigure}{0.32\textwidth}
\includegraphics[width=\linewidth]{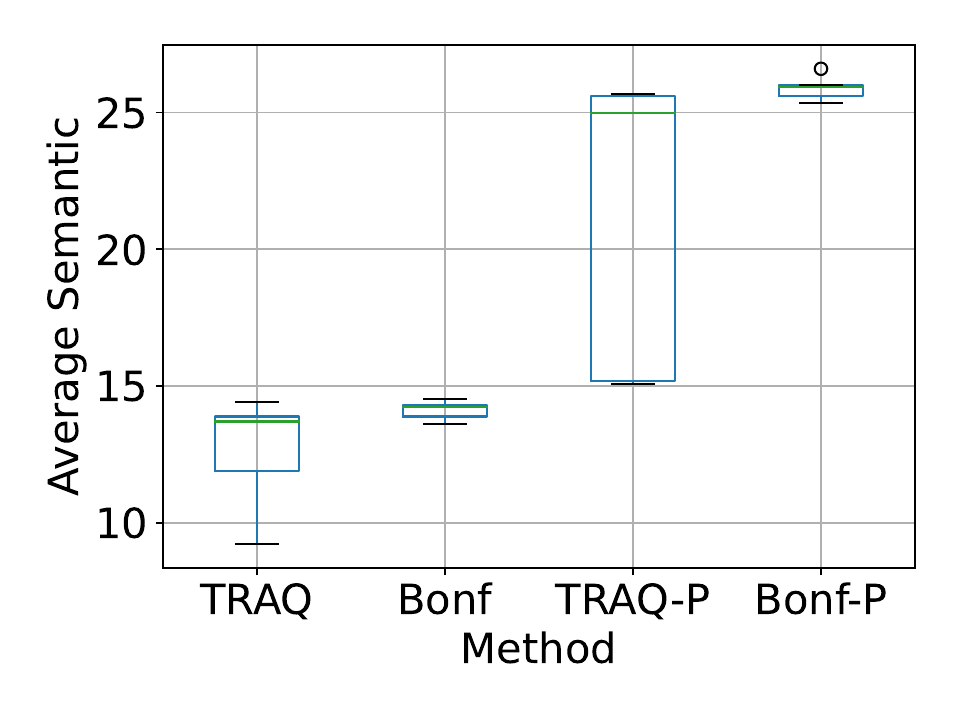}
\caption{SQuAD-1} \label{fig:eff_c}
\end{subfigure}

\medskip
\begin{subfigure}{0.32\textwidth}
\includegraphics[width=\linewidth]{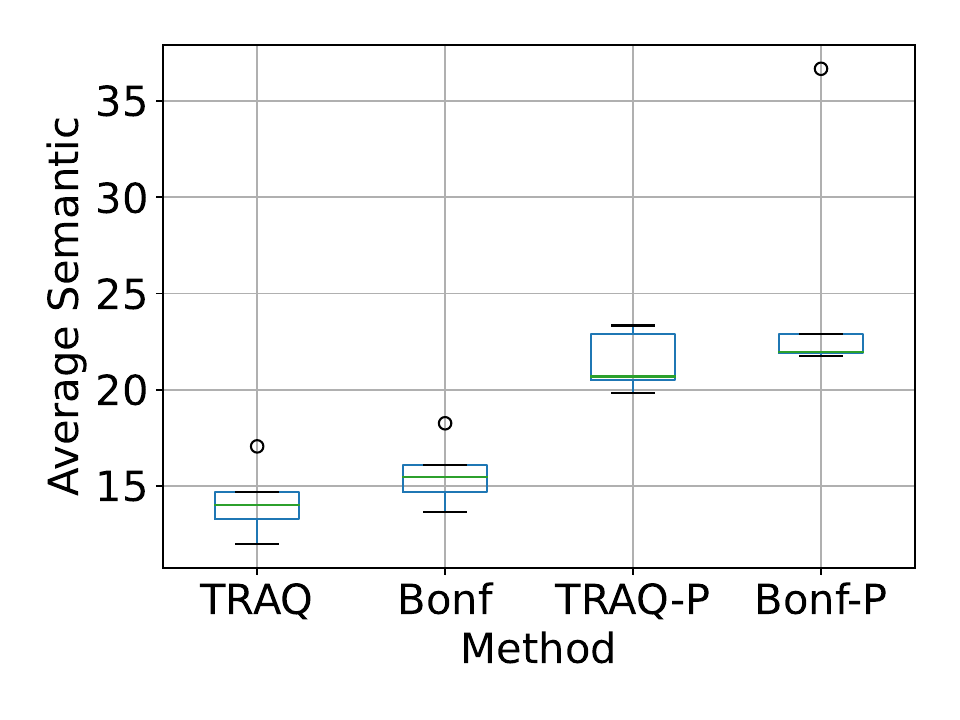}
\caption{Natural Question} \label{fig:eff_d}
\end{subfigure}\hspace*{\fill}
\begin{subfigure}{0.32\textwidth}
\includegraphics[width=\linewidth]{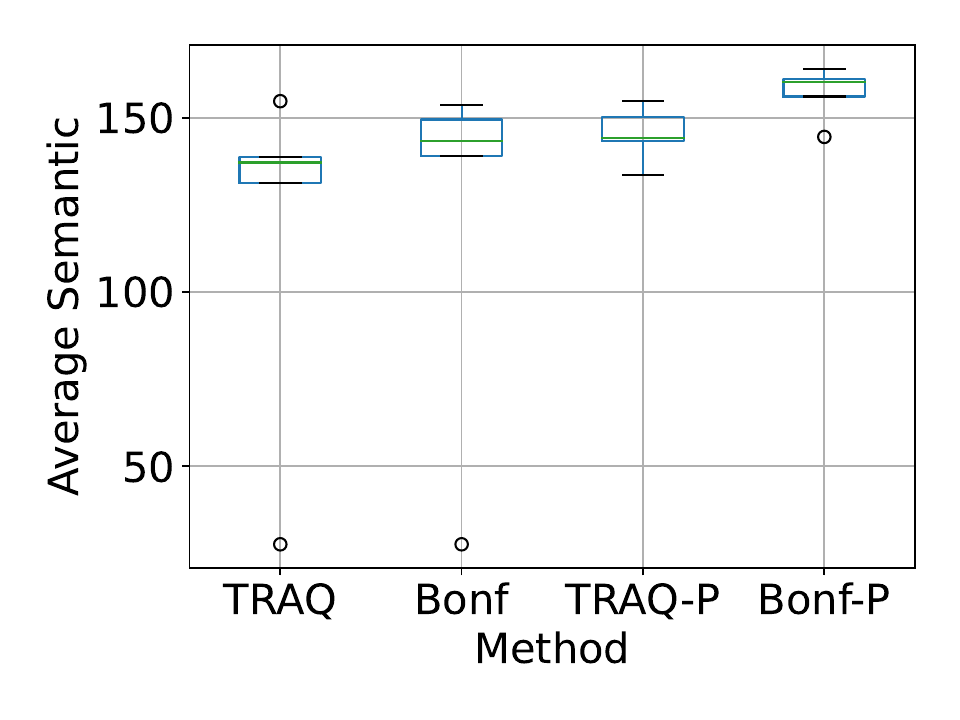}
\caption{TriviaQA} \label{fig:eff_e}
\end{subfigure}\hspace*{\fill}
\begin{subfigure}{0.32\textwidth}
\includegraphics[width=\linewidth]{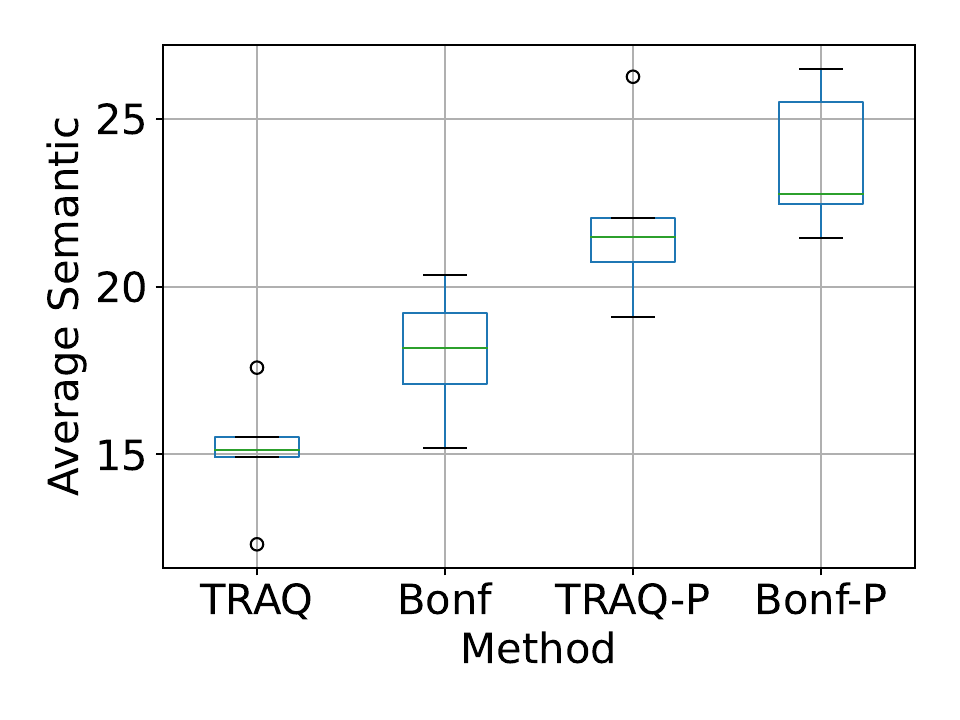}
\caption{SQuAD-1} \label{fig:eff_f}
\end{subfigure}

\caption{Average prediction set sizes on all datasets using GPT-3.5 (first row) and Llama-2 (second row).} \label{fig:additional_efficiency}
\end{figure}

\begin{figure}[H] 
\begin{subfigure}{0.45\textwidth}
\includegraphics[width=\linewidth]{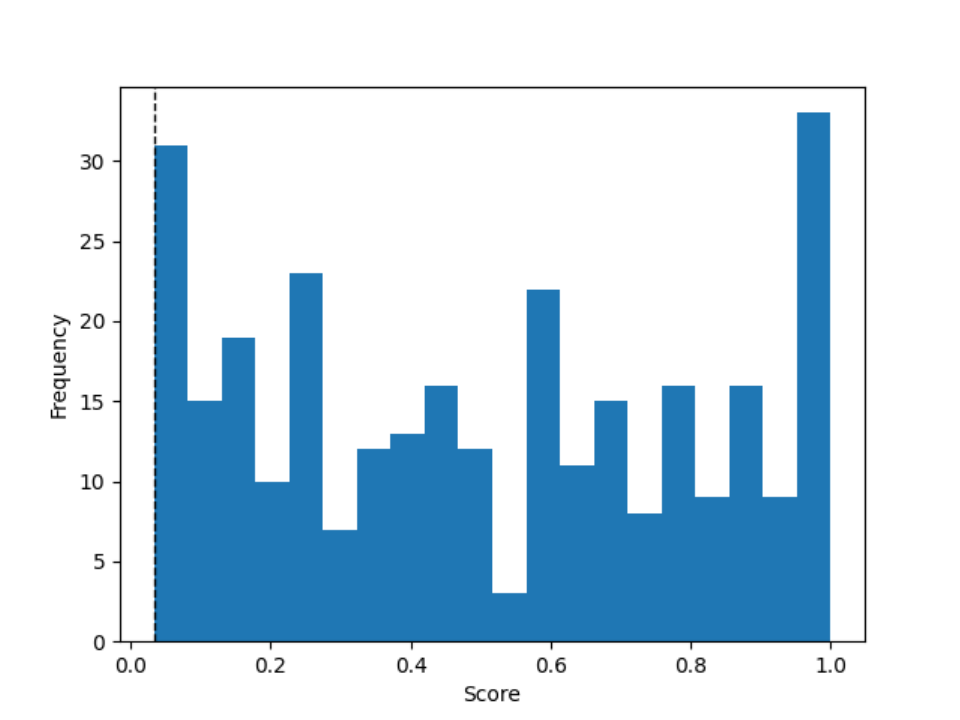}
\caption{True Scores on TriviaQA using Llama-2} \label{fig:true_a}
\end{subfigure}\hspace*{\fill}
\begin{subfigure}{0.45\textwidth}
\includegraphics[width=\linewidth]{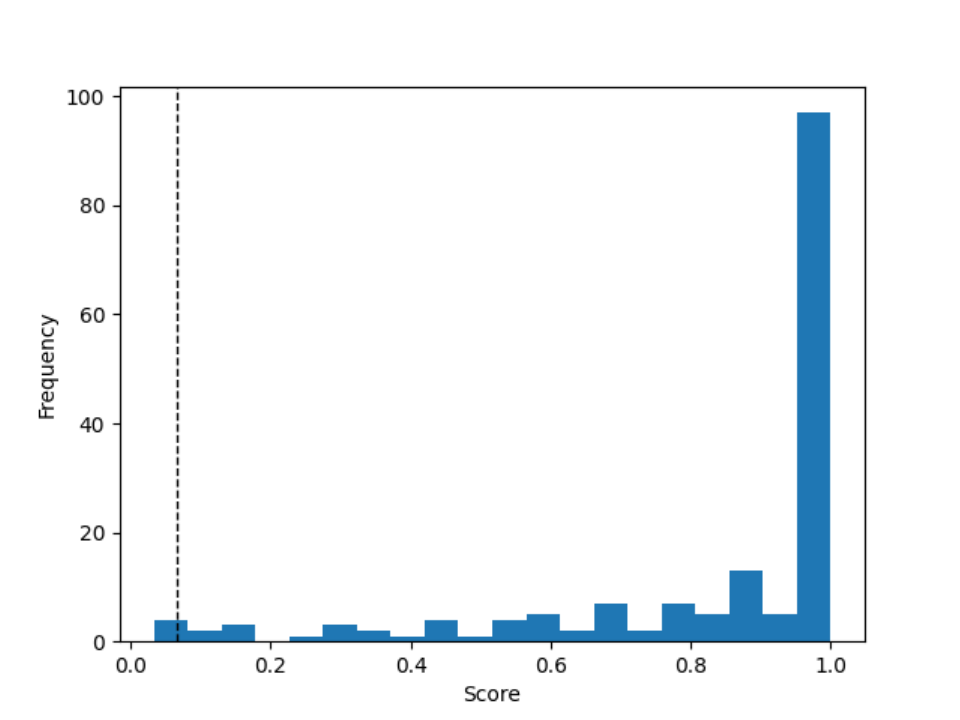}
\caption{True Scores on Natural Question using Llama-2} \label{fig:true_b}
\end{subfigure}
\caption{True scores collected on TriviaQA and Natural Question using Llama-2.} \label{fig:true_scores}
\end{figure}


\begin{table}
\centering
\begin{tabular}{llllll}
\toprule
 Task & Cov(\%)  & TRAQ & Bonf & TRAQ-P & Bonf-P \\
\midrule
\multirow[t]{5}{*}{NQ} & 50 & $\mathbf{4.8_{0.7}}$ & $5.0_{0.7}$ & $6.5_{0.9}$ & $6.6_{0.9}$ \\
 & 60 & $\mathbf{6.1_{1.0}}$ & $\mathbf{6.1_{1.0}}$ & $8.3_{1.2}$ & $8.5_{0.9}$ \\
 & 70 & $8.0_{0.9}$ & $\mathbf{7.9_{1.0}}$ & $10.6_{1.2}$ & $10.7_{1.2}$ \\
 & 80 & $\mathbf{10.6_{1.1}}$ & $10.7_{1.2}$ & $13.5_{1.8}$ & $14.7_{1.3}$ \\
 & 90 & $\mathbf{14.2_{1.9}}$ & $15.6_{1.7}$ & $21.5_{1.6}$ & $25.0_{6.5}$ \\
\cline{1-6}
\multirow[t]{5}{*}{Trivia} & 50 & $\mathbf{4.3_{0.5}}$ & $4.5_{1.2}$ & $5.7_{1.4}$ & $6.5_{1.2}$ \\
 & 60 & $\mathbf{5.8_{1.1}}$ & $6.8_{1.2}$ & $7.7_{1.4}$ & $9.2_{2.2}$ \\
 & 70 & $\mathbf{8.6_{1.6}}$ & $10.2_{1.9}$ & $13.4_{2.0}$ & $18.5_{6.2}$ \\
 & 80 & $\mathbf{15.1_{2.1}}$ & $19.1_{6.2}$ & $29.3_{2.6}$ & $71.3_{60.7}$ \\
 & 90 & $\mathbf{117.9_{51.2}}$ & $122.6_{53.4}$ & $145.2_{8.0}$ & $157.2_{7.7}$ \\
\cline{1-6}
\multirow[t]{5}{*}{SQuAD1} & 50 & $\mathbf{4.5_{0.4}}$ & $5.1_{0.4}$ & $5.2_{0.6}$ & $6.4_{0.5}$ \\
 & 60 & $\mathbf{5.7_{0.4}}$ & $6.5_{0.6}$ & $6.9_{0.8}$ & $7.7_{0.7}$ \\
 & 70 & $\mathbf{7.6_{0.6}}$ & $8.1_{0.9}$ & $8.6_{0.6}$ & $10.2_{0.6}$ \\
 & 80 & $\mathbf{9.5_{0.7}}$ & $11.4_{1.1}$ & $11.8_{1.2}$ & $14.4_{2.0}$ \\
 & 90 & $\mathbf{15.1_{1.9}}$ & $18.0_{2.0}$ & $21.9_{2.7}$ & $23.7_{2.2}$ \\

\cline{1-6}
\bottomrule
\end{tabular}

\caption{Average semantic counts using Llama-2.}
\label{tab:all_semantic_llama}
\end{table}

\subsection{Additional Qualitative Results}
\label{sec:qual}
\subsubsection{All Covered}
As shown in the example below, when the first retrieved passage is sufficiently informative, the LLM can probably generate correct responses for the question. In this case, TRAQ and Bonf can also include semantically correct responses in the aggregated sets. Again, TRAQ included less semantic meanings than Bonf did.






\begin{lstlisting}
Query: who plays zack and cody in the suite life

True answer: ['Dylan and Cole Sprouse']

Standard: {'Dylan and Cole Sprouse', 'Dylan and Cole Sprouse.'}

TRAQ: {'Dylan and Cole Sprouse', 'Dylan Sprouse', 'Phill Lewis'}

Bonf: {'Dylan and Cole Sprouse', 'Cole Sprouse',  'Dylan Sprouse'}
\end{lstlisting}

\subsection{Miscovered}
If the first retrieved passage lacks information, the standard RAG pipeline may struggle to provide the correct answer. However, in such scenarios, TRAQ and Bonf can construct prediction sets that contain the correct response with high probability, with TRAQ constructing smaller prediction sets.

\begin{lstlisting}
Query: who sang i love rock and roll original

True Answer: ['Alan Merrill']

Standard: {'Joan Jett'}

TRAQ: {'Joan Jett', 'Elvis Presley', 'Lou Reed',  'Joan Jett \& the Blackhearts',  'Alan Merrill', 'Chuck Berry', 'Donna Summer', 'Kevin Johnson', 'Joan Jett and The Arrows'}

Bonf: {'Joan Jett', 'Elvis Presley', 'The Velvet Underground', 'Lou Reed', 'Joan Jett & the Blackhearts', 'Alan Merrill',  'Chuck Berry', 'Donna Summer', 'Bobby Vee', 'Buddy Holly', 'Kevin Johnson', 'Mac Davis', 'The original version of "I Love Rock and Roll" was sung by The Arrows.', 'The Runaways', 'The answer to the question is not provided in the given context.', 'The Runaways sang the original version of "I Love Rock and Roll".', 'Joan Jett and The Arrows'}    
\end{lstlisting}

\section{Implementation Details}
\subsection{Llama-2 Fine-tune Hyperparameters}
\label{sec:llama_ft}
We use 4-bit QLoRA~\cite{dettmers2023qlora} to fine-tune the Llama-2~\cite{touvron2023llama2} models on Natural Question, TriviaQA, and SQuAD-1 datasets separately. The hyperparameters used for QLoRA are listed in Table~\ref{tab: qlora}; and the fine-tuning parameters are listed in Table~\ref{tab:ft}.

\begin{table}[H]
\centering
\begin{tabular}{|ll|ll|} \toprule
Name       & Value  & Name     & Value   \\ \midrule
r       & 64  & alpha     & 16   \\
dropout & 0.1 & precision & 4bit \\ \bottomrule
\end{tabular}
\caption{QLoRA hyperparameters.}
\label{tab: qlora}
\end{table}

\begin{table}[H]
\centering
\begin{tabular}{|ll|ll|} \toprule
Name       & Value  & Name     & Value   \\ \midrule
batch\_size   & 16    & learning rate & 2e-4     \\
weight\_decay & 0.001 & lr scheduler  & constant \\
warmup ratio  & 0.03  & epoch         & 3   \\ \bottomrule 
\end{tabular}
\caption{Fine-tuning hyperparameters.}
\label{tab:ft}
\end{table} 

\subsection{Fine-tune Dense Passage Retriever (DPR) on the Biomedical Dataset (BioASQ)}
\label{sec:ft_bio}
We collect our dataset for DPR fine-tuning by using the collection of all the passages mentioned in BioASQ as our knowledge corpus, resulting in \textbf{56,795} passages. Following the method in ~\cite{karpukhin-etal-2020-dense}, we create negative contexts for each sample in BioASQ by first retrieving the \textbf{top-20} passages; and labeling contexts that did not contain the golden answers as the \textbf{negative passages}. We then divide the original BioASQ dataset into training, validation, and testing sets, with 3,775, 471, and 469 data points, respectively.

We fine-tune the DPR model~\cite{karpukhin-etal-2020-dense} using the \textit{Haystack} framework~\cite{QuickSta63:online}, adjusting key hyperparameters to \textbf{epochs=5} and \textbf{batch size=16}. Other hyperparameters are left at their default values. To evaluate the performance of the fine-tuned DPR, we use \textit{hit rate}, which is the rate of relevant passages included in the top k retrieved passages. With $k$ set to \textbf{20}, the fine-tuned DPR achieves hit rates of \textbf{77.2\%} on the training set, \textbf{72.8\%} on the validation set, and \textbf{75.7\%} on the testing set.

\subsection{Different Prompts}
\label{sec: prompts}

\begin{tcolorbox}[enhanced,title=Zero-shot Prompt,
fonttitle=\bfseries,coltitle=green!25!black,
attach boxed title to top center=
{yshift=-2mm,yshifttext=-1mm},
boxed title style={colframe=green!75!black,
colback=yellow!50!green}]

Answer the following question based on the given context; Answer the question shortly.\\
\\
Question: \{question\} \\
Context: \{context\} \\
Answer:
\label{zero}
\end{tcolorbox}

\begin{tcolorbox}[enhanced,title=Few-shot Prompt,
fonttitle=\bfseries,coltitle=green!25!black,
attach boxed title to top center=
{yshift=-2mm,yshifttext=-1mm},
boxed title style={colframe=green!75!black,
colback=yellow!50!green}]

Answer the following question based on the given context; Answer the question shortly.\\
\\
Question: \{question 1\} \\
Context: \{context 1\} \\
Answer: \{answer 1\} \\
\\
Question: \{question 2\} \\
Context: \{context 2\} \\
Answer: \{answer 2\} \\
\\
Question: \{question\} \\
Context: \{context\} \\
Answer: 
\label{fewshot}
\end{tcolorbox}

\begin{lstlisting}
['The Great Lakes do not meet the ocean.',
'The Great Lakes meet the ocean at the Saint Lawrence River.',
'The Great Lakes meet the ocean through the Saint Lawrence River.',
'The Great Lakes do not meet the ocean.',
'The Great Lakes do not directly meet the ocean.',]

['There is no specific answer given in the provided context about where the Great Lakes meet the ocean.',
'Atlantic Ocean',
'Saint Lawrence River',
'The Great Lakes do not meet the ocean.',
'The Great Lakes do not meet the ocean. They are primarily connected to the Atlantic Ocean through the Saint Lawrence River.',
'The Great Lakes do not meet the ocean. They connect to the Atlantic Ocean through the Saint Lawrence River.',
'The Great Lakes meet the ocean through the Saint Lawrence River.',
'They do not meet the ocean.']
\end{lstlisting}

\begin{figure}[htbp!] 
\begin{subfigure}{0.32\textwidth}
\includegraphics[width=\linewidth]{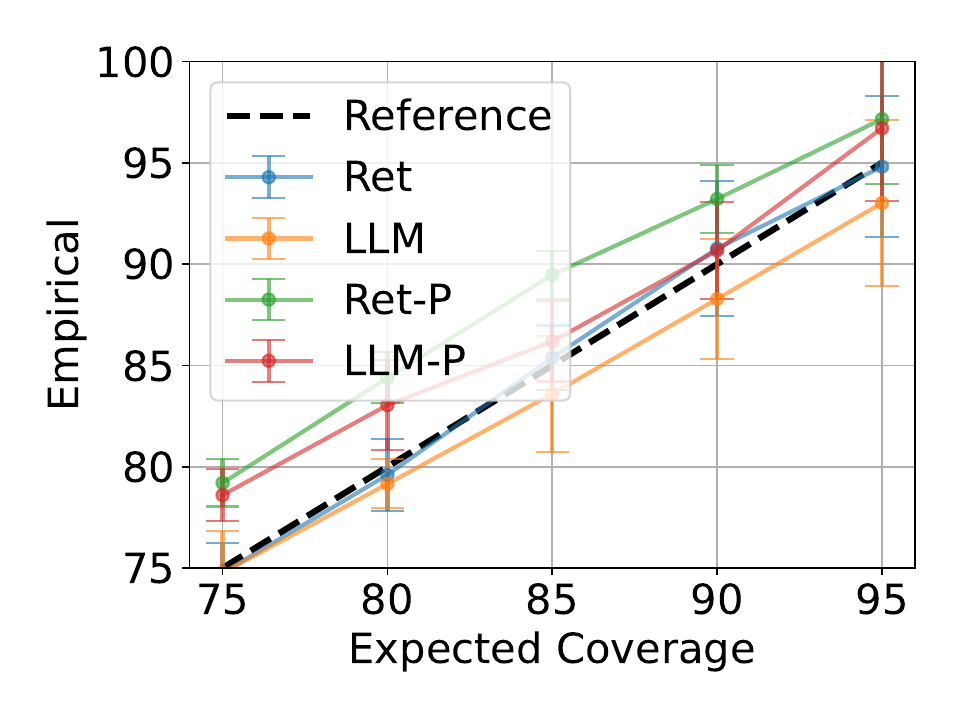}
\caption{End-to-end guarantee considering only the most relevant passage} \label{fig:semantic_a}
\end{subfigure}\hspace*{\fill}
\begin{subfigure}{0.32\textwidth}
\includegraphics[width=\linewidth]{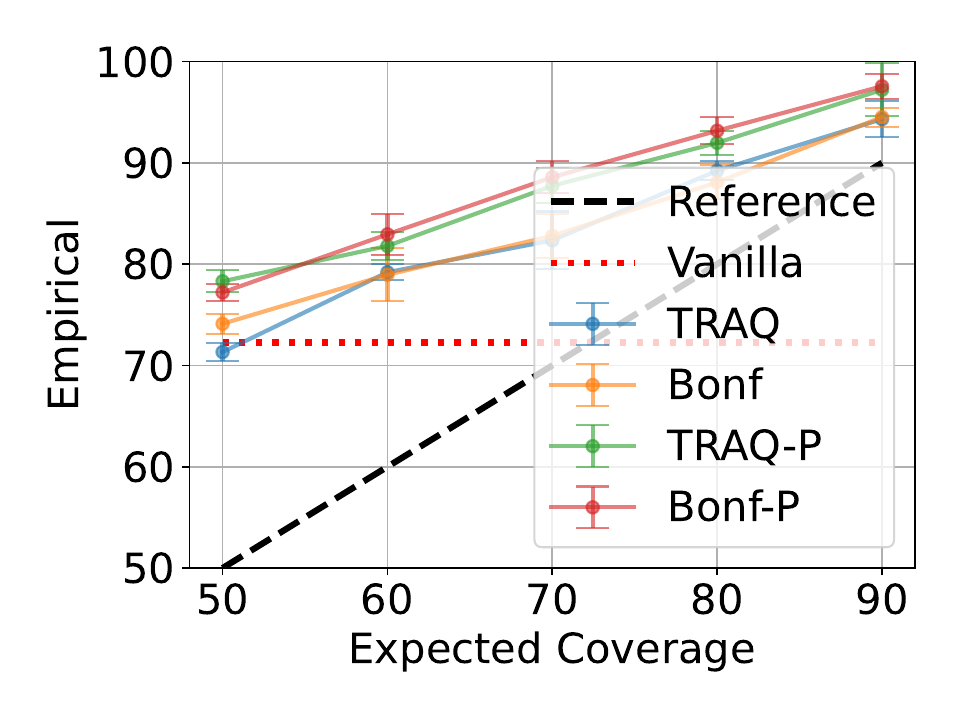}
\caption{Overall coverage guarantee considering all passages} \label{fig:semantic_b}
\end{subfigure}\hspace*{\fill}
\begin{subfigure}{0.32\textwidth}
\includegraphics[width=\linewidth]{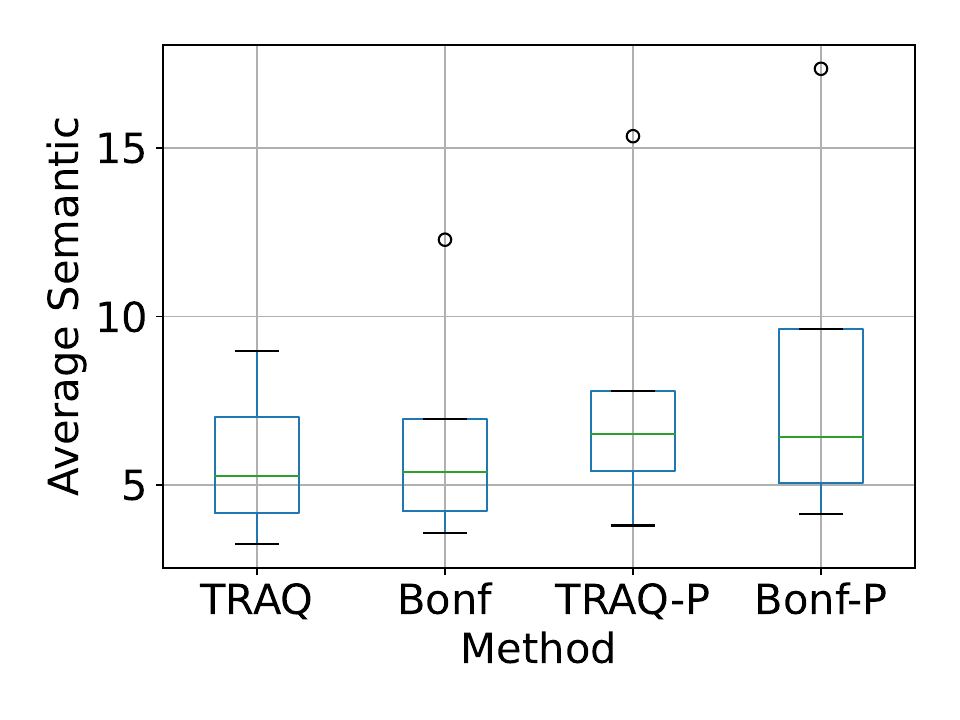}
\caption{Average prediction set sizes\\ \quad} \label{fig:semantic_c}
\end{subfigure}

\caption{Results using a few-shot prompt on Natural Question using GPT-3.5.} \label{fig:additional_semantic}
\end{figure}

\subsection{Main Packages}
\begin{table}[H]
\centering
\begin{tabular}{|ll|ll|}
\toprule
Package & Version & Package  & Version \\
\midrule
transformer~\cite{wolf2020huggingfaces} & 4.32.1 & nltk~\cite{bird2009natural}  & 3.8.1 \\
spacy~\cite{spacy2}       & 3.6.1  & torch~\cite{NEURIPS2019_9015} & 2.0.1 \\
rouge-score~\cite{lin-2004-rouge} & 0.1.2  & scikit-optimize~\cite{head_2021_5565057}      &  0.9.0     \\ \bottomrule
\end{tabular}
\end{table}

\subsection{Artifact License and Terms}
Our implementation is based on \textit{haystack}, \textit{transformers} and \textit{DPR}~\cite{karpukhin-etal-2020-dense}. The first two are licensed under \textbf{Apache License 2.0}, the third is licensed under \textbf{Attribution-NonCommercial 4.0 International}. We used four datasets, namely BioASQ, Natural Question, TriviaQA, and SQuAD-1. BioASQ is licensed under the \textbf{CC BY 2.5 license}, Natural Question is under \textbf{CC BY-SA 3.0 license}, TriviaQA is under the \textbf{Apache License 2.0}, and SQuAD-1 is under the \textbf{CC BY-SA 4.0 license}. We used two LLMs, namely \textit{GPT-3.5} and \textit{Llama-2}. GPT-3.5 usage is subject to OpenAI's \textit{Sharing \& Publication Policy} and \textit{Usage Policies}. Llama-2 is licensed under the Llama-2 Community License~\cite{Llamaacc89:online}. Our implementation and the data collected are under the \textbf{MIT License}.

Our use of the existing artifacts is consistent with their original intended use. Our created artifacts intend to verify our proposed method in our submission, which is consistent with original access conditions. 

\section{Removing Assumption~\ref{as:chat}}
\label{sec: idk}

In certain scenarios, even if the most pertinent passage is identified and given to the language understanding model (LLM), the LLM is still unable to answer the question with accurate answers. This could be due to a variety of reasons, such as the passage not being sufficiently specific or the LLM not being able to extract enough information from the passage. If the LLM is unable to generate correct responses even when the most pertinent passage is provided, our guarantee regarding the LLM and end-to-end pipeline may not hold. This problem can be alleviated by annotating better passages or using more powerful LLMs.

To address the issue with existing datasets and language models, we offer the guarantee of claiming \textit{I do not know} if the language model is unable to generate a correct response to a question and its most relevant passage. We collect questions and their most relevant passages, and also labels that indicate whether GPT-3.5 could generate a correct response. We then divided the dataset into training, validation, and testing sets, with 6,899, 1,725, and 1,725 data points, respectively. We label \textbf{True} if the language model could generate a correct response and \textbf{False} otherwise. We then train a BERT-based text classifier, which takes in the questions and their most relevant passages, and predicts whether GPT-3.5 can generate a correct response. We name the trained classifier \textit{Conf-Classifier}. Surprisingly, the Conf-Classifier achieves an accuracy of 95\% on the testing set. To provide guarantees, we apply conformal prediction to the outputs of the Conf-Classifier. We include \textit{I do not know} in the LLM set if the constructed prediction set contained \textbf{False}. 

To construct the calibration set, we collect estimated confidences on \textbf{not being able to answer the question} on input questions in which the LLM fails to generate the correct response. We denote these estimated confidences as $\{s_1, \ldots, s_N\}$. Given a user-specified coverage level, we then use conformal prediction to identify the $\frac{\lceil(N+1)(1-\alpha)\rceil}{N}$ quantile as the threshold $\tau_{\text{Ign}}$ to construct the set. Given an input question $q$, we then include \textit{I do not know} in the aggregation set $C_{Agg}(q)$ if the estimated confidence $n_{K+1}$ is above $\tau_{\text{Ign}}$. Then we can guarantee the following:
\begin{lemma}
Given an input question $q$ that the LLM cannot correctly answer and a user-specified error level $\alpha$, if $\alpha_{\text{Ign}}$ is used to decide whether to include \textit{I do not know}, the aggregation set satisfies the following property:
\begin{equation*}
\Pr_{q \sim \mathcal{D}}[\text{I do not know} \in C_{\text{Agg}}(q)]
\end{equation*}
\end{lemma}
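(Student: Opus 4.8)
The plan is to recognize this as a direct instance of the conformal prediction coverage guarantee (Theorem~\ref{th:cp}), applied to a binary calibration problem and specialized to the subpopulation of questions the LLM cannot answer. The only conceptual difference relative to Lemmas~\ref{co:ret} and~\ref{co:chat} is that both the calibration set and the test point are drawn from $\mathcal{D}$ \emph{conditioned} on the event that the LLM fails, so I would first make this conditioning explicit and then reduce everything to the ordinary conformal guarantee.

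First I would define $\mathcal{D}_{\text{fail}}$ to be the law of a question $q \sim \mathcal{D}$ conditioned on the event that the LLM cannot produce a semantically correct response to $q$ given its most relevant passage. By Assumption~\ref{as:iid} the original samples are i.i.d.; applying the same conditioning event to every sample preserves this, so the calibration confidences $\{s_1,\dots,s_N\}$---collected precisely on questions the LLM fails on---are i.i.d. draws from $\mathcal{D}_{\text{fail}}$ and are exchangeable with the confidence computed for a fresh failing test question. Next I would cast the Conf-Classifier output as a nonconformity measure for the single label \textbf{False} (``cannot answer''), taking $s(q) = -c(q)$ where $c(q)$ is the estimated confidence that the LLM cannot answer, matching the paper's convention of using negated scores as NCMs in \eqref{eq:cp}. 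Under this identification, $\tau_{\text{Ign}}$ is exactly the conformal quantile from \eqref{eq:cp} with error budget $\alpha_{\text{Ign}}$.

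The crux is then a one-line event equivalence: by construction $\textit{I do not know} \in C_{\text{Agg}}(q)$ holds if and only if the conformal prediction set for the classifier contains \textbf{False}, i.e. if and only if $s(q) \le \tau_{\text{Ign}}$. Applying Theorem~\ref{th:cp} over $\mathcal{D}_{\text{fail}}$ with this nonconformity measure and error level $\alpha_{\text{Ign}}$ immediately yields $\Pr_{q \sim \mathcal{D}_{\text{fail}}}[\textbf{False} \in C(q)] \ge 1-\alpha_{\text{Ign}}$, and the equivalence converts this into the claimed bound $\Pr_{q \sim \mathcal{D}_{\text{fail}}}[\textit{I do not know} \in C_{\text{Agg}}(q)] \ge 1-\alpha_{\text{Ign}}$.

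I expect the main obstacle to be justifying the i.i.d./exchangeability claim \emph{within} the conditioned subpopulation: one must argue that conditioning on a fixed measurable event (LLM failure) applied identically to each draw keeps the collection i.i.d., and that the test question is drawn from the identical conditional law rather than from $\mathcal{D}$ itself, so that the calibration quantile transfers to the test point. A secondary point worth recording is that the guarantee is deliberately one-sided---it asserts nothing for questions the LLM \emph{can} answer---and that, exactly as for Lemma~\ref{co:chat}, validity is independent of the accuracy of the Conf-Classifier, since a poorly calibrated classifier only inflates $\tau_{\text{Ign}}$ (hence the frequency of \textit{I do not know}) rather than invalidating coverage.
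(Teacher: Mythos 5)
Your proposal is correct and takes essentially the same route the paper intends: the paper's own ``proof'' is the single sentence that the result ``follows straightforwardly from Theorem~\ref{as:iid}'' (a broken cross-reference, evidently meant to point at the conformal guarantee of Theorem~\ref{th:cp}), and your write-up supplies exactly the details that sentence elides --- the restriction of both calibration and test draws to the subpopulation of questions the LLM fails on, the negated Conf-Classifier confidence as the nonconformity measure so that $\tau_{\text{Ign}}$ is the conformal quantile at level $\alpha_{\text{Ign}}$, and the event equivalence between $s(q)\le\tau_{\text{Ign}}$ and \emph{I do not know} $\in C_{\text{Agg}}(q)$. You also correctly inferred the intended conclusion $\geq 1-\alpha_{\text{Ign}}$, which is missing from the lemma as printed, and your closing remarks (the guarantee is one-sided and does not depend on the Conf-Classifier being accurate) are consistent with how the paper treats heuristic scores elsewhere.
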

This result follows straightforwardly from Theorem~\ref{as:iid}.

We validate our guarantee using five distinct random seeds and five different coverage levels. The results are shown in Figure~\ref{fig:additional_idk}. As the figure illustrates, our method can include \textit{I do not know} at various required coverage levels. By combining this with our guarantee on the LLM, we can guarantee all questions. 

\begin{theorem}
Given a user-specified error level $\alpha$, if aggregation is constructed with error level $\alpha$, the resulting prediction sets contain true answers (i.e. semantically correct responses if the input question is answerable; or \textit{I do not know} if the input question is unanswerable) with probability at least $1-\alpha$, i.e. 
\begin{equation*}
\Pr_{q \sim \mathcal{D}}[\text{True answer} \in C_{\text{Agg}}(q)] \geq 1-\alpha.
\end{equation*}
\label{th:all}
\end{theorem}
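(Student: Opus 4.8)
The plan is to partition the input distribution according to whether the question is answerable and then combine the two regime-specific guarantees via the law of total probability rather than a union bound. Let $A$ denote the event that a sampled question $q$ is \emph{answerable}, i.e., that the LLM can produce a semantically correct response $r^{*}$ given its most relevant passage (so that Assumption~\ref{as:chat} holds for $q$), and let $A^{C}$ denote its complement. On $A$ the ``true answer'' is $r^{*}$, whereas on $A^{C}$ the ``true answer'' is the token \textit{I do not know}. Since $A$ and $A^{C}$ form a partition, I would first write the decomposition
\begin{align*}
\Pr_{q \sim \mathcal{D}}[\text{True answer} \in C_{\text{Agg}}(q)]
&= \Pr(A)\,\Pr[\, r^{*} \in C_{\text{Agg}}(q) \mid A \,] \\
&\quad + \Pr(A^{C})\,\Pr[\, \text{\textit{I do not know}} \in C_{\text{Agg}}(q) \mid A^{C} \,].
\end{align*}

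Next I would bound each conditional coverage term at level $1-\alpha$. For the answerable regime, restricted to the subpopulation $A$ Assumption~\ref{as:chat} holds by definition, and by Assumption~\ref{as:iid} the calibration examples used to construct $C_{\text{Ret}}$ and $C_{\text{LLM}}$ are i.i.d.\ draws from the conditional distribution $\mathcal{D}\mid A$; hence Theorem~\ref{th:e2e}, applied to this conditional distribution with error level $\alpha$, yields $\Pr[\, r^{*} \in C_{\text{Agg}}(q) \mid A \,] \geq 1-\alpha$. For the unanswerable regime, the \textit{I do not know} inclusion rule is calibrated by conformal prediction on the Conf-Classifier confidences collected precisely over questions the LLM fails to answer (i.e.\ over $\mathcal{D}\mid A^{C}$) with budget $\alpha_{\text{Ign}}=\alpha$; the \textit{I do not know} lemma stated just above then gives $\Pr[\, \text{\textit{I do not know}} \in C_{\text{Agg}}(q) \mid A^{C} \,] \geq 1-\alpha$. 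Note that since $C_{\text{Agg}}$ is the union of the retriever/LLM content and the possibly-added \textit{I do not know} token, enlarging the set never reduces coverage, so each guarantee survives the aggregation intact.

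Substituting both bounds into the decomposition produces a convex combination of quantities that are each at least $1-\alpha$, so
\begin{equation*}
\Pr_{q \sim \mathcal{D}}[\text{True answer} \in C_{\text{Agg}}(q)] \geq \Pr(A)(1-\alpha) + \Pr(A^{C})(1-\alpha) = 1-\alpha,
\end{equation*}
using $\Pr(A)+\Pr(A^{C})=1$. The feature that makes this cleaner than the Bonferroni aggregation of Theorem~\ref{th:e2e} is that answerability induces a genuine \emph{partition} of the input space, so the two sub-guarantees combine by averaging rather than by a union bound; consequently no error budget need be split, and each sub-procedure may spend the full level $\alpha$. The step I expect to be the main obstacle is rigorously justifying that each conformal procedure is calibrated with respect to the correct conditional distribution ($\mathcal{D}\mid A$ versus $\mathcal{D}\mid A^{C}$): this requires that the answerability label of a point be assigned in a way compatible with the calibration/test split so that the conditional i.i.d.\ (exchangeability) structure demanded by Assumption~\ref{as:iid} is preserved, and that the per-regime coverage statements be read as conditional probabilities rather than marginal ones.
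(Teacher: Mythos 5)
Your proposal follows essentially the same route as the paper's own proof: partition on answerability, bound each conditional coverage term by $1-\alpha$ using Theorem~\ref{th:e2e} and the \textit{I do not know} lemma respectively, and combine by the law of total probability. Your version is in fact slightly more careful --- you explicitly flag the need for each conformal procedure to be calibrated against the correct conditional distribution, a point the paper glosses over, and the paper's displayed inequality even has its direction typo'd as $\leq$ where $\geq$ is intended.
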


\begin{proof}
Suppose we construct the aggregation set and ignorance set both with coverage level $1-alpha$; then we have the following inequalities:
\begin{align*}
&\Pr_{q \sim \mathcal{D}}[\text{True answer in the resulting set}] \\
&= \Pr_{q \sim \mathcal{D}}[\text{Correct response} \in C_{\text{Agg}}(q)] \times \Pr[\text{q is answerable}] \\ &+ \Pr_{q \sim \mathcal{D}}[\text{I do not know} \in C_{\text{Agg}}(q)] \times \Pr[\text{q is unanswerable}]\\
&\leq (1-\alpha) \times \Pr[\text{q is answerable}] + (1-\alpha) \times \Pr[\text{q is unanswerable}] \\
&= 1-\alpha.
\end{align*}
\end{proof}

\begin{figure}[t!] 
\begin{subfigure}[t]{0.32\textwidth}
\includegraphics[width=\linewidth]{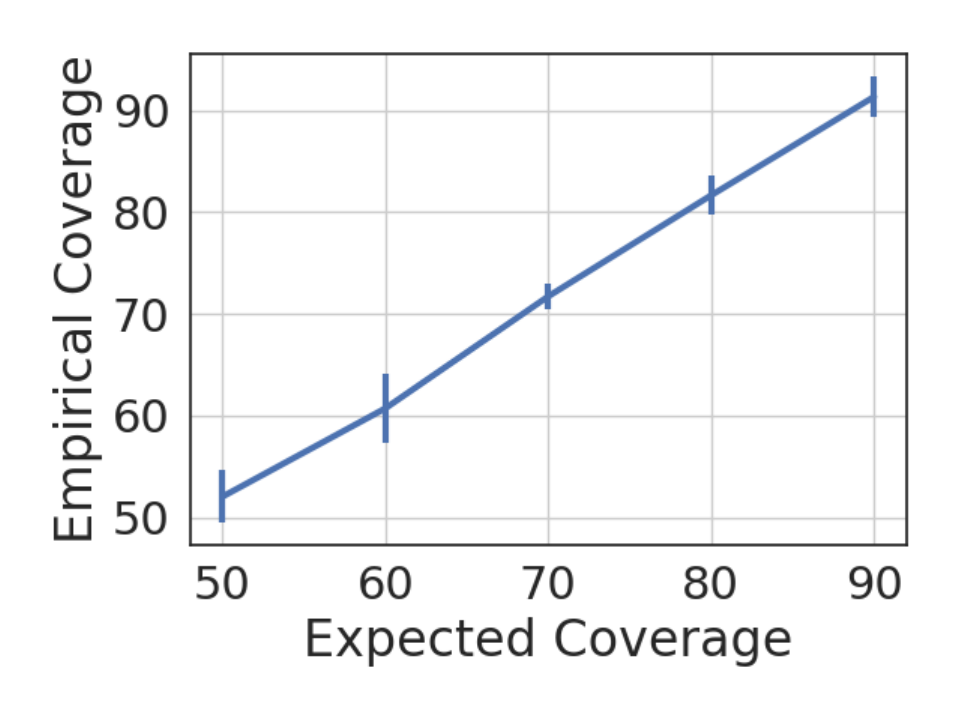}
\caption{Coverage Rate on \textit{I do not know}.} \label{fig:idk_a}
\end{subfigure}\hspace*{\fill}
\begin{subfigure}[t]{0.32\textwidth}
\includegraphics[width=\linewidth]{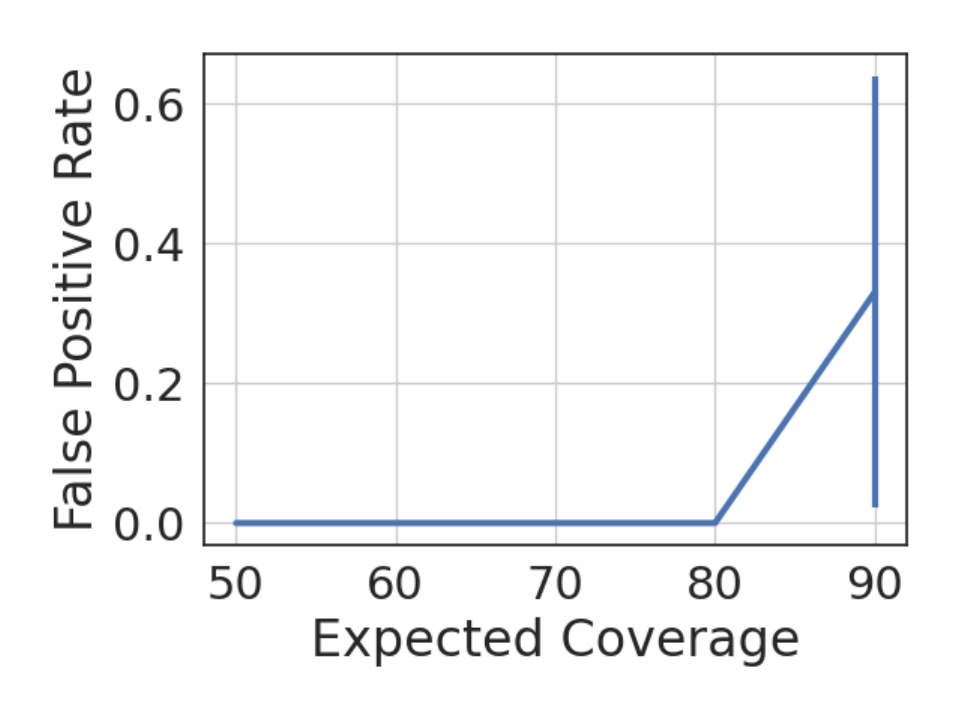}
\caption{False Positive Rates (claiming \textit{I do not know but actually being able to answer}.} \label{fig:idk_b}
\end{subfigure}\hspace*{\fill}
\begin{subfigure}[t]{0.32\textwidth}
\includegraphics[width=\linewidth]{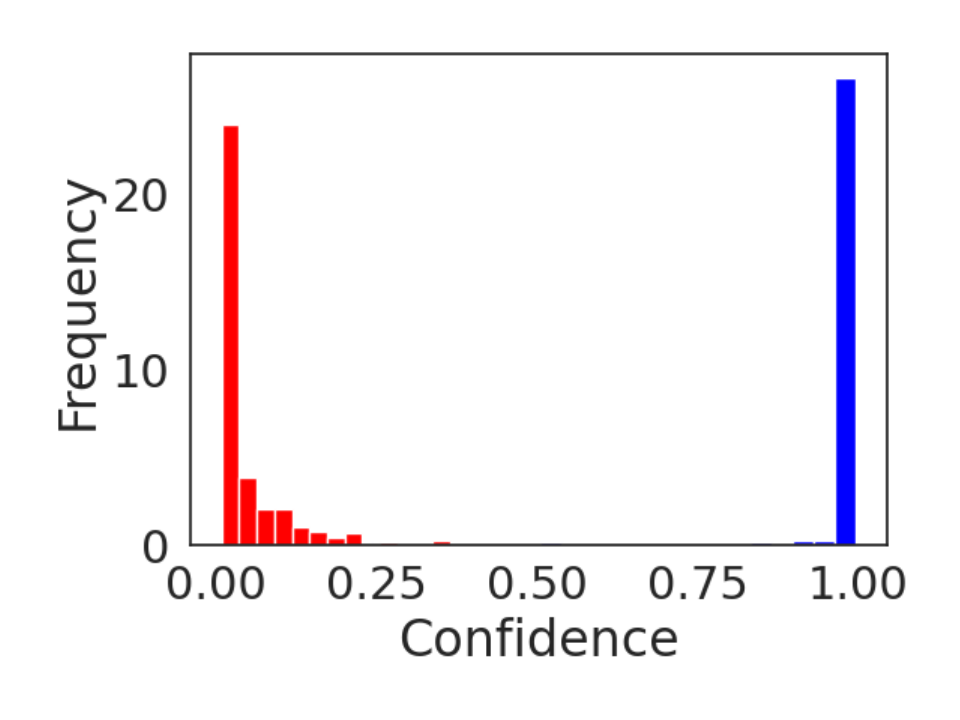}
\caption{The distribution of confidence on claiming \textit{I do not know} using the training classifier.} \label{fig:idk_c}
\end{subfigure}

\caption{Results on identifying whether a given prompt is answerable or not.} \label{fig:additional_idk}
\end{figure}

\section{AI Assistant Usage}
We used \textit{Copilot} to assist our coding.

\end{document}